\documentclass{article} 
\usepackage{iclr2026_conference,times}

\usepackage{url}
\usepackage[utf8]{inputenc} 
\usepackage[T1]{fontenc}    
\usepackage{url}            
\usepackage{booktabs}       
\usepackage{amsmath}
\usepackage{amsfonts}       
\usepackage{nicefrac} 
\usepackage{nicefrac}       
\usepackage{microtype}      
\usepackage{xcolor}         
\usepackage{multirow}   
\usepackage{ying}
\usepackage{enumitem}
\usepackage{xspace}
\usepackage{pifont}
\usepackage{mathtools}
\mathtoolsset{showonlyrefs}
\usepackage{graphicx}
\usepackage[font=small]{caption}
\usepackage[
  colorlinks,
  citecolor=blue,
  linkcolor=red,
  anchorcolor=red,
  urlcolor=blue
]{hyperref} 
\usepackage{algorithmic}

\def\@#1\@{\begin{align}#1\end{align}}
\def\$#1\${\begin{align*}#1\end{align*}}
\def\cond {{\textnormal{cond}}}
\def\rand {{\textnormal{rand}}}
\iclrfinalcopy

\newcommand{\name}{\textsc{ConfHit}}

\allowdisplaybreaks

\usepackage{titlesec}
\titlespacing*{\paragraph}   
              {0pt}          
              {0.3ex}        
              {0.3em}        
              
\usepackage{color-edits}
\usepackage{subcaption}

\addauthor[Ying]{ying}{magenta}
\addauthor[Siddhartha]{siddhartha}{teal}

\title{\name: Conformal Generative Design with Oracle Free Guarantees}

\author{
Siddhartha Laghuvarapu$^1$,
Ying Jin$^{2}$\thanks{These authors jointly supervised this work.}, 
Jimeng Sun$^{1}$\footnotemark[1] \\
$^1$ Siebel School of Computing and Data Science, University of Illinois Urbana-Champaign, IL, USA \\
$^2$ Department of Statistics and Data Science, Wharton School, University of Pennysylvania, PA, USA \\
\texttt{\{sl160,jimeng\}@illinois.edu, yjinstat@wharton.upenn.edu}
}

\begin{document}

\maketitle
\vspace{-1.5em}
\begin{abstract}
\vspace{-0.75em}

The success of deep generative models in scientific discovery requires not only the ability to generate novel candidates but also reliable guarantees that these candidates indeed satisfy desired properties. 
Recent conformal-prediction methods offer a path to such guarantees, but its application to generative modeling in drug discovery is limited by budget constraints, lack of oracle access, and distribution shift. 
To address these challenges, we introduce \name, a model-agnostic framework that provides validity guarantees under these conditions. \name~formalizes two central questions: (i) Certification: whether a generated batch can be guaranteed to contain at least one hit with a user-specified confidence level, and (ii) Design: whether the generation can be refined to a compact set without weakening this guarantee. \name~leverages weighted exchangeability between historical and generated samples to eliminate the need for an experimental oracle, constructs multiple-sample density-ratio weighted conformal p-value to quantify statistical confidence in hits, and proposes a nested testing procedure to certify and refine candidate sets of multiple generated samples while maintaining statistical guarantees. Across representative generative molecule design tasks and a broad range of methods, \name~consistently delivers valid coverage guarantees at multiple confidence levels while maintaining compact certified sets, thereby establishing a principled and reliable framework for generative modeling.


%

\end{abstract}

\vspace{-1em}
\section{Introduction}
\label{sec:intro}
\vspace{-1em}

Deep generative modeling has demonstrated remarkable ability to explore high-dimensional spaces, driving advances in various applications like text generation~\citep{radford2018improving}, image synthesis~\citep{ho2022cascaded}, protein engineering~\citep{madani2020progen}, and molecular discovery~\citep{gomezbombarelli2018automatic}. 
In critical domains such as drug discovery, however, the success requires more than generation: 
While powerful generative models have been developed to accelerate early-stage discovery~\citep{madani2021progen,hoogeboom2022moldiffusion,yim2023fast}, 
their practical utility depends on whether the generated candidates indeed satisfy key biochemical properties. Since these properties can only be verified through costly wet-lab or in-vivo experiments, it is crucial to assess and guarantee, in advance, the viability of the generated samples (hits), 
leading to a central question: \vspace{-0.6em}
\begin{quote}
    \emph{Given a generative model, how to construct batches of generated samples that can, with high statistical confidence, be guaranteed to contain at least one valid hit?}
\end{quote}
\vspace{-0.6em}

Conformal prediction provides a model-agnostic framework for establishing statistical guarantees for black-box prediction models~\citep{vovk2005algorithmic}, and is recently extended to calibrate any generative model to produce sets of generated samples that contain at least one high-quality instance with high probability~\citep{quach2023conformal,Ulmer2024NonExchangeable,shahrokhi2025conformal}. 
While such guarantees can be profoundly useful, direct application of the existing methods in resource-restricted problems like drug discovery is limited by several challenges. 
(i) \emph{Certification:} With limited generation budget and no assumptions on the generative model, it is not always feasible to produce a valid hit -- therefore important to state clearly when a guarantee can be provided and when it cannot. 
On the technical side, (ii) \emph{Lack of oracle access}: Existing methods rely on an oracle that evaluates newly generated samples for existing inputs (such as by comparing to a gold-standard output). In drug discovery, this means one needs to synthesize and experimentally validate the generated samples, which is infeasible in resource-limited settings alike~\citep{kladny2024conformal}. (iii) \emph{Distribution shift}: The generated candidates may follow a distribution different from the calibration data, violating the exchangeability assumption. 

 
We introduce \name, a model-agnostic framework 
that provides reliability guarantees for generative modeling in resource-constrained settings. To tackle the budget limitation, we expand the generation problem into two connected fundamental questions:
\vspace{-0.7em}
\begin{enumerate}\setlength\itemsep{-0.3em}
    \item \emph{Certification.} Given an input $\bar{X}_{\text{test}}$, a batch $\cC_0(\bar{X}_{\text{test}})$ of generated samples and an unknown property indicator \(A(\cdot)\in\{0,1\}\), can we guarentee, at a given confidence level $1-\alpha\in (0,1)$, that the batch $\cC(\bar{X}_{\text{test}})$ contains at least a hit?
    \item \emph{Design.} In cases with strong confidence, can we design a compact candidate set $\cC(\bar{X}_{\text{test}})$ while preserving the guarantee that it contains a valid hit with probability at least $1-\alpha$? 
\end{enumerate}
\vspace{-0.7em}
Here 
\(A(\cdot)\in\{0,1\}\) is an oracle that returns~1 only when the sample satisfies a desired property.  
In molecule design, it may indicate whether a molecule $x$ improves upon a seed molecule $\bar{X}_{\text{test}}$ in terms of activity, and these questions ensure the experimental validation on $\cC(\bar{X}_{\text{test}})$ is unlikely to be wasted effort (Figure~\ref{fig:sidebyside}a).  
We remark that, in resource-abundant settings, certification and refinement can be combined: the generation budget may be enlarged until certification is achieved, after which the set is pruned to a compact certified subset, leading to the same validity guarantees by existing methods. 

\begin{figure}
  \centering 
  \begin{subfigure}[t]{0.33\textwidth}
   \vtop{%
      \vskip 0pt
      \includegraphics[width=\linewidth]{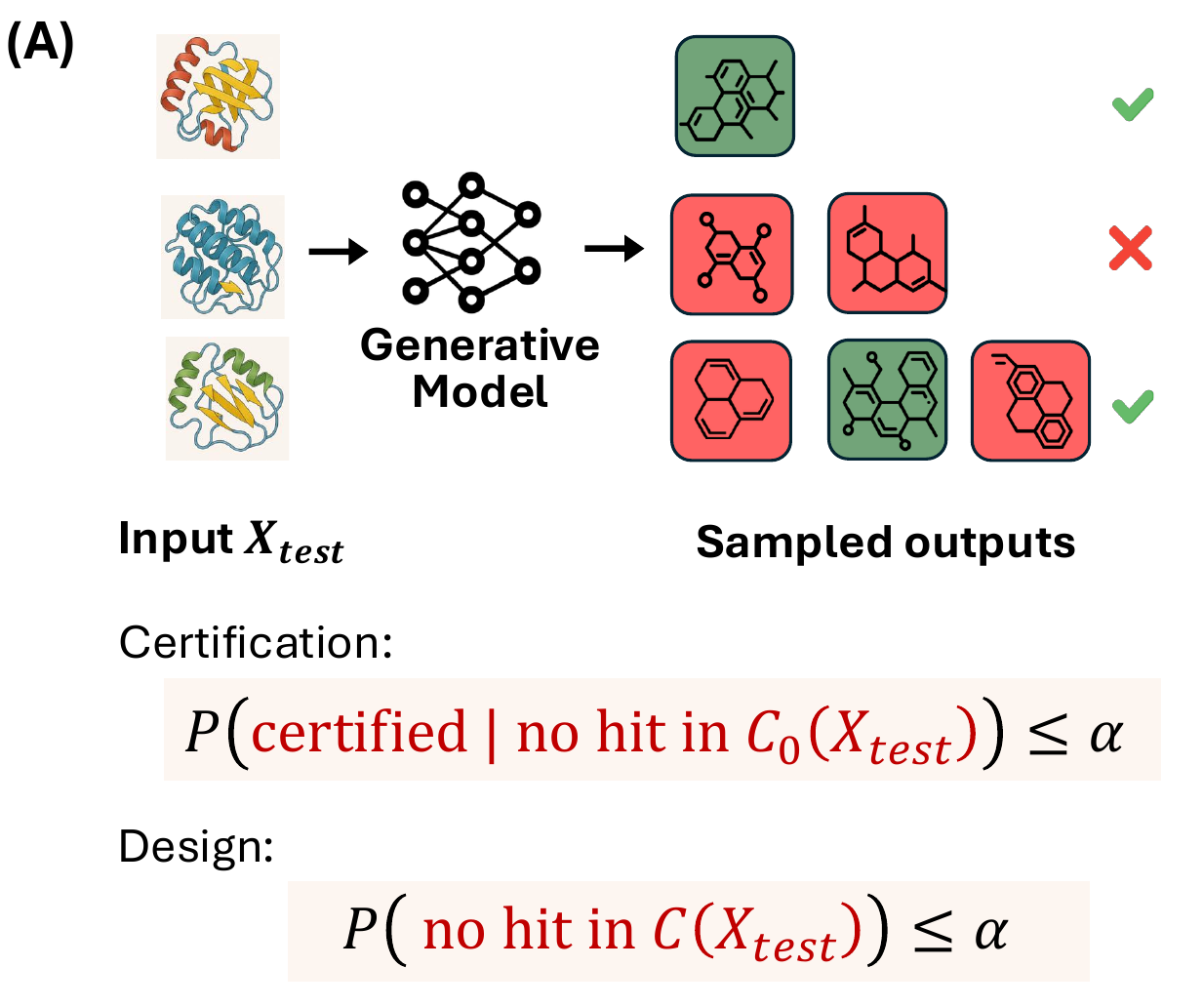}
    }
    \label{fig:depict_a}
  \end{subfigure}
  \hfill
  \begin{subfigure}[t]{0.65\textwidth}
    \vtop{%
      \vskip 0pt
      \includegraphics[width=\linewidth]{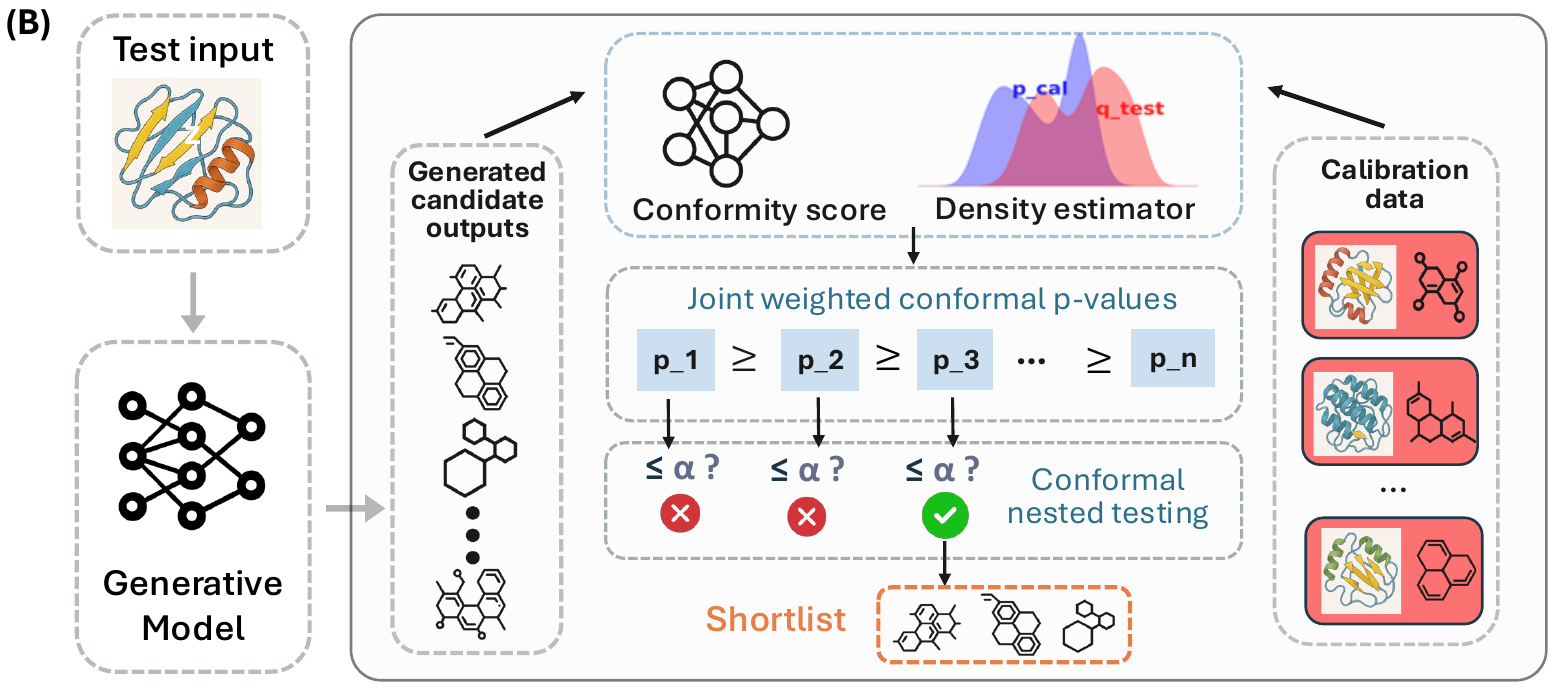}
    }
    \label{fig:depict_b}
  \end{subfigure} 

  \caption{(a) Problem setup: given an input, certify and generate a set of candidates that contains at least one ``hit'' (green) with probability at least $1-\alpha$. (b) \name~workflow. Given a nested sequence of candidate batches, we estimate the density ratio between labeled data and generated samples, compute a conformal p-value for each batch to quantify the confidence in it containing a hit, and return the smallest batch whose  p-value falls below  $\alpha$.\vspace{-2em}} 
  \label{fig:sidebyside}
\end{figure}

To eliminate oracle access and handle the distribution shift, \name~leverages the exchangeability structure between historical (labeled) and generated samples, rather than among generated samples for both existing and new model inputs as in earlier methods, and estimates their density ratio, thereby inducing \emph{weighted} exchangeability~\citep{tibshirani2019conformal,jin2023model}. 
For the certification problem, given any viability-scoring function, we construct a weighted permutation p-value  that  quantifies the evidence against the global null that none of the samples in the generated batch are viable. 
For the design problem, \name~then examines the certification confidence of a nested sequence of candidate batches, and returns the smallest batch that can be certified at a given confidence level $\alpha\in(0,1)$. 
We show that this procedure bounds the probability of returning a batch with no viable sample below $\alpha \in (0,1)$, regardless of the scoring function and the generative model.

We demonstrate the robust guarantees of \name~on two representative generative design tasks: (i) \textbf{Constrained molecule optimisation}, which seeks a new molecule that satisfies a target property while remaining similar to a given scaffold;  \textbf{ (ii) {Structure‑based drug discovery}}, which aims to
generate molecules that are active against a given protein.
%
We summarize our contributions below:
\vspace{-0.7em}
\begin{itemize}[leftmargin=0.25in,itemsep=-0.2pt]
\item We formalise the task of generative modeling  in resource-constrained settings with a conformal validity guarantee: given an input context (such as a lead molecule or protein pocket), certify and produce a set of candidates that contains at least one hit at a pre-specified confidence level $1-\alpha$.
\item We introduce a class of density-ratio-weighted, multiple-test-sample conformal p-values for the certification problem, and show their validity for certifying the existence of at least one hit in a given batch of generated samples under distribution shift. 
\item We propose a general nested testing framework that achieves the validity guarantees for the design problem. In specific, we show that given a sequence of our p-values, stopping as soon as the p-value drops below \(\alpha\) achieves finite-sample error control. 
\item We develop practical strategies for two key elements in the method: score modelling and density ratio estimation, and demonstrate the robust performance on two standard molecule design tasks: constrained molecule optimisation and structure-based drug discovery.
\end{itemize}


\par \textbf{Related Work.}
Conformal prediction (CP)~\citep{vovk2005algorithmic} offers distribution-free uncertainty quantification for any black-box model by constructing prediction sets from exchangeable calibration data~\citep{papadopoulos2002inductive}.  
Recent advances extend CP  to generative tasks~\citep{angelopoulos2021learn,quach2023conformal,shahrokhi2025conformal,kladny2024conformal}. However, these methods require oracle access, which is impractical for resource-constrained scenarios like drug discovery. 
%

In scientific/drug discovery tasks, conformal prediction has so far primarily been used in predictive inference for regression and classification problems in property prediction~\citep{sun2017mondrian,svensson2018conformal,cortes2018deepconfidence,cortes2019concepts,zhang2021toxcp} and their extension to covariate‐shift settings via importance‐weight estimation~\citep{fannjiang2022feedback,codrug2024,prinster2023jawsx,fannjiang2025algsel}. Beyond prediction‐set construction, conformal selection employs p-values to prioritize compounds with selective error control \citep{bai2024conformalscreen,jin2023selection,bai2024optimized}.  As we address a distinct generative design problem, our formulation, inference scheme, and methodology differ sharply from prior work.

Meanwhile, there is a growing literature on  generative modelling in scientific design tasks, including goal-directed small‐molecule optimization \citep{gomezbombarelli2018automatic,jin2020hierarchical}, structure‐based ligand design \citep{corso2022diffdock,targetdiff2023}, large‐scale protein generators \citep{madani2020progen,ingraham2019generative,strokach2022deep}, and materials discovery \citep{xie2021crystalvae,zeni2023mattergen}. Our contributions are orthogonal to this literature as \name~is model agnostic and offers guarantees for building property‐satisfying samples from these generative models.

To summarize, compared with existing works, \name~is the first framework that (i) eliminates the need for oracle access, (ii) corrects for covariate shift between historical and generated samples, and (iii) provides finite-sample guarantees for both certification and compact design in generative design.
\vspace{-1em}
\section{Problem Formulation}
\vspace{-0.5em}

For expositional convenience, we slightly override the notations in Section~\ref{sec:intro}. 

\vspace{-0.2em}
\paragraph{Data and distributions.} We assume access to a set of i.i.d.~labeled (calibration) samples $\cD_{\text{calib}}=\{(X_i,Y_i)\}_{i=1}^n$ from an unknown distribution $P$, e.g., molecules from past campaigns with known properties. Here $X_i\in \cX$ is the feature, e.g., the chemical structure of a molecule, and $Y_i\in \{0,1\}$ is a binary label, e.g., oracle-judged $Y_i=A(X_i)$. 
A generative model $\cM$ produces i.i.d.\ samples $\{X_{n+j}\}_{j\ge1}$ from another distribution~$Q$ with unknown labels $\{Y_{n+j}\}_{j\geq 1}$ (the properties $Y_{n+j}=A(X_{n+j})$ should they be judged by the oracle). Although $Q$’s density is, in principle, computable, doing so is often costly. Instead, we assume a covariate shift between the two distributions:
$ 
\ud Q/\ud P(x,y)=w(x)
$
for some function $w\colon \cX\to \RR^+$. It posits that $X$ captures all the information for predicting $Y$, which is especially sensible and standard in drug discovery contexts where the biological property is determined by the structure (though in a highly complex fashion). 
 
\vspace{-0.25em}
\paragraph{Guarantees.} We now formalize our guarantees for (i) certifying the presence of a hit in a set of generated candidates, and (ii) constructing a set of samples with at least a hit. 
For goal (i), given a pre-specified confidence level $\alpha\in (0,1)$ and a generated set $\cC_0^{\text{new}} := \{X_{n+j}\}_{j=1}^N$ of budget $N\in \NN^+$, we aim to propose a test $\psi(\cD_{\text{calib}}, \cC_0^{\text{new}})\in \{0,1\}$ such that 
\begin{align}
\makebox{\text{(Certification)}\qquad} & %
\PP\big( \psi(\cD_{\text{calib}}, \cC_0^{\text{new}}) = 1\biggiven A(X_{n+j})=0,~\forall 1\leq j\leq N \big) \leq \alpha,  \qquad\qquad\qquad\label{eq:def_certify}
\end{align}
that is, the probability of falsely certifying a low-quality candidate set is upper bounded by $\alpha$. 
For goal (ii), 
we would like to find a (random) stopping point $\hat{N}\leq N$ such that 
\@\label{eq:def_goal}
\makebox{\text{(Design)}\qquad\qquad\qquad\quad}
\PP\big(    A(X_{n+j})=0,~\forall 1\leq j\leq \hat{N} \big) \leq \alpha,\qquad\qquad\qquad\qquad\qquad
\@
outputing the list of samples $\hat{\cC}^{\text{new}}:=(X_{n+1},\dots,X_{n+\hat{N}})$ which contains a hit while controlling the error of vacuous declaration. We define $\hat{N}=0$ to accommodate cases where we are not confident enough to declare any positive sample within the generation budget (failure of certification).\footnote{This is consistent with existing works on uncertainty quantification for generative models~\citep{quach2023conformal}, where a ``null'' option (i.e., the method fails to confidently generate good samples) is required.} 

\vspace{-0.5em}
\section{Method}
\vspace{-0.5em} 

We introduce our conformal nested testing framework to address the certification problem (Section~\ref{subsec:certify}) and the design problem (Section~\ref{subsec:design}), and prove their model-agnostic, finite-sample guarantees.

\subsection{Certification: joint weighted conformal p-value}
\vspace{-0.5em}
\label{subsec:certify}

We begin by addressing the certification problem: 
\emph{Given a set of generated samples, how to quantify the confidence in it containing a hit?} 
Our key strategy is to construct a p-value based on conformal inference ideas that test the presence of any hit in $\{X_{n+j}\}_{j=1}^N$. 
Recall that there is a covariate shift  $w(x) = \ud Q/\ud P(x,y)$ for the labeled calibration data and the new test samples. To fix ideas, we assume the density ratio $w(\cdot)$ is known for now, and discuss estimated density ratio in Remark~\ref{rmk:est_w}. 

Our p-values leverage the inactive calibration data $\{X_i\colon i\in \cI_0\}$ where  $\cI_0:=\{i\in [n]\colon Y_i=0\}$.
We let $\bX=(X_1,\dots,X_{n_0},X_{n+1},\dots,X_{n+N})$ denote the vector of (inactive) calibration and test covariates.
Following split conformal prediction~\citep{vovk2005algorithmic}, we let $V\colon \cX^{n_0+N}\to \RR$ be any function whose training process is independent of $\{(X_i,Y_i)\}_{i\geq 1}$, which is then viewed as fixed. We call $V$ the \emph{conformity score} function, whose choice is discussed near the end of this subsection. 
Without loss of generality, we assume a larger value of the conformity score indicates stronger confidence in the presence of a positive value in $\{Y_{n+j}\}_{j=1}^N$.

To address multiple test points without introducing too much computational overhead, we leverage randomization (see Appendix~\ref{app:subsec_nonrandom_pvalue} for the non-randomized version). 
Let $\Pi_N$ be the collection of all permutations of $\{1,\dots,n_0,n+1,\dots,n+N\}$, and denote the identity mapping as $\pi^{(0)}$, i.e., $\pi^{(0)}(i)=i$ for any $i$. 
Formally, fixing any $B\in \NN^+$, we draw $\pi^{(1)},\dots,\pi^{(B)}\iid \text{Unif}(\Pi_N)$, the uniform distribution over the permutation space. Define 
\vspace{-0.25em}
\@\label{eq:def_pval_rand}
p_N^\rand = \frac{ \sum_{b=0}^B  \bar{w}(\pi^{(b)};\bX) \ind\{V(\pi_0;\bX) \leq V(\pi^{(b)};\bX)\}}{\sum_{b=0}^B \bar{w}(\pi^{(b)};\bX)}.
\@
Here, for any permutation $\pi\in\Pi_N$, we define the conformity score from the permuted data 
$
V(\pi;\bX) = V(X_{\pi(1)},\dots,X_{\pi(n_0)},X_{\pi(n+1)},\dots,X_{\pi(n+k)}),
$
and
$
\bar{w}(\pi;\bX) = \prod_{j=1}^k w(X_{\pi(n+j)}),
$ 
is the joint likelihood ratio after permutation $\pi$, 
where $w(x)=\ud Q/\ud P(x,y)$ is the density ratio. 

The construction of our p-value relies on an analysis of the exchangeability structure between the inactive calibration data $\{X_i\colon i\in\cI_0\}$ and the test samples $\{X_{n+j}\}_{j=1}^N$ under distribution shift, detailed in Appendix~\ref{app:subsec_wexch} due to limited space. 
It extends the conformal p-values in the literature via an extended weighted exchangeability structure~\citep{tibshirani2019conformal} for multiple test samples~\citep{hu2024two,jin2023model}; these connections are discussed in  Appendix~\ref{app:subsec_cf_pval_single}.


The following theorem establishes the finite-sample validity of the randomization p-value~\eqref{eq:def_pval_rand}. Notably, this holds regardless of the number of sampled permutations. 
Its proof is in Appendix~\ref{app:thm_pval_rand}. 

\begin{theorem}\label{thm:pval_rand}
    Under the covariate shift assumption, it  holds  for any fixed $t\in [0,1]$ that 
    \$
    \PP\big(~p_N^{\rand} \leq t\biggiven  \textnormal{max}_{1\leq j\leq N}Y_{n+j}=0\big)  \leq t.
    \$
    Therefore, the certification test function $\psi(\cD_{\text{calib}}, \{X_{n+j}\}_{j=1}^N) = \ind\{p^{\rand}_N\leq \alpha\}$ achieves~\eqref{eq:def_certify}. 
\end{theorem}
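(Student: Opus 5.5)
The approach is to condition on the unordered multiset of the $n_0+N$ covariates and exploit the weighted exchangeability described in Appendix~\ref{app:subsec_wexch}. Under the null $\max_j Y_{n+j}=0$ and covariate shift, the inactive calibration points $X_i$, $i\in\cI_0$, are i.i.d.\ from $P_{X\mid Y=0}$. The test points are i.i.d.\ from $Q_{X\mid Y=0}$, whose density with respect to $P_{X\mid Y=0}$ is proportional to $w(x)$, because $\ud Q/\ud P(x,y)=w(x)$ does not depend on $y$. I condition on $n_0$ and on the null event throughout; the normalizing constants cancel in ratios. The joint density of $\bX$ relative to the product measure $P_{X\mid Y=0}^{\otimes(n_0+N)}$ is therefore proportional to $\bar w(\pi^{(0)};\bX)=\prod_{j}w(X_{n+j})$. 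Let $E$ be the event that the multiset $\{X_1,\dots,X_{n_0},X_{n+1},\dots,X_{n+N}\}$ equals a given value. Conditional on $E$, the realized vector is $\bX\circ\sigma$ for a random permutation $\sigma\in\Pi_N$ with $\PP(\sigma=\pi\mid E)\propto \bar w(\pi;\bX)$. This is the key structural lemma, and I would state and verify it first via a change of variables on the symmetric product measure; ties can be handled by treating the permutation as a labeling.

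Next, I rewrite the randomized p-value symmetrically. Write the observed data as $\bX=\bX^*\circ\sigma$, where $\bX^*$ is a fixed canonical ordering of the multiset. Then define $\tau_b=\sigma\circ\pi^{(b)}$ for $b=0,\dots,B$, with $\pi^{(0)}$ the identity, so that $\tau_0=\sigma$. In this notation $V(\pi^{(b)};\bX)=V(\tau_b;\bX^*)$ and $\bar w(\pi^{(b)};\bX)=\bar w(\tau_b;\bX^*)$, so $p_N^\rand$ is a function of $(\tau_0,\dots,\tau_B)$ in which $\tau_0$ plays the role of the observed index. The crucial step, and the one I expect to be the main obstacle, is identifying the joint law of $(\tau_0,\dots,\tau_B)$ given $E$. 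Since the $\pi^{(b)}$ are i.i.d.\ uniform and independent of $\sigma$, $\PP(\tau_0=t_0,\dots,\tau_B=t_B)=\PP(\sigma=t_0)\,|\Pi_N|^{-B}\propto \bar w(t_0;\bX^*)$ for every tuple of permutations. The law of the unordered collection of the $\tau$'s is therefore symmetric. Given the multiset $\{t_0,\dots,t_B\}$, the index $K$ with $\tau_K$ as the ``observed'' one satisfies $\PP(K=k\mid\cdot)=\bar w(t_k)/\sum_{b}\bar w(t_b)$, with repeated permutations counted with multiplicity. This is the standard randomization-test exchangeability argument (as in Hemerik--Goeman), but with weights.

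Given this, the conclusion follows from the usual weighted-quantile argument. Conditional on the multiset of the $\tau$'s, let $V_b=V(t_b;\bX^*)$ and $\omega_b=\bar w(t_b)/\sum\bar w$. Then $p_N^\rand=\sum_b\omega_b\ind\{V_K\le V_b\}=:F(V_K)$, where $F(v)=\sum_b\omega_b\ind\{v\le V_b\}$ is the weighted survival function of a discrete distribution on $\{V_b\}$ with masses $\omega_b$, and $K$ is drawn from exactly that distribution. The standard fact $\PP(F(V_K)\le t)\le t$ for a right-continuous survival function evaluated at a draw from its own distribution gives validity conditionally. Integrating over the conditioning yields the claim for any fixed $t$ and any $B$. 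Finally, the certification guarantee~\eqref{eq:def_certify} follows immediately by taking $t=\alpha$.
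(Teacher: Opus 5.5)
Your proposal is correct and follows the paper's proof essentially step for step. Both condition on the unordered covariate set, invoke the weighted-exchangeability characterization of the data permutation, show that $(\sigma,\sigma\circ\pi^{(1)},\dots,\sigma\circ\pi^{(B)})$ has law proportional to $\bar w$ of its first coordinate, so that given the unordered collection the observed index is drawn with probability proportional to its weight, and finish with the weighted-survival-function argument. Your remark that $\ud Q_{X\mid Y=0}/\ud P_{X\mid Y=0}$ is only proportional to $w$ is slightly more accurate than Lemma~\ref{lem:w0}, which claims equality, and it is harmless because the constant cancels.
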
 

Inheriting the model-free nature of conformal inference, the validity of $p_k$ and $p_k^\rand$ holds regardless of the score function $V$, as long as its training process is independent of the calibration and test data.

\begin{remark}[Outlier detection]\label{rmk:extension_pval}
    Our p-values can also be viewed as testing for the presence of \emph{at least} one outlier in $\{X_{n+j}\}_{j=1}^m$ when $\{X_i\}_{i\in\cI_0}$ are viewed as calibration inliers. We refer the readers to discussion on the connection to outlier detection under covariate shifts~\citep{bates2023testing,jin2023model} in Appendix~\ref{app:subsec_outlier_cov_shift}.
\end{remark}

\begin{remark}[Estimated density ratio]\label{rmk:est_w}
    In many practical scenarios, the density ratio $w(\cdot)$ is unknown or difficult to evaluate. In such cases, we estimate the density ratio by $\hat{w}(\cdot)$, and use $\hat{w}(\cdot)$ instead of $w(\cdot)$ in the construction of our p-values $\{p_k\}$. For instance, we can train the density ratio over a random subset of all the labeled data and independently generated test samples. We do not write the exact formulas for such plug-in p-values for brevity. Recognizing that the success of \name~hinges on accurate density ratio estimation, we provide a thorough discussion on the robustness of \name~with estimated weights and several practical diagostics in Section~\ref{subsec:robustness}.
\end{remark}



\paragraph{Conformity score function.} 
Because we certify the candidate set when \(p_N\) is small, \(V\) must grow when \(X_{n+1:n+N}\) contains a positive to indicate strong confidence. To this end, we assume access to a pre-trained model $\hat\mu\colon \cX\to \RR$ that predicts the unknown property $Y_{n+j}$ based on $X_{n+j}$, which will be used in the score $V$.
We suggest several natural choices: 
(i) Max-pooling: $V(x_{1:n_0},x_{n+1:n+N}) = \max_{1\leq j\leq N}  \hat\mu(x_{n+j})$, (ii) Sum-of-prediction: $V(x_{1:n_0},x_{n+1:n+N}) = \sum_{j=1}^N \hat\mu(x_{n+j})$, (iii) Rank-sum: $V(x_{1:n_0},x_{n+1:n+N}) = \sum_{j=1}^N R_{n+j}$, where $R_{n+j}$ is the rank among all scores, (iv) Likelihood ratio: $V(x_{1:n_0},x_{n+1:n+N}) = \sum_{j=1}^N \log(\frac{\hat\mu(x_{n+j})}{1-\hat\mu(x_{n+j})})$. 
We explain these scores in Appendix~\ref{app:subsec_score}. 
We note that \name~is model-agnostic: practitioners can choose any suitable model to build the score $V$. The quality of the score/model affects the power, but not the error control of \name. Finally, practitioners may want to avoid sample splitting when labeled data is limited; we provide such a variant in Appendix~\ref{app:sec_no_split} where the labeled data are used for both training $\hat\mu$ and p-value construction.
%

\vspace{-0.5em}
\subsection{Design: conformal nested testing}
\vspace{-0.5em}
\label{subsec:design}

Upon the valid p-value and certification test for a given candidate set, we proceed to address the design problem: \emph{How to propose a compact set of generated samples that contains at least one hit with high probability?} 
To this end, we connect~\eqref{eq:def_goal} to the problem of constructing a smaller subset of candidates that can be certified by our procedure, and establish its statistical guarantee. 

To be specific, for every $1\leq k \leq N$, we consider the null hypothesis 
\@\label{eq:null_k}
H_k \colon Y_{n+j}=0,~\forall ~1\leq j\leq k.
\@
That is, $H_k$ posits that none of the first $k$ generated instances obeys the desired property. 
From a hypothesis testing perspective, rejecting $H_k$ thus suggests sufficient confidence in declaring a hit in $\{X_{n+j}\}_{j=1}^k$. 
The certification strategy in Section~\ref{subsec:certify} is readily applicable to obtain a p-value $p_k$ for certifying each subset $\{X_{n+j}\}_{j=1}^k$ indexed by $k$ that obeys $\PP(p_k\leq t\given H_k\text{ is true})\leq t$ for $t\in [0,1]$. 
%

Our solution to the design problem appears simple: determine an index $\hat{N}$ such that $H_{\hat{N}}$ can be rejected (i.e., $\{X_{n+j}\}_{j=1}^k$ can be confidently declared as containing a hit).
Specifically, suppose we can construct a \emph{decreasing} sequence of p-values $p_k\in [0,1]$ for each fixed $H_k$. 
Then, we set  
\@\label{eq:def_hatN_general}
\hat{N} := \inf\big\{k\colon p_k \leq \alpha\big\}.
\@
That is, we take the first p-value that passes the significance level $\alpha\in (0,1)$. 
Theorem~\ref{thm:nested_testing} confirms the validity of this nested testing strategy, whose proof is in Appendix~\ref{app:thm_nested_testing}.

\begin{theorem}\label{thm:nested_testing}
   Suppose the p-values $p_k\in [0,1]$ obey: (i) Monotonicity: $p_1\geq p_2\geq \cdots p_N$. (ii) Validity: For any fixed $k\geq 1$ and $t\in [0,1]$, it holds that $\PP(p_k \leq t\given H_k\text{ is true})\leq t$.
   Then, we have 
   \$
   \PP(\textnormal{max}_{1\leq j\leq \hat{N}} Y_{n+j}=0)\leq \alpha
   \$
   for the generation threshold $\hat{N}$ computed as in~\eqref{eq:def_hatN_general}. Here $H_k$ is defined in~\eqref{eq:null_k}.
\end{theorem}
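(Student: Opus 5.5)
The argument reduces the random stopping time $\hat N$ to a single deterministic index determined by the unknown labels, and then applies the validity condition (ii) at that fixed index. Define the (random, label-dependent) index
\$
k^* := \inf\{k\geq 1\colon Y_{n+k}=1\},
\$
with $k^*=\infty$ (or $N+1$) if no generated sample among the first $N$ is a hit. The failure event is $\{\max_{1\leq j\leq \hat N} Y_{n+j}=0\}$. We interpret it as $\hat N<k^*$, which includes $\hat N=0$ or $\hat N=\infty$ when no $p_k\leq\alpha$. If the convention counts $\hat N=0$ as a non-error because nothing is declared, the bound below is only conservative. On the event $\{1\leq\hat N<k^*\}$ we have $p_{\hat N}\leq\alpha$. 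By monotonicity (i), $p_k\leq\alpha$ for every $k\geq\hat N$. In particular, setting $m:=k^*-1$, the largest index for which $H_m$ holds, we get $p_m\leq\alpha$ whenever $m\geq 1$. So the error event is contained in $\{p_{k^*-1}\leq\alpha,\ k^*\geq 2\}$: the method rejects the \emph{largest true null} among the nested hypotheses. The nesting matters here because $H_k$ true implies $H_{k'}$ true for all $k'\leq k$.

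The remaining task is to bound $\PP(p_{k^*-1}\leq\alpha)$ even though $k^*$ is random. I would condition on, or sum over, the realized value of $k^*$. Write
\$
\PP(\text{error})\leq \sum_{m=1}^{N}\PP\big(p_m\leq\alpha,\ k^*-1=m\big).
\$
If $N$ is capped and no hit appears, the term $m=N$ corresponds to the event where all $Y$ are zero. The event $\{k^*-1=m\}$ is contained in $\{H_m\text{ true}\}$, but summing the validity bounds $\alpha\,\PP(H_m)$ over $m$ would overshoot $\alpha$. The natural fix uses a stronger property. Since the labels of $X_{n+1},\dots$ are i.i.d.\ and $p_m$ depends only on the calibration data and $X_{n+1:n+m}$, the event $\{Y_{n+m+1}=1\text{ or } m=N\}$ is, given $H_m$, independent of $p_m$. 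Then
\$
\PP(p_m\leq\alpha,\ k^*-1=m)=\PP(p_m\leq\alpha\given H_m)\,\PP(k^*-1=m)\leq\alpha\,\PP(k^*-1=m),
\$
and summing over $m$ gives at most $\alpha$.

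The main obstacle is justifying this factorization: that $p_m$ is conditionally independent of $Y_{n+m+1}$ given $H_m$. In the paper's construction, $p_k$ uses only $\bX$ restricted to the inactive calibration points and the first $k$ test points, and the test samples are i.i.d., so this should follow directly. If the p-values are instead constructed to share randomness in a way that breaks this independence, I would fall back on a stronger reading of (ii), namely validity conditional on the event that $H_m$ holds and $H_{m+1}$ fails. For the paper's $p_k^{\rand}$ from Theorem~\ref{thm:pval_rand}, the i.i.d.\ structure of the test points gives the conditional independence immediately, so the theorem follows.
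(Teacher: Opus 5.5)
Your argument is correct and is essentially the paper's. Monotonicity reduces the error to $\{K_0\ge 1,\ p_{K_0}\le\alpha\}$, where $K_0=k^*-1$ is the largest true null. The paper then conditions on all test labels, tacitly using validity conditional on all labels, which is stronger than (ii) as stated and is exactly the insufficiency you flag; you instead sum over $\{K_0=m\}$ and factorize.

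Two small corrections. First, your independence claim holds for the raw p-values $\tilde p_k$, but not for the monotone $p_k=\max_{j\ge k}\tilde p_j$ that actually satisfy (i), since those depend on $X_{n+k+1},\dots,X_{n+N}$. You therefore need your fallback condition, which holds there because $p_m\ge\tilde p_m$. Second, counting $\hat N=0$ as a non-error is necessary, not merely conservative: with $p_k\equiv 1$ the error probability would otherwise be $1$.
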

We note that any p-values $\{\tilde{p}_k\}$ obeying condition (ii) in Theorem~\ref{thm:nested_testing}, such as those constructed in the certification problem, can be turned to $p_k=\max_{k\leq j\leq N} \tilde{p}_j$ which satisfy both conditions in Theorem~\ref{thm:nested_testing}. 
Perhaps surprisingly, even though we are simultaneously examining multiple batches of candidates, it turns out that using a \emph{monotone} sequence of \emph{individually} valid p-values suffices to achieve our goal, and no adjustment for multiplicity is needed. 
We remark that the key to our theoretical guarantee is the nested nature of the  hypotheses: if $H_k$ is true, then all ``earlier'' hypotheses $\{H_\ell\}_{\ell\leq k}$ must also be true. We thus call our method ``conformal nested testing''.
%

\paragraph{\name: Putting everything together.}  So far, we have completed all the elements of \name. We summarize the entire procedure in Algorithm~\ref{alg}, including both certification (for every nested subset) and design. Note that the input $\hat{w}(\cdot)$ denotes an  estimated density ratio function, which coincides with $w(\cdot)$ when the density ratio is known. When it fails to certify even the largest batch, \name~flags ``not confident enough'' to clearly communicate the difficulty of the generation task.
\vspace{-1em}
\begin{algorithm}
\small
\caption{\name}
\label{alg}
\begin{algorithmic}[1]
\REQUIRE Calibration data $\{X_i\}_{i=1}^{n_0}$ for which $Y_i=0$, test data $\{X_{n+j}\}_{j=1}^{N}$, conformity score $V\colon \mathcal{X}^{n_0+k} \to \mathbb{R}$, (estimated) weight function $\hat{w}\colon \mathcal{X}\to \mathbb{R}^+$, confidence level $\alpha \in (0,1)$, budget $N\in \mathbb{N}^+$, Monte Carlo size $B\in \mathbb{N}^+$

\FOR{$k=1,\dots,N$}
    \STATE Set $\Pi = \{$ all permutations of $\{1,\dots,n_0,n+1,\dots,n+k\}\}$. \hfill \textcolor{gray}{// Sequential conformal p-values}
    \STATE Draw $\pi^{(b)} \iid \mathrm{Unif}(\Pi)$ for $b=1,\dots,B$.
    \STATE Compute p-value $p_k$ as in~\eqref{eq:def_pval_rand} with $w(x):=\hat{w}(x)$.
\ENDFOR

\STATE Set $p_k = \max_{k' \ge k} p_{k'}$ iteratively for $k=N-1,\dots,1$. \hfill \textcolor{gray}{// Monotonize p-values}

\STATE Compute $\hat{N} = \argmax\{k\in [N]\colon p_k\le \alpha\}$, with $\hat{N}=0$ if $p_N>\alpha$.

\ENSURE Confident shortlist $\mathcal{C}:=\{X_{n+j}\}_{j=1}^{\hat{N}}$ or ``not confident enough'' when $\hat{N}=0$.
\end{algorithmic}
\end{algorithm}

\subsection{Robustness and diagnostics for density ratio estimation}
\label{subsec:robustness}

\vspace{-0.2em}
The density ratio often needs to be estimated in practice and its quality naturally affects the performance of \name. 
The difference between historical assay data and newly generated molecules is a persistent issue in drug discovery and needs to be addressed in nearly all conformal prediction methods in this domain~\citep{doi:10.1021/acs.jcim.1c00549,codrug2024,jin2023model,fannjiang2025algsel}. Estimating the density ratio is a statistically standard problem used in many fields including machine learning, statistics, and causal inference with rich existing toolboxes such as kernel or classification-based methods beyond the kernel density estimation in our experiments~\citep{horvitz1952generalization,shimodaira2000improving,sugiyama2007direct,nguyen2010estimating,sugiyama2012density,shapiro2003monte}. 
Below, we elaborate on (i) a theoretical analysis of \name's robustness to the estimation error, followed by (ii) three diagnostic approaches for the estimation quality. 
\color{black}

Theorem~\ref{thm:robust_dtm} shows that with estimated density ratio,  the error inflation depends on how \emph{aggressive} (resp.~\emph{conservative}) the weights are for small (resp.~\emph{large}) p-values; its proof is in Appendix~\ref{app:subsec_robust}. 

\begin{theorem}[Robustness to estimation error]
\label{thm:robust_dtm}
    Let $\hat{p}_N$ be the p-value~\eqref{eq:def_pval_rand} using any estimated weights $\hat{w}(\cdot)$. 
    Define the positive/negative parts of the estimation error  $\hat\delta^{\pm}(\pi;\bX)=[\hat{\bar{w}}(\pi;\bX)-\bar{w}(\pi;\bX)]_{\pm}$, with $\hat{\bar{w}}(\pi;\bX) = \prod_{j=1}^N \hat{w}(X_{\pi(n+j)})$. 
    Then, for any fixed $t\in (0,1)$, we have
    \$
    \PP\Big(\hat{p}_N^{\text{rand}} \leq t\Biggiven  \max_{1\leq j\leq N}Y_{n+j}=0\Big)  \leq t + \EE\bigg[ \textstyle{\frac{\hat{t}\cdot \hat\Delta^+ + (1-\hat{t})\cdot \hat\Delta^-}{\sum_{b=0}^B \bar{w}(\pi^{(b)};\bX)}} \Biggiven  \max_{1\leq j\leq N}Y_{n+j}=0 \bigg], 
    \$
    where, setting $\hat{p}_N(v):= \frac{\sum_{b=0}^B\hat{\bar{w}}(\pi^{(b)};\bX)\ind\{v\leq V(\pi^{(b)};\bX)\}}{\sum_{b=0}^B \hat{\bar{w}}(\pi^{(b)};\bX)\}}$, we define  $\hat{v}=\argmax\{V(\pi^{(b)};\bX)\colon \pi\in\Pi, ~\hat{p}_N(V(\pi^{(b)};\bX))\leq t\}$ as the score cutoff, and the corresponding p-value as $\hat{t}:=\hat{p}_N(\hat{v})$, and 
    \$
    \hat\Delta^+ =  \textstyle{\sum_{b=0}^{B}}\hat\delta^+(\pi^{(b)};\bX)\ind\{V(\pi^{(b)};\bX)< \hat{v}\},\quad 
    \hat\Delta^- = \textstyle{\sum_{b=0}^B}\hat\delta^-(\pi^{(b)};\bX)\ind\{V(\pi^{(b)};\bX)\geq  \hat{v}\}.
    \$ 
\end{theorem}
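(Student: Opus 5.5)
The plan is to compare $\hat p_N^{\rand}$ with an ``oracle'' randomized p-value built on the true weights. Theorem~\ref{thm:pval_rand} then controls the oracle, and a deterministic bound on the event $\{\hat p_N^{\rand}\le t\}$ in terms of the oracle pays for the weight error. For brevity I write $V_b:=V(\pi^{(b)};\bX)$, $W_b:=\bar w(\pi^{(b)};\bX)$ and $\hat W_b:=\hat{\bar w}(\pi^{(b)};\bX)$, for $b=0,\dots,B$. Since $\pi^{(0)}$ is the identity, the observed score is $V_0$, and $\hat p_N^{\rand}=\hat p_N(V_0)$.

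\textbf{Step 1: weighted exchangeability.} I reuse the structure from the proof of Theorem~\ref{thm:pval_rand}. Under the null $\max_j Y_{n+j}=0$ and the covariate shift assumption, the law of $\bX$ is weighted exchangeable. Combined with the i.i.d.\ uniform permutations, this says: conditional on the multiset of pairs $\{(V_b,W_b)\}_{b=0}^B$ (more precisely, on the orbit of $\bX$ and the unordered collection of permuted vectors), the index of the observed configuration equals $b$ with probability $W_b/\sum_{b'}W_{b'}$.

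\textbf{Step 2: reduce the event to a threshold.} The map $v\mapsto \hat p_N(v)$ is nonincreasing, and $\hat v$ is the largest sampled score with $\hat p_N\le t$. Hence $\{\hat p_N^{\rand}\le t\}=\{V_0\ge \hat v\}$. Also $\hat v$ is a symmetric function of the multiset, so it is fixed under the conditioning in Step 1. Step 1 therefore gives
\[
\PP(V_0\ge\hat v\mid\text{multiset})=\frac{\sum_b W_b\ind\{V_b\ge\hat v\}}{\sum_b W_b}.
\]

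\textbf{Step 3: compare with the estimated weights.} By the definition of $\hat t$,
\[
\sum_b \hat W_b\ind\{V_b\ge \hat v\}=\hat t\sum_b\hat W_b .
\]
I write $W_b=\hat W_b-\hat\delta^+_b+\hat\delta^-_b$ and split $\sum_b\hat W_b$ into the parts with $V_b\ge\hat v$ and with $V_b<\hat v$. Multiplying the numerator identity through, $(1-\hat t)$ times the above-cutoff mass equals $\hat t$ times the below-cutoff mass. Replacing each $\hat W_b$ by $W_b$ plus error terms gives
\[
(1-\hat t)\textstyle\sum_{V_b\ge\hat v}W_b\le \hat t\sum_{V_b<\hat v}W_b+\hat t\hat\Delta^++(1-\hat t)\hat\Delta^- .
\]
Here I drop the terms $-(1-\hat t)\hat\delta^+$ above the cutoff and $-\hat t\hat\delta^-$ below it, since both have a favourable sign. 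Adding $\hat t\sum_{V_b\ge\hat v}W_b$ to both sides and dividing by $\sum_b W_b$ yields
\[
\frac{\sum_b W_b\ind\{V_b\ge\hat v\}}{\sum_bW_b}\le \hat t+\frac{\hat t\hat\Delta^++(1-\hat t)\hat\Delta^-}{\sum_b W_b}.
\]
Since $\hat t\le t$, taking expectations conditional on the null gives the claim.

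\textbf{Main obstacle.} I expect the hard step to be Step 1 together with the fact, used in Step 2, that $\hat v$, $\hat t$ and $\hat\Delta^\pm$ are invariant to which sampled permutation is the observed one. This requires that $\hat w$ is fixed, being trained independently, and that everything is a function of the multiset of permuted configurations. I would handle it as in the proof of Theorem~\ref{thm:pval_rand}. Concretely, I would draw a uniform index $b^*$, relabel the sampled permutations relative to $\pi^{(b^*)}$, and verify that $(\pi^{(0)},\dots,\pi^{(B)})$ composed with a uniform permutation stays i.i.d.\ uniform. Ties in $V$ are harmless because they are treated by the non-strict inequality on both sides.
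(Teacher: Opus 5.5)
Your proof is correct and follows the paper's argument. You condition on the orbit of $\bX$ and the unordered set of sampled permutations, so the observed index is $b$ with probability $W_b/\sum_{b'}W_{b'}$. You then rewrite the event as $\{V_0\ge\hat v\}$ and trade each $W_b$ for $\hat W_b-\hat\delta^+_b+\hat\delta^-_b$, using $\sum_b\hat W_b\ind\{V_b\ge\hat v\}=\hat t\sum_b\hat W_b$; this gives the same bound as the paper. The paper proves this in detail only for the full-permutation p-value and sketches the randomized case, which you treat directly.

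One caveat: the identity $\{\hat p_N^{\rand}\le t\}=\{V_0\ge\hat v\}$ needs $\hat v$ to be the \emph{smallest} sampled score with $\hat p_N\le t$, equivalently the score giving the largest attainable p-value not exceeding $t$. So the ``argmax'' in the statement, which you copied as ``largest,'' must be read that way, as the paper's proof implicitly does.
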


While Theorem~\ref{thm:robust_dtm} offers insights on how the guarantees degrade with the (unknown) estimation error, it is often useful to perform robustness diagnostics in practice. The three approaches below are demonstrated in our experiments in ``Robustness check'' Section~\ref{subsec:additional_exp}.

\vspace{-0.2em}
\textbf{(1) Balance check.} Following the balancing idea in causal inference~\citep{ben2021balancing}, reweighting via the correct density ratio will bring the empirical mean of any function of the covariates in the calibration data close to that of the test data (under the null with no hits). A natural diagnostic is to check the imbalance $|\frac{1}{n_0}\sum_{i=1}^{n_0} \hat{w}(X_i)f(X_i)-\frac{1}{k}\sum_{j=1}^k f(X_{n+j})|$ for some $f\colon \cX\to \RR$. 

\vspace{-0.2em}
\textbf{(2) Validation shift.}
A second approach is to examine the reliability of estimation via synthetic shift. One may create artificial distribution shifts in the labeled data, such as by scaffold splitting; with one fold as the calibration data and the other as the test data, one can use the known labels to examine the uniformity of p-values and the quality of the density ratio estimation algorithm. 

\vspace{-0.2em}
\textbf{(3) Sensitivity analysis.} Finally, one may conduct sensitivity analysis to probe the robustness of \name~to perturbations of estimated weights. Similar to causal inference~\citep{rosenbaum2005sensitivity}, we may assume the correct weights are within a distance from the estimated weights, and examine the worst-case p-values and testing results while varying the weights within the assumed range. 

While generative models do not naturally provide a built-in notion of reliability, \name~augments them with  new capabilities for practical deployment; it is model-agnostic and only relies on a scalar quantity, the density ratio, that can be validated and stress-tested, thereby making the uncertainty measurable and more robust than trusting the raw model alone (Section~\ref{subsec:additional_exp}, ``Additional baselines'').

\color{black}

\vspace{-0.7em}
\section{Experiments}
\vspace{-0.8em}
\subsection{Tasks, Models, and Datasets}
\vspace{-0.8em}
\label{sec:tasks_models}

We evaluate our framework on two representative conditional molecule-generation tasks and highlight the model-agnostic nature of our approach by employing five generative models spanning VAE, autoregressive transformers, diffusion models, and Bayesian Flow Networks. For \emph{evaluation} purposes, we use computational oracles to judge the property of generated samples since wet-lab evaluation is infeasible~\citep{jin2020hierarchical}; the oracle is not used in running \name.
\vspace{-0.25em}
\paragraph{App.~1: Constrained Molecule Optimisation (CMO).}
\label{subsec:cmo_task}
Given a seed molecule \(\bar{X}\) that \emph{fails} a target property, the goal is to generate a molecule \(X\) that is close in Tanimoto distance to $\bar{X}$ and satisfies the property. This task reflects lead-optimisation workflows where \emph{experimental} assays provide ground-truth. 
Following the standard practice of using a computational oracle, in our experiments, we use the property \emph{DRD2 receptor binding (DRD2)}, with success ($Y=1$) if \(\mathrm{DRD2}(x) > 0.5\). Additional comphrehensive results on Quantitative Estimate of Drug-likeness improvement (QED) reported in the Appendix \ref{app:test_stat}.

To demonstrate the compatibility of~\name~with general generative models, we employ two state-of-the-art CMO models:  
1. \emph{H\textsc{graph2graph}}, a hierarchical VAE editing scaffold-level motifs while preserving validity \citep{jin2020hierarchical};  
2. \emph{SELF-\textsc{EdiT}}, an autoregressive transformer generating SELFIES strings to optimise the property while maintaining similarity to the seed \citep{selfedit2023}.  
 
We take a subset of the training split in Chemprop~\citep{yang2019chemprop} of ChEMBL data~\citep{gaulton2012chembl} as the labeled calibration data $\{\bar{X}_i, X_i,Y_i\}_{i=1}^n$ (seed input, sample from past campaigns, and oracle property), following \cite{jin2020hierarchical}.  Inactive molecules in the test split serve as test inputs $\{\bar{X}^{(\ell)}\}_{\ell=1}^L$. For each $\bar{X}^{(\ell)}$, the model generates $\{X_{n+j}^{(\ell)}\}_{j=1}^m$. 

\paragraph{App.~2: Structure-Based Drug Discovery (SBDD).}
Here the input $\bar{X}$ is a 3D protein binding pocket, and the task is to generate ligands $X$ that bind to it, so the label $Y$ depends on $(\bar{X},X)$. Following standard practice, for evaluation purposes only, we use AutoDock Vina~\citep{trott2010autodock} as computation oracle and label a ligand as a hit if its score is below $-7.5\;\text{kcal}\,\text{mol}^{-1}$; such a threshold captures about $75\%$ of known active samples in CrossDock~\citep{crossdock2020}. 

In this task, we apply \name~with three state-of-the-art SBDD models:  
\emph{TargetDiff}\citep{targetdiff2023}, an SE(3)-equivariant diffusion model conditioned on pocket meshes;  
\emph{DecompDiff}\citep{guan2024decompdiff}, using decomposed priors to separate structural components;  
\emph{MolCRAFT}~\citep{qu2024molcraft}, a Bayesian Field Network operating fully in continuous parameter space.  

The calibration data are protein-ligand pairs $\{(\bar{X}_i,X_i)\}_{i=1}^n$ from CrossDocked \citep{francoeur2020three}. Following the same split in \cite{targetdiff2023}, we use the proteins in the test split as inputs $\{\bar{X}^{(\ell)}\}_{\ell=1}^L$, and the model generates candidate ligands $\{X_{n+j}\}_{j=1}^N$ given each input $\bar{X}^{(\ell)}$. 

\vspace{-0.7em}
\subsection{Implementation Details}
\label{sec:impl}
\vspace{-0.7em}

\paragraph{Conformity Score Function.} 
We define $V$ using a property prediction model $\hat\mu(\cdot)$:
\vspace{-0.5em}
\begin{itemize}[leftmargin=10pt,itemsep=1pt]
    \item \textbf{CMO.} For both properties (QED and DRD2) we train a binary classifier $ \hat\mu(\cdot)$ with molecular graph inputs using the Chemprop library~\citep{yang2019chemprop} and ChEMBL data splits in \cite{jin2020hierarchical}. 
    \item \textbf{SBDD.} We train an EGNN $\hat\mu(\cdot)$~\citep{satorras2021n} on PDBBind crystal structure data~\citep{wang2005pdbbind} to predict binding affinity, using \textsc{TargetDiff}~\citep{targetdiff2023} codebase.
\end{itemize}
\vspace{-0.5em}

In both tasks, $\hat\mu$ is trained on data independent of calibration and test sets. Training details, hyperparameters, and compute for $\hat\mu$ and the generative models are given in Appendix \ref{app:exp_set}. For conciseness, we present most results with the Max-pooling score $V(x_{1:n_0},x_{n+1:n+k}) = \max_{1\leq j\leq k}  \hat\mu(x_{n+j})$. Comparisons to other score functions appear in Figure~\ref{fig:stats_compar}.  

\paragraph{Density Ratio Estimation.}
We leverage kernel density estimation (KDE) for constructing the weights in our p-values. 
Following the discussion in Lemma~\ref{lem:w0} in Appendix~\ref{app:subsec_wexch}, the desired weight $w(\cdot)$ equals the \emph{marginal} density ratio of generated samples, regardless of hit status. 
Also, plugging in the densities for the joint distribution of input/generation pairs, denoted as $\tilde{q}(\bar{x},x)$ and $\tilde{p}(\bar{x},x)$,  yields the same p-value due to normalization.
Thus, we directly estimate $\tilde{p}(\cdot)$ and $\tilde{q}(\cdot)$ to ensure sufficient data for estimation. 
Following \textsc{CoDrug} \citep{codrug2024}, we extract latent features \(g(\bar{x},x)\) from the penultimate layer of the property model, and fit Gaussian KDEs $\hat{p}(\bar{x},x)$ and \(\hat{q}(g(\bar{x},x))\) on calibration and test samples, forming weights \(\tilde{w}(\bar{x},x)=\hat{q}(g(\bar{x},x))/\hat{p}(g(\bar{x},x))\). 
\vspace{-0.7em}
\subsection{Results: Certification}
\vspace{-0.7em}
\label{sec:results_certification}

\begin{figure}[t]
    \centering
    \includegraphics[width=0.78\linewidth]{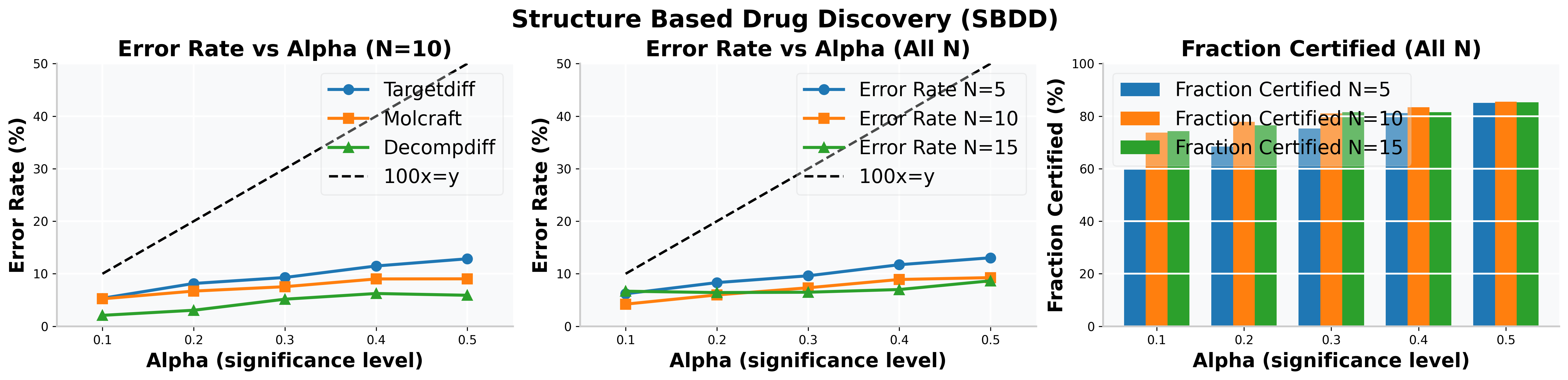}
        \includegraphics[width=0.78\linewidth]{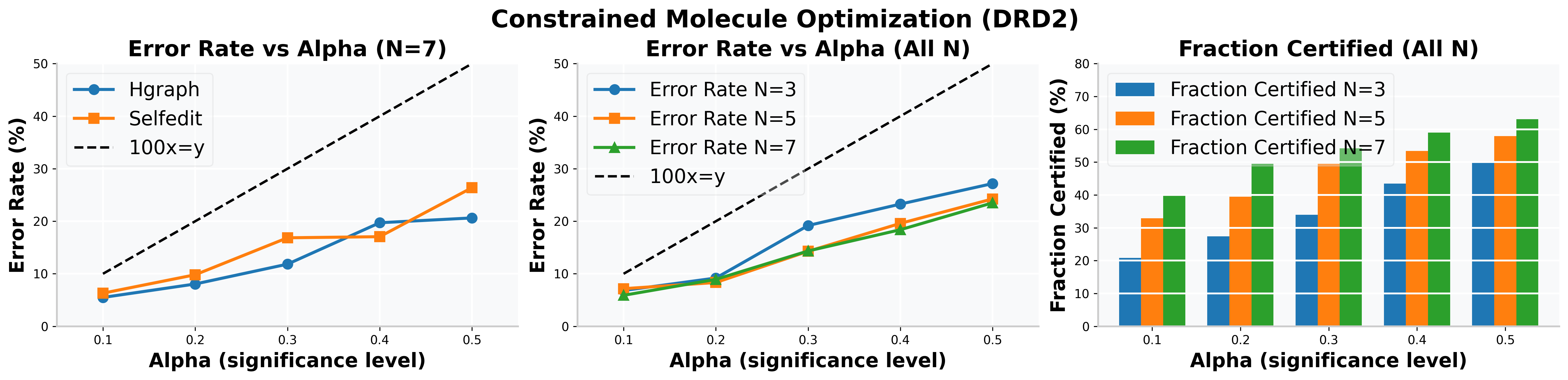}
    
    \caption{\textbf{Certification results.} 
    \underline{Left}: realized error rates at fixed $N$ for different models and error levels $\alpha$ in SBDD (upper) and CMO (lower). 
    \underline{Middle}: average error rates while varying budget $N$. 
    \underline{Right}: power, i.e., the fraction of actives certified at various error level $\alpha$ and budget $N$ values. The dashed line denotes the ideal $y=x$ error bound. Our method consistently achieves valid coverage across scenarios. Results are averaged over 5 random runs; error bars and additional results for other values of $N$ are in Appendix \ref{app:test_stat} }
    \label{fig:certification_results}
        \vspace{-1em}
\end{figure}

In this section, we present results of the certification experiment, where the goal~\eqref{eq:def_certify} is to 
certify the presence of a hit in a batch of fixed size $N$ at level $\alpha$. Our evaluation focuses on (i) error control,   and (ii) power, i.e., the frequency of successfully certifying hits. Results for both  CMO and SBDD tasks across different generative models are shown in  Figure ~\ref{fig:certification_results}.
The left panel reports the empirical error rate, i.e. the frequency of certifying sets that contain no hit. Across all settings, we observe tight error control. The middle panel shows the error rates (averaged across methods for visualization) for different budgets $N$, and we observe that the error stays below the target error levels.
Finally, the right panel plots the fraction of sets containing an active that are certified as a function of $\alpha$ at different, where \name~is able to detect true hits with satisfactory power. As expected, this  fraction increases with the significance level and set size $N$. 



\vspace{-0.7em}
\subsection{Results: Design}
\vspace{-0.7em}
\label{sec:results_design}

\begin{figure}[t]
    \centering
    \includegraphics[width=0.8\linewidth]{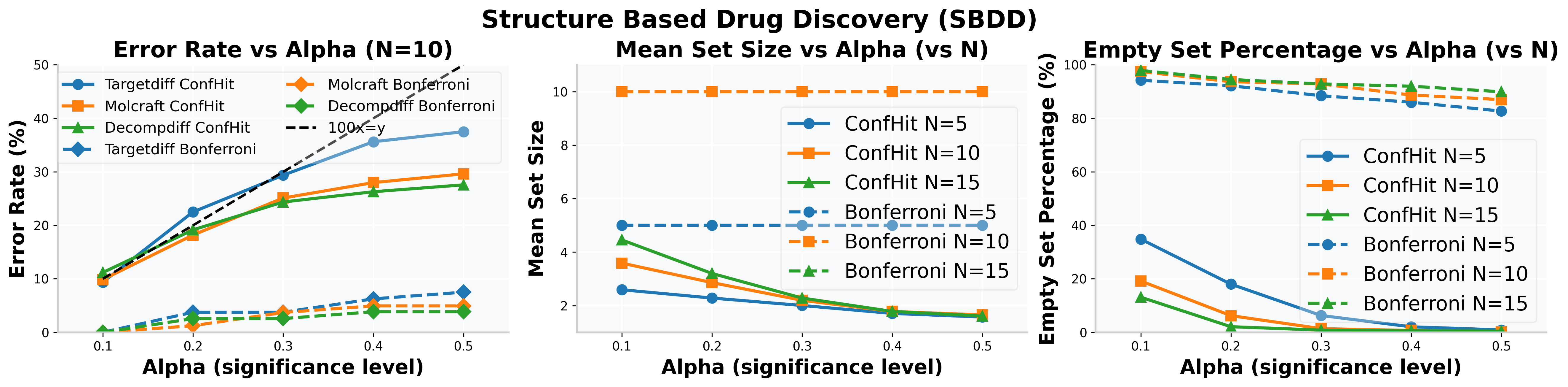}
    \includegraphics[width=0.8\linewidth]{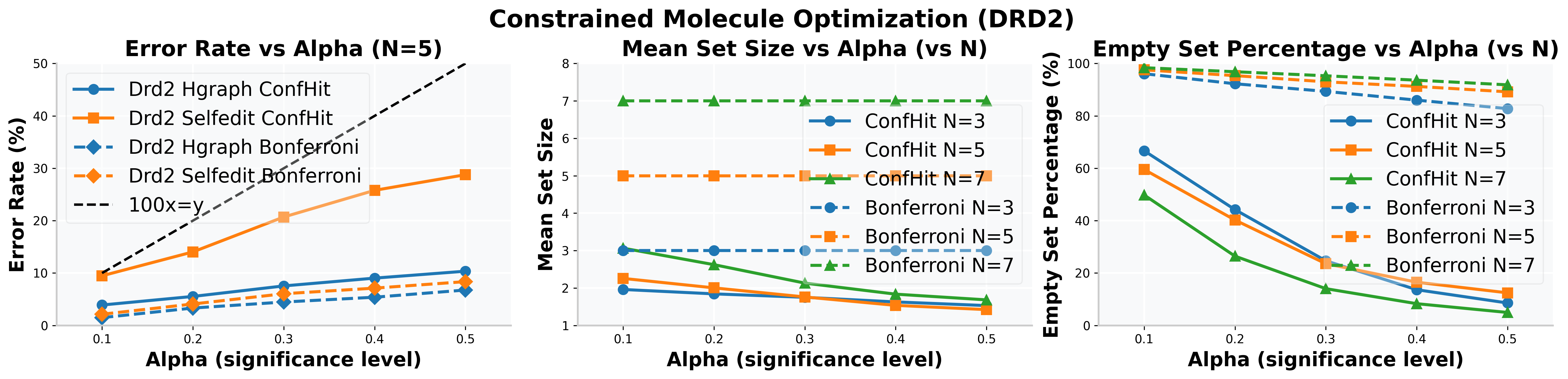}
    \caption{\textbf{Design results.} 
    Error rate at fixed $N$ for different methods (left), mean set sizes averaged across methods at different values of $N$ (middle), and empty set percentage at different values of $N$ (right) across target levels $\alpha$. The top row shows results for SBDD and the bottom for CMO. Dashed black line in the error plots indicates the ideal $y=x$ bound. \name~achieves tight error control while producing substantially smaller sets. Results are averaged over 5 random runs; additional results are provided in the Appendix \ref{app:test_stat}.}
    \vspace{-1.5em}
    \label{fig:design_results}
\end{figure}

In this section, we present experiments for the design problem in \eqref{eq:def_goal}, where we prune a generated batch to obtain compact subsets while preserving tight error control. As summarized in Figure~\ref{fig:design_results}, our nested procedure refines candidate sets while preserving statistical guarantees.  

As we address a novel problem under unique constraints, there are no directly comparable baselines (existing methods in conformal generative modeling~\citep{quach2023conformal,kladny2024conformal} are not comparable due to the unavailable oracle). Instead, we consider two baselines to demonstrate the benefits of \name:  
\textbf{(i) Bonferroni correction.} Given $N$ test samples, we threshold the one-test-sample p-value (set each candidate as the test set in~\eqref{eq:def_pval_rand}) at $\alpha/N$. It provides a stronger guarantee of no false certification for any test sample, but can be conservative. 
\textbf{(ii) Certification-only.} The certification procedure from Section~\ref{sec:results_certification}, which serves as a baseline without pruning.

\textbf{Compared with Bonferroni,}  
our approach substantially outperforms in both CMO and SBDD tasks. At stringent levels ($\alpha=0.1$), Bonferroni yields empty sets for nearly 100\% of SBDD inputs, while \name leads to empty set frequency as low as 16\% for $N=15$\footnote{Returning empty sets means no enough confidence under the generation budget $N$, which is inevitable without strong assumptions on the generative power.}. Moreover, in the rare cases where Bonferroni is able to make discoveries, it usually exhaust the full budget, whereas \name~consistently prunes to 2-5 molecules (30-50\% of the full budget). Due to conservativeness, Bonferroni has low or near zero error rates, whereas \name~achieves adaptive and tight coverage.  

\textbf{Compared with certification-only}    whose results are in Figure~\ref{fig:certification_results}, the nested procedure successfully prunes promising initial large sets to smaller ones.  
For example, in SBDD at $\alpha=0.1$ with $N=15$, the nested procedure reduces the mean set size to about 4 molecules without compromising the error control. Similar improvements hold in CMO, where nested pruning typically halves the certified set size while delivering tighter agreement between realized and nominal error.  

{In summary},  
\name~delivers compact certified sets with strict error control. It largely avoids vacuous results and yields smaller, more actionable shortlists with higher coverage for subsequent experimental validation. This makes our framework particularly valuable in scientific discovery workflows, where experimental budgets require both statistical confidence and practical tractability.  

\vspace{-0.75em}
\subsection{Additional investigations}
\vspace{-0.5em}
\label{subsec:additional_exp}

\paragraph{Choice of test statistic.}
\label{subsec:test_statistic}
While we report the results with the Max-pool conformity score, 
Figure~\ref{fig:stats_compar}  evaluate \name~with  five choices of the conformity score $V(\cdot)$---\emph{min}, \emph{mean}, likelihood-ratio (\emph{LR}), rank-sum, and \emph{max}---on the SBDD task with \textsc{TargetDiff} at a budget of \(N=5\); results for other tasks and budgets show similar patterns and are provided in the Appendix \ref{app:test_stat}.  Across the board, \name~consistently keeps the realised error rate close to the target \(\alpha\), confirming the model-agnostic validity.  
However, the choice of the score does affect power: \emph{max} certifies most often (e.g., 73 \% of seeds at \(\alpha=0.1\)), while rank-sum is the most conservative.  
\begin{figure}[ht]
    \centering
    \includegraphics[width=0.8\linewidth]{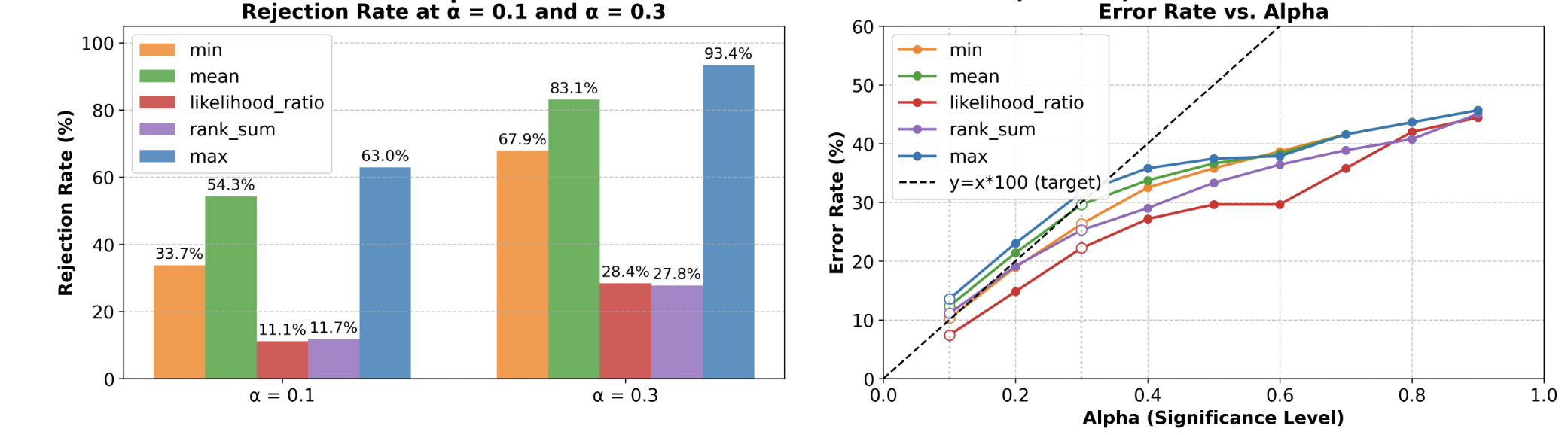}\\[6pt]
\vspace{-1.25em}
    \caption{Comparison of score statistics on TargetDiff (\textsc{SBDD}, N=5). Left: rejection (power) at $\alpha=0.1,0.3$. Right: error vs.\ $\alpha$. All remain valid; the \textit{max} statistic shows the highest power.}
    \label{fig:stats_compar}
\vspace{-1em}
\end{figure}

\textbf{Choice of property prediction model.} As we discussed in the end of Section~\ref{subsec:certify}, the choice of the property prediction model only affects the power. We performed an ablation study with weak property prediction models by adding noninformative perturbations to the predictions. \name~maintains robust error rate control despite that the weaker predictor decreases the power; see Appendix~\ref{app:sec_predictor_effect}.

\textbf{Distribution shift adjustment.}
\label{subsec:pval}
The guarantees of \name~in both certification and design rely on valid p-values, which further hinges on distribution shift adjustment. As an ablation study, we compare \name~with the version without distribution 
shift adjustment, i.e.,~taking $w(\cdot)\equiv 1$. We observe that running \name~without distribution shift adjustment can violate the coverage guarantee especially for stringent target error rates, for example, using the \textsc{TargetDiff} model in the SBDD task, and the \textsc{SelfEdit} model in the CMO task. The differences are highlighted in Figure ~\ref{fig:figure5}a. In addition, we plot the distribution of p-values computed with negative samples across different datasets and observe that they are approximately uniform; see Appendix~\ref{app:p_val}.

\begin{figure}[ht]
    \centering

    \begin{subfigure}[t]{0.35\textwidth}
        \centering
        \includegraphics[width=\linewidth]{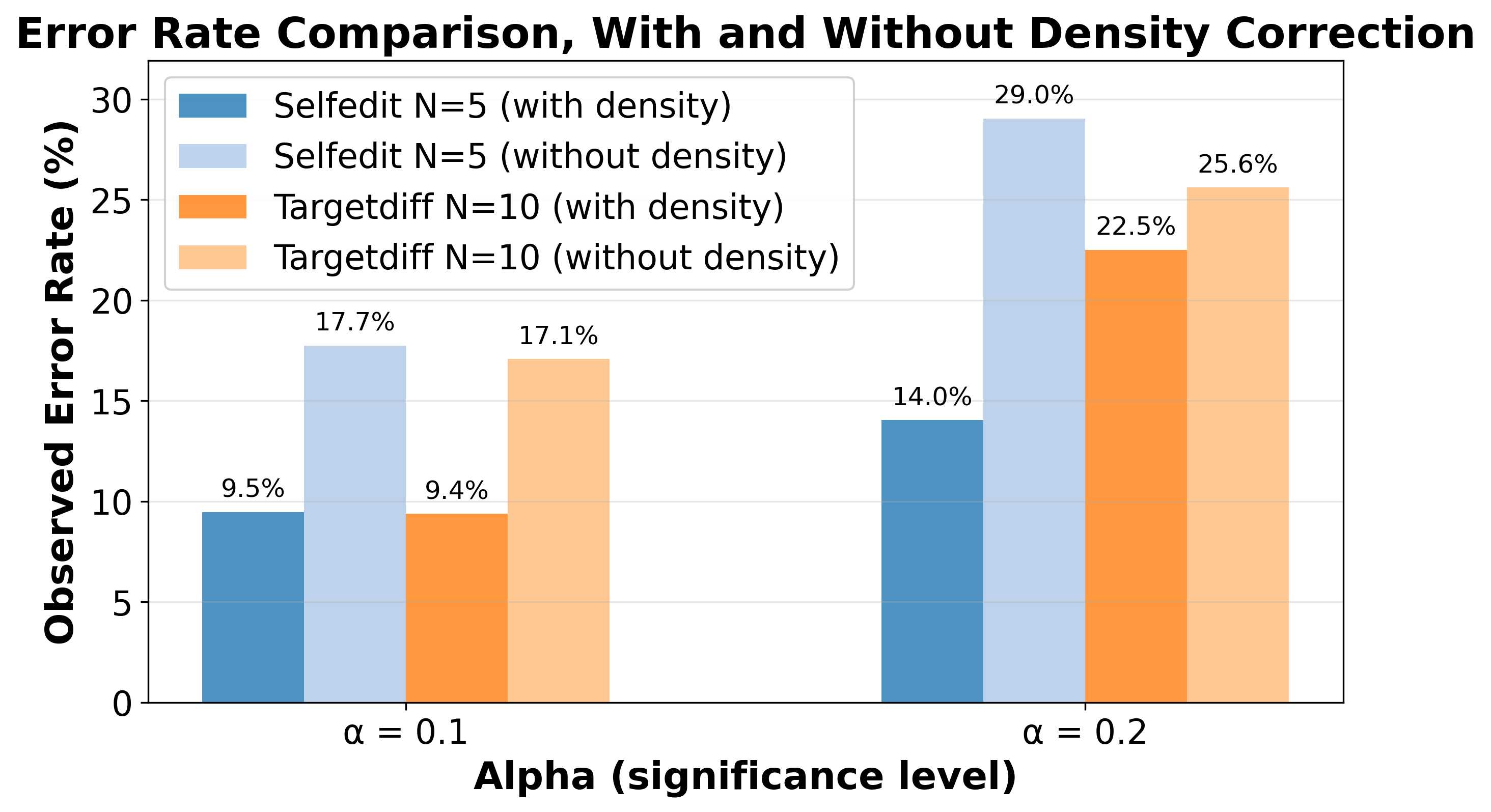}
        \label{fig:5a}
    \end{subfigure}
    \hfill
    \begin{subfigure}[t]{0.63\textwidth}
        \centering
        \includegraphics[width=\linewidth]{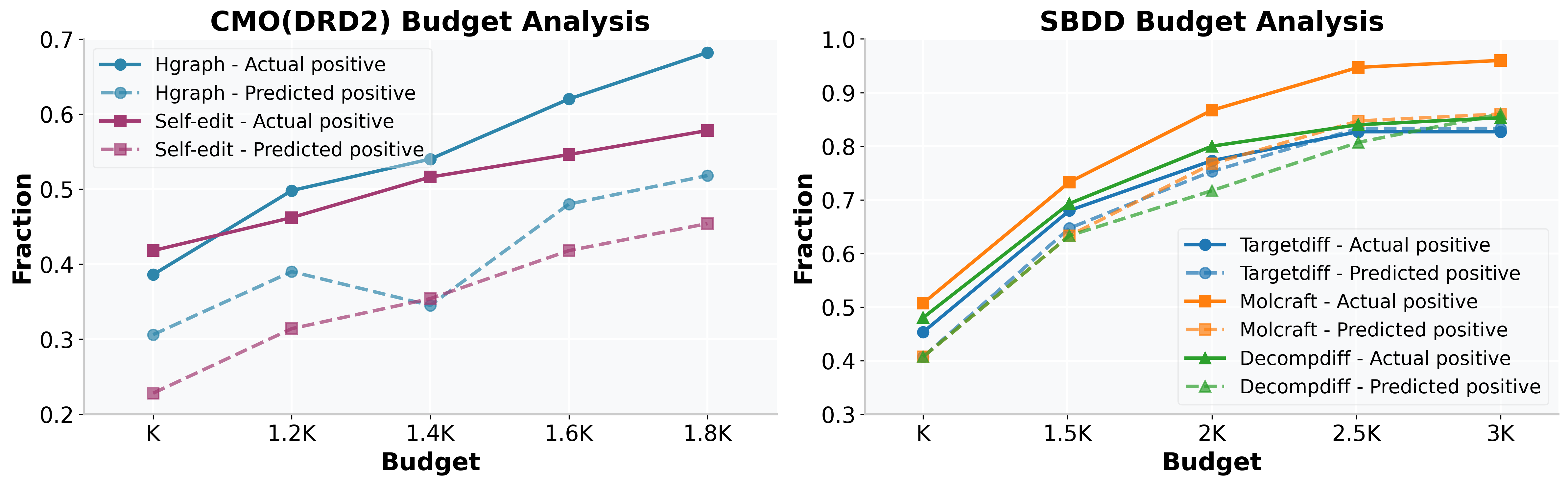}
        \label{fig:5b}
    \end{subfigure}
    \vspace{-0.5em}
    \caption{(a) \textbf{Distribution shift adjustment} (left). Coverage violations when \name{} is run without density correction. (b) \textbf{Budget analysis} (middle and right). Fraction of inputs with at least one hit under increasing generation budget. Solid lines: fraction of actual hits; dashed lines: predicted fraction of hits.} 
    \vspace{-1.5em}
    \label{fig:figure5}
\end{figure}

\paragraph{Robustness check.}
We exemplify the robustness diagnostics in Section~\ref{subsec:robustness}, with a sensitivity-analysis in Appendix~\ref{app:sec_sensitivity}, synthetic validation with a realistic  scaffold split in Appendix~\ref{app:subsubsec_validation_shift} and a balance check based on key features in Appendix~\ref{app:subsubsec_balance}. First, by perturbing the weights via a power law, we showcase how to examine the robustness of \name's output, with additional evaluation to show the mild degradation of error control. Meanwhile, \name~achieves tight error control under synthetic  scaffold splits, and the estimated density ratio improves the balance in key features.   

\paragraph{Additional baselines.} 
We additionally evaluate two baselines. (i) A \textbf{heuristic baseline} which uses an estimated probability to calibrate the generation batch size (Appendix~\ref{app:subsec_heuristic_baseline}). It violates error control when the estimation quality moderately degrades; in contrast, \name~is model-agnostic and maintains validity under the same degradation (Appendix~\ref{app:sec_predictor_effect}), showing that principled uncertainty quantification provides robust guarantees. (ii) An \textbf{oracle baseline} which (unrealistically) labels generated samples~\citep{quach2023conformal} yet without covariate shift adjustment (Appendix~\ref{app:subsec_oracle_baseline}). Even with oracle access, ignoring the distribution shift leads to both high errors and high frequency of empty sets (perhaps since it relies on exchangeability between \emph{inputs}, which reduces the sample size).

\vspace{-0.7em}
\subsection{Working under an overall budget}
\vspace{-0.7em}
\label{sec:results_budget}

Finally, we demonstrate the utility of \name~for allocating a fixed budget across multiple tasks. In drug discovery campaigns, scientists often work with multiple generation inputs but can only validate a limited number of samples experimentally. In this setting, the confidence in valid hits provided by \name~serves as a practical heuristic for budget allocation—deciding how many candidates to synthesize per task.  
Given $K$ inputs (tasks) and a total budget $B$, we fix a maximum size $N$ for each input; then, \name~suggests the smallest subset which contains a hit for each input at a confidence level $\alpha$. We then vary the value of $\alpha$ until the budget is exhausted. Finally, we estimate the fraction of tasks whose output set contains a hit via $(1-\alpha)$ minus the fraction of empty sets (i.e., when \name~fails to certify a hit at level $\alpha$). The exact  procedure is described in Appendix \ref{app:bud_over}.  

In Figure~\ref{fig:figure5}b, we compare predicted and realized positives under different budget levels (measured as multiples of the number $K$ of inputs). Across both CMO and SBDD, the realized fraction of actives consistently exceeds our estimates, showing the reliability and practical utility of \name. Complementing the certification and design results, this budge allocation perspective confirms the robust empirical performance of \name~when coupled with more complex, albeit heuristic, applications. Of course, we acknowledge that this is only a heuristic approach without formal theoretical guarantees, and the application of \name~in such tasks warrants careful future study.

\vspace{-0.8em}
\section{Conclusion}
\vspace{-0.9em}
In this work, we introduced \name, a model-agnostic framework that delivers finite-sample guarantees for conditional generative models. By re-weighting calibration data with density ratios, \name~corrects the distributional shift from historical compounds to model-generated candidates, thereby removing the need for an external experimental oracle, an assumption required by all existing conformal-prediction approaches in generative modeling. Building on these adjusted weights, we derive a nested conformal p-value that certifies the probability of sampling at least one viable molecule. Across two standard benchmarks for optimization of constrained molecules and structure-based design, and across a wide range of methods and budget regimes, \name~consistently provides valid coverage guarantees while maintaining compact certified sets. Our results establish \name~as a principled and practical framework for both certification and design, enabling reliable generative modeling under stringent resource constraints.

\paragraph{Limitations}
The coverage guarantee relies on density-ratio estimates, which can be noisy when the calibration set is small or the feature extractor is poorly aligned with the target domain. Our experiments focus on small molecules; extending the approach to proteins or other macromolecules with larger, more structured generative space will require additional work. Validation currently relies on in-silico oracles; without wet-lab confirmation, transferability remains to be seen. Demonstrating robustness across broader chemical and experimental settings is a key direction for future research.

\section{Ethics Statement}

Because the framework is model-agnostic and targets conditional molecule generation, we expect it to benefit diverse discovery pipelines in chemistry, biology, and materials science. The work builds on long-standing practices in molecular modeling, so we do not anticipate negative societal consequences. However, we acknowledge potential dual use, such as designing harmful compounds, and responsible use will require oversight and access controls.

\paragraph{LLM Usage} The use of LLMs in this work is limited to polishing the writing and assisting with code-related tasks.   

\section{Reproducibility Statement }
All experimental settings are described in Section~\ref{sec:tasks_models}, with an extended discussion of implementation details in Appendix~\ref{app:exp_set}. Information on compute resources and runtime is provided in Section~\ref{subsec:compute}. The code, datasets along with the config files used for experiments in this work are included at \url{https://github.com/siddharthal/CONFHIT-Conformal-Generative-Design-with-Oracle-Free-Guarantees} and are further detailed in Appendix~\ref{sec:code}.

\bibliographystyle{iclr2026_conference}
\bibliography{references}


\appendix

\section{Additional discussion}

\subsection{Non-randomized conformal p-value}
\label{app:subsec_nonrandom_pvalue}

Let $\Pi$ be the collection of all permutations of $\{1,\dots,n_0,n+1,\dots,n+k\}$. 
For any permutation $\pi\in\Pi$, we denote the score obtained by permuting the observed features by $\pi$ as
\vspace{-0.5em}
\$
V(\pi;\bX) = V(X_{\pi(1)},\dots,X_{\pi(n_0)},X_{\pi(n+1)},\dots,X_{\pi(n+k)}).
\$
We additionally  denote the identity mapping as $\pi_0$, i.e., $\pi_0(i)=i$ for any $i$. 
Finally, we define 
\@\label{eq:def_pval_all}
p_k = \frac{ \sum_{\pi\in\Pi}  \bar{w}(\pi;\bX) \ind\{V(\pi_0;\bX) \leq V(\pi;\bX)\}}{\sum_{\pi\in\Pi} \bar{w}(\pi;\bX)}.
\@
Here,  
$
\bar{w}(\pi;\bX) = \prod_{j=1}^k w(X_{\pi(n+j)}),
$ 
the joint likelihood ratio after permutation $\pi$, 
where $w(x)$ is the density ratio.
In a nutshell, $p_k$ considers all permutations of the inactive calibration samples and the test samples, and compute the weighted rank of the score computed with observed data, $V(\pi_0;\bX)$, among all scores with permuted data. 

The validity of the deterministic p-value is established in the following theorem, whose proof is in Appendix~\ref{app:thm_pval_dtm}. 

\begin{theorem}\label{thm:pval_dtm}
    Under the covariate shift assumption, it  holds  for any fixed $t\in [0,1]$ that 
    \$
    \PP\big(~p_k \leq t\biggiven \text{max}_{1\leq j\leq k}Y_{n+j}=0\big)  \leq t.
    \$
\end{theorem}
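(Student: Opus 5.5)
We will prove Theorem~\ref{thm:pval_dtm} through the weighted exchangeability of the inactive calibration data together with the first $k$ test samples, using the standard argument of \citet{tibshirani2019conformal} for conditioning on the unordered multiset.

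\textbf{Step 1: joint density under the null.} Condition on $\cI_0$ (equivalently on $n_0$) and on the event $H_k=\{\max_{j\le k}Y_{n+j}=0\}$. The calibration points $X_1,\dots,X_{n_0}$ with $Y_i=0$ are i.i.d.\ from $P_{X\mid Y=0}$. The null test points $X_{n+1},\dots,X_{n+k}$ are i.i.d.\ from $Q_{X\mid Y=0}$, independently of the calibration points. Because $\ud Q/\ud P(x,y)=w(x)$, we have
\[
\ud Q_{X\mid Y=0}/\ud P_{X\mid Y=0}(x)=c\,w(x),
\]
where $c=P(Y=0)/Q(Y=0)$ is a constant (this is presumably the content of Lemma~\ref{lem:w0}). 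Writing $f_0$ for the density of $P_{X\mid Y=0}$, the conditional joint density of $\bX$ is
\[
c^k\prod_{i=1}^{n_0}f_0(x_i)\prod_{j=1}^k f_0(x_{n+j})\,w(x_{n+j}).
\]
This equals $c^k\prod_{\text{all }\ell}f_0(x_\ell)\cdot\bar w(\pi_0;\bx)$. The first factor is permutation invariant, and the constant $c^k$ will cancel under normalization.

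\textbf{Step 2: conditioning on the multiset.} Let $E$ be the event that the unordered multiset of $\{X_1,\dots,X_{n_0},X_{n+1},\dots,X_{n+k}\}$ equals a given value $e$. By Step 1, conditional on $E$, the observed ordering equals $\pi(\bx)$ for a fixed realization $\bx$ with probability proportional to $\bar w(\pi;\bx)$, i.e.\ with probability $\bar w(\pi;\bx)/\sum_{\pi'\in\Pi}\bar w(\pi';\bx)$. Ties can be handled by treating $\Pi$ as labelled permutations, which leaves the weights consistent. Consequently, conditional on $E$, the observed score $V(\pi_0;\bX)$ is distributed as the discrete weighted distribution
\[
\sum_{\pi}\frac{\bar w(\pi;\bx)}{\sum_{\pi'}\bar w(\pi';\bx)}\,\delta_{V(\pi;\bx)}.
\]
The key check is that $p_k$ is invariant under relabelling: for any $\sigma\in\Pi$, $p_k$ computed on $\bX\circ\sigma$ depends on $\sigma$ only through $V(\sigma;\bx)$, because the set $\{(\bar w(\pi\circ\sigma),V(\pi\circ\sigma))\}_{\pi}$ equals $\{(\bar w(\pi),V(\pi))\}_{\pi}$. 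Hence $p_k=G(V(\pi_0;\bX))$, where
\[
G(v)=\frac{\sum_\pi\bar w(\pi;\bx)\ind\{v\le V(\pi;\bx)\}}{\sum_\pi\bar w(\pi;\bx)}
\]
is the weighted survival function of the conditional law in Step 2.

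\textbf{Step 3: super-uniformity.} For a random variable $S$ with survival function $G(v)=\PP(S\ge v)$, the standard fact $\PP(G(S)\le t)\le t$ holds for any discrete law. It follows by taking $v^*=\inf\{v: G(v)\le t\}$ and noting that $\{G(S)\le t\}\subseteq\{S\ge v^*\}$ or a right-limit version, whose probability is at most $t$. Applying this conditionally on $E$ gives $\PP(p_k\le t\mid E,H_k,\cI_0)\le t$, and the tower property removes the conditioning.

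I expect the main obstacle to be Step 1. One must justify that conditioning on the null $H_k$ and on $\cI_0$ yields exactly the density ratio $w$ up to a constant, with independence preserved. This requires that the test labels be conditionally independent across $j$ and that the covariate shift hold jointly in $(x,y)$, so that $Q(\cdot\mid Y=0)\propto w\,P(\cdot\mid Y=0)$. Handling the case where $w$ is unbounded or $\sum_\pi\bar w=0$ is a minor measure-theoretic detail.
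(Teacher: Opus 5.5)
Your proof is correct and follows essentially the same route as the paper: weighted exchangeability conditional on the unordered set of values (Proposition~\ref{prop:weighed_exch}), the observation that $p_k$ is the weighted survival function of the permuted scores evaluated at the observed score, and the super-uniformity step, where the paper's cutoff permutation $\pi_t$ plays the role of your $v^*$. Your explicit constant $c=P(Y=0)/Q(Y=0)$ in the conditional density ratio is actually more precise than the statement of Lemma~\ref{lem:w0}, which omits it; as you note, it cancels after normalization, so the argument is unaffected.
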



\subsection{Joint weighted exchangeability}
\label{app:subsec_wexch}

In this part, we characterize the exchangeability structure of the inactive calibration data $\{X_i\}_{i\in \cI_0}$ and the test samples $\{X_{n+j}\}_{j=1}^k$ under the null case
\$
H_k\colon \max_{1\leq j\leq k} Y_{n+j} = 0.
\$
Here for generality, we use the index $k$ to indicate that $\{X_{n+j}\}_{j=1}^k$ might be a subset of the original proposal. 
The following lemma characterizes the distribution of these data conditional on the labels. Its proof simply applies the Bayes' rule, whose proof is deferred to Appendix~\ref{app:lem_w0} for completeness. 

\begin{lemma}\label{lem:w0}
    Conditional on $\{Z_i\}_{i=1}^{n}\cup\{Z_{n+j}\}_{j\geq 1}$ as well as $\max_{1\leq j\leq k}Y_{n+j}=0$, the features $\{X_i\}_{i\in\cI_0}$ are i.i.d.~from $P_0:= P_{X\given Y=0}$, whereas   $\{X_{n+j}\}_{j=1}^k$ are i.i.d.~from $Q_0:= Q_{X\given Y=0}$. Furthermore, $\ud Q_0/\ud P_0(x)=w(x)$, where $w(x)=\frac{\ud Q}{\ud P}(x,y) = \frac{\ud Q_X}{\ud P_X}(x)$ is the density ratio. 
\end{lemma}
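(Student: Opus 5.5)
The claim is a Bayes-rule computation on the joint law of the data. I would start from the sampling model. The pairs $Z_i=(X_i,Y_i)$, $i\le n$, are i.i.d.\ from $P$, and the generated pairs $Z_{n+j}=(X_{n+j},Y_{n+j})$, $j\ge1$, are i.i.d.\ from $Q$, independently of the calibration data. The conditioning in the statement is on the labels: the calibration labels $\{Y_i\}_{i=1}^n$, which fix the set $\cI_0$ and hence $n_0$, and the event $\max_{1\le j\le k}Y_{n+j}=0$, which is equivalent to $Y_{n+1}=\dots=Y_{n+k}=0$. This event is a product event across independent coordinates.

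The first step uses independence of all pairs together with the product form of the conditioning event. Conditional on the labels, the joint law of the features factorizes over indices. Each calibration feature with $Y_i=0$ then has the law of $X\given Y=0$ under $P$, namely $P_0$. Each test feature $X_{n+j}$ with $j\le k$ has the law of $X\given Y=0$ under $Q$, namely $Q_0$.

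Formally, take any bounded measurable $f_i,g_j$. I would compute
\[
\EE\Big[\textstyle\prod_{i\in\cI_0}f_i(X_i)\prod_{j\le k}g_j(X_{n+j})\,\Big|\,\text{labels}\Big]
=\textstyle\prod_{i\in\cI_0}\EE_{P}[f_i(X)\given Y=0]\prod_{j\le k}\EE_Q[g_j(X)\given Y=0],
\]
using independence across indices. Features with $Y_i=1$ and test indices $j>k$ simply integrate out.

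The second step identifies the density ratio. Covariate shift says $\ud Q(x,y)=w(x)\,\ud P(x,y)$. Integrating out $y$ gives $\ud Q_X/\ud P_X=w$. Next compute $Q(Y=0)=\int w(x)\,\ud P(x,0)=\EE_P[w(X)\ind\{Y=0\}]$. Then for measurable $A$,
\[
Q_0(A)=\frac{\int_A w(x)\,\ud P(x,0)}{Q(Y=0)}=\frac{P(Y=0)}{Q(Y=0)}\int_A w\,\ud P_0 .
\]
So $\ud Q_0/\ud P_0 \propto w$, with constant $c=P(Y=0)/Q(Y=0)$.

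The main obstacle is this constant. The lemma asserts equality $\ud Q_0/\ud P_0=w$ exactly, which holds only if $c=1$. In general the equality holds up to the normalizing factor $c$. I would point out that every downstream use, namely the weighted p-values \eqref{eq:def_pval_all} and \eqref{eq:def_pval_rand}, is invariant to rescaling $w$ by a constant, because each $\bar w(\pi;\bX)$ acquires the same factor $c^k$, which cancels between numerator and denominator. I would therefore state the result as "$w$ is the density ratio up to a constant, which suffices." The remaining conditions are positivity, $P(Y=0)>0$ and $Q(Y=0)>0$ so that the conditioning is well defined, and absolute continuity $Q\ll P$.
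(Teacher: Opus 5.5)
Your proof is correct and follows essentially the same route as the paper: independence of the pairs gives the conditional i.i.d.\ structure with laws $P_0$ and $Q_0$, and Bayes' rule identifies the ratio. One difference favors you: the paper's display $\frac{\ud Q_0}{\ud P_0}(x)=\frac{Q(Y=0\given X=x)}{P(Y=0\given X=x)}\frac{\ud Q_X}{\ud P_X}(x)$ silently drops the factor $P(Y=0)/Q(Y=0)$ that you found, so your version $\ud Q_0/\ud P_0=c\,w$ with $c=P(Y=0)/Q(Y=0)=1/\EE_{P_0}[w(X)]$ is the accurate one, and exact equality holds only when $P(Y=0)=Q(Y=0)$. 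Your remark that every $\bar w(\pi;\bX)$ is a product of exactly $k$ weights, so $c^k$ cancels in Proposition~\ref{prop:weighed_exch} and in the p-values, is exactly what makes the lemma suffice for its downstream uses.
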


Lemma~\ref{lem:w0} states that, conditional on all labels, on the ``null'' event that $H_k$ is true, the ``marginal'' density ratio $w(x)$ suffices to adjust for distribution shifts between the inactive calibration data in $\cI_0$ and the newly generated data. We discuss its implications on density ratio estimation below. 
\begin{remark}
    When  $w(\cdot)$ is unknown, Lemma~\ref{lem:w0} justifies using a random subset of the labeled data and an independent set of generated samples (no matter whether they are active) to estimate $w(\cdot)$. An alternative approach is to directly estimate the density ratio between $\{X_i\}_{i\in \cI_0}$ and $\{X_{n+j}\}_{j=1}^k$. However, this approach leads to quite limited sample sizes when $k$ is small.
\end{remark}
Lemma~\ref{lem:w0} allows to characterize the exchangeability structure among calibration and test points, extending the weighted exchangeability notion~\citep{tibshirani2019conformal} to multiple test points. It is a consequence of a conditional permutation-style result which extends foundational results in conformal prediction~\citep{vovk2005algorithmic,tibshirani2019conformal}. 

Let $[\bX]=[X_1,\dots,X_{n_0},X_{n+1},\dots,X_k]$ be the unordered set of the features, and fix any value $[\bx]=[x_1,\dots,x_{n_0},x_{n_0+1},\dots,x_{n_0+k}]$. Then, given $[\bX]=[\bx]$, the only randomness that remains lies in which value in $\bx$ corresponds to each $X_i$. Proposition~\ref{prop:weighed_exch} is a direct implication of Lemma~\ref{lem:w0}, extending the weighted exchangeability notion in~\cite{tibshirani2019conformal} to multiple test points. 

\begin{prop}\label{prop:weighed_exch}
Let $\Pi$ be the collection of all permutations of $[n_0+k]$. Then, for any fixed values $x_1,\dots,x_{n_0},x_{n_0+1},\dots,x_{n_0+k}$, and any permutation $\pi\colon [n_0+k]\to [n_0+k]$, 
    \$ \PP^{\textnormal{cond}}\Big(X_1=x_{\pi(1)},\dots,X_{n_0+k}=x_{\pi(n_0+k)} \Biggiven [\bX]=[\bx] \Big) = \frac{ \bar{w}(\pi ;x_{1:n_0+k})}{\sum_{\pi'\in\Pi} \bar{w}(\pi';x_{1:n_0+k})},
    \$
    where we define $\bar{w}(\pi;x_{1:n_0+k}):=\prod_{j=1}^k w(x_{\pi(n_0+j)})$, and $\PP^{\textnormal{cond}}(\cdot)$ denotes the distribution after conditioning on  $\{Z_i\}_{i=1}^{n}\cup\{Z_{n+j}\}_{j\geq 1}$ as well as $\max_{1\leq j\leq k}Y_{n+j}=0$. 
\end{prop}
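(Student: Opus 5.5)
The plan is to write out the joint density of the ordered feature vector under $\PP^{\cond}$ and then condition on the unordered multiset via Bayes' rule. By Lemma~\ref{lem:w0}, under $\PP^{\cond}$ the inactive calibration features $X_1,\dots,X_{n_0}$ (relabelled so that $\cI_0=\{1,\dots,n_0\}$, which is harmless because we condition on the labels) are i.i.d.\ from $P_0$. The test features $X_{n+1},\dots,X_{n+k}$, which we index as positions $n_0+1,\dots,n_0+k$, are i.i.d.\ from $Q_0$, and the two groups are independent. Writing $p_0$ for a density of $P_0$ with respect to a dominating measure $\mu$, we have $\ud Q_0/\ud P_0 = w$, so $Q_0$ has density $w\,p_0$. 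Hence the joint density of $(X_1,\dots,X_{n_0+k})$ at $(y_1,\dots,y_{n_0+k})$ is
\$
f(y_{1:n_0+k}) = \prod_{i=1}^{n_0+k} p_0(y_i)\cdot \prod_{j=1}^k w(y_{n_0+j}).
\$

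Next I condition on the event $[\bX]=[\bx]$. Given the unordered multiset, the only remaining randomness is which ordering of $\bx$ is realized, i.e.\ which permutation $\pi$ gives $X_i=x_{\pi(i)}$. By the standard argument for exchangeable-type likelihoods (as in \citet{tibshirani2019conformal}), the conditional probability of the ordering $(x_{\pi(1)},\dots,x_{\pi(n_0+k)})$ is proportional to $f(x_{\pi(1)},\dots,x_{\pi(n_0+k)})$. The normalizing constant is the sum of $f$ over all $\pi'\in\Pi$. The first factor $\prod_i p_0(x_{\pi(i)})=\prod_i p_0(x_i)$ does not depend on $\pi$, so it cancels in the ratio. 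What remains is exactly $\bar w(\pi;x_{1:n_0+k})/\sum_{\pi'}\bar w(\pi';x_{1:n_0+k})$.

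The one point needing care, and the main obstacle, is making this conditioning rigorous when the $x_i$ are not distinct or when the distributions are continuous, since the event $[\bX]=[\bx]$ then has probability zero. I would handle it as follows.
\begin{enumerate}
\item Interpret the statement via a regular conditional distribution. For any bounded measurable $g$ and any symmetric measurable $h$, show that $\EE^{\cond}[g(\bX)h(\bX)]=\EE^{\cond}\big[h(\bX)\sum_{\pi}g(\bX_\pi)\,\bar w(\pi;\bX)/\sum_{\pi'}\bar w(\pi';\bX)\big]$.
\item Prove this identity by the change of variables $\bX\mapsto\bX_\pi$, which preserves $\mu^{\otimes(n_0+k)}$, and then average over $\pi$.
\end{enumerate}
When the $x_i$ have ties, several permutations give the same ordering, and both sides then count the same tuple with multiplicity. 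The stated formula should therefore be read as a probability over permutations $\pi$, which is consistent with how it is used in the p-value.
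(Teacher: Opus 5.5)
Your proof is correct and follows the paper's route. The paper gives no separate proof: it calls the proposition a direct consequence of Lemma~\ref{lem:w0}, meaning you factor the conditional likelihood as $\prod_i p_0(x_i)\cdot\prod_{j} w(x_{n_0+j})$ and normalize over orderings as in \citet{tibshirani2019conformal}; your change-of-variables identity and your caveat about ties supply rigor the paper omits. One harmless nit, inherited from Lemma~\ref{lem:w0}: strictly $\ud Q_0/\ud P_0 = w\cdot P(Y=0)/Q(Y=0)$ rather than $w$, but this constant enters every $\bar w(\pi;\cdot)$ as the same $k$-th power and cancels in the ratio.
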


Proposition~\ref{prop:weighed_exch} describes the probability of $(X_1,\dots,X_{n_0+k})$ taking on the value of any permutation of $\bx$ conditional on the unordered set of their realized values. 
Such a conditional permutation-style result is foundational for many results in conformal inference, thereby connecting our p-value to existing conformal p-values. Specifically, when $w(\cdot)\equiv 1$ and $m=1$, it reduces to the standard exchangeability condition, and the p-value~\eqref{eq:def_pval_all} coincides with the classical conformal p-value.
In general with $m=1$, it reduces to the weighted exchangeablity  studied in~\citep{tibshirani2019conformal}, and our p-value~\eqref{eq:def_pval_all} coincides with the weighted conformal p-value~\citep{tibshirani2019conformal,jin2023model}. 
We discuss the connections in the next subsection.

\subsection{Conformal p-value for one test point}
\label{app:subsec_cf_pval_single}
Our p-values build on conformal prediction ideas~\citep{vovk2005algorithmic}. 
To provide more contexts, we briefly discuss the widely used conformal p-value for one test point, the key concept in deriving conformal prediction sets. Given any  function $s\colon \cX\times\cY\to \RR$ (often referred to as ``nonconformity score'') whose training process is independent of the calibration data $\{(X_i,Y_i)\}_{i=1}^n$ and a new test point $X_{n+1}$, the conformal p-value is defined as 
\@\label{eq:cp_pval}
p(y) = \frac{1+\sum_{i=1}^n \ind\{s(X_i,Y_i)\leq s(X_{n+1},y)\}}{n+1}.
\@
When $\{(X_i,Y_i)\}_{i=1}^{n+1}$ are exchangeable (such as i.i.d.) across $i\in [n+1]$, one can show that $p(Y_{n+1})$ is dominated by $\text{Unif}[0,1]$. 
This property is key in constructing prediction sets for the unknown $Y_{n+1}$. 
Recently, conformal p-values based on exchangeability are used in the context of multiple testing for detecting outliers~\citep{bates2023testing} and identifying test instances with desirable label values~\citep{jin2023selection}. Extensions to covariate shift settings are also studied for individual p-value~\citep{tibshirani2019conformal} and multiple testing~\citep{jin2023model}. 

Here we briefly discuss Remark~\ref{rmk:extension_pval}. Note that when $m=1$, $n:=n_0$ and $w(x)\equiv 1$ in~\eqref{eq:cp_pval}, setting 
\$
V(x_1,\dots,x_{n_0},x_{n+1}) = s(x_{n+1},0),
\$
where we use $y=0$ as a placeholder, 
our p-value~\eqref{eq:def_pval_all} reduces to $p(0)$ in~\eqref{eq:cp_pval}. 

\subsection{Connection to outlier detection under covariate shift}
\label{app:subsec_outlier_cov_shift}

A more explicit connection is that of our p-value to the outlier detection p-value. 
Formally, for a test point $X_{n+1}$ independent of calibration ``inliers'' $\{X_i\}_{i=1}^n\iid P$, \cite{bates2023testing} proposes using the conformal p-value (the signs are flipped to be consistent with the current notations)
\$
p = \frac{1+\sum_{i=1}^n \ind\{s(X_i )\leq s(X_{n+1} )\}}{n+1}
\$
to test the null hypothesis $H_0\colon X_{n+1}\sim P$, 
where $s\colon \cX\to \RR$ is a score indicating how likely a value $x$ is an outlier compared with the normal values under $P$. Under the null hypothesis $H_0$, it can be shown that the p-value $p$ is dominated by Unif$[0,1]$. 

\cite{jin2023model} extend this problem to outlier detection under covariate shift, where $H_0\colon X_{n+1}\sim Q$ for some distribution $Q$ obeying $\ud Q/\ud P(x)=w(x)$ for a known density ratio $w(\cdot)$. The corresponding weighted conformal p-value is 
\@\label{eq:def_pw}
p_w = \frac{w(X_{n+1})+\sum_{i=1}^n w(X_i)\ind\{s(X_i )\leq s(X_{n+1} )\}}{\sum_{i=1}^{n+1}w(X_{i})}.
\@
In this case, our p-value~\eqref{eq:def_pval_all} reduces to~\eqref{eq:def_pw} by setting 
\$
V(x_1,\dots,x_{n_0},x_{n+1}) = s(x_{n+1} ).
\$
That is, the score function in our setting can be viewed as an ``outlier'' score for the new test point. 

Again for generality, we use the index $k$ (instead of $N$) to indicate that $\{X_{n+j}\}_{j=1}^k$ might be a subset of the original proposal. 
Finally, we remark that by Lemma~\ref{lem:w0}, our null hypotheses $H_k\colon \max_{1\leq j\leq k}Y_{n+j}=0$ implies $X_{n+j}\iid Q_0$ where $\ud Q_0/\ud P_0(x)=w(x)$ and $\{X_i\}_{i\in \cI_0}\iid P_0$. Thus, our p-value $p_k$ can be viewed as testing the global null hypotheses 
\$
\tilde{H}_k\colon \{X_{n+j}\}_{j=1}^k \iid Q_0
\$
using calibration ``inliers'' $\{X_i\}_{i\in \cI_0}\iid P_0$. Therefore, $p_k$ can be viewed as extending $p_w$ in~\eqref{eq:def_pw}  to simultaneous inference for multiple test points.

\subsection{Choice of score function}
\label{app:subsec_score}
Drawing inspirations from classical permutation tests,
we suggest several natural choices:
\begin{itemize}[leftmargin=0.25in]
    \setlength\itemsep{0.25em} 
    \item \emph{Max-pooling.} Since we aim to judge whether \emph{any one} of the test samples is promising enough, a natural choice is $V(x_{1:n_0},x_{n+1:n+k}) = \max_{1\leq j\leq k}  \hat\mu(x_{n+j})$, the maximum predicted test score.
    \item \emph{Sum-of-prediction}. Inspired by Fisher's randomization test in causal inference~\citep{fisher1971design}, we can compare the average predicted values between test and calibration data. Up to permutations, it is equivalent to setting $V(x_{1:n_0},x_{n+1:n+k}) = \sum_{j=1}^k \hat\mu(x_{n+j})$. 
    \item \emph{Rank-sum statistic}. Inspired by Wilcoxon's rank-sum test, we compare the sum of ranks of the test points. Up to permutations, we set $V(x_{1:n_0},x_{n+1:n+k}) = \sum_{j=1}^k R_{n+j}$, where $R_{n+j}$ is the rank of $\hat\mu(X_{n+j})$ among $\{\hat\mu(X_i)\}_{i=1}^{n_0}\cup\{\hat\mu(X_{n+\ell})\}_{\ell=1}^k$ (in an ascending order).  
    \item \emph{Likelihood ratio statistic.} Inspired by the likelihood ratio test, we observe that $\hat\mu(x)$ can be viewed as an estimator $\PP(Y=1\given X=x)$, thus $\hat\mu(x)/(1-\hat\mu(X_{n+j}))$ serves as an estimator for the likelihood ratio for testing $Y_{n+j}=0$ given $X_{n+j}$. Accordingly, we consider the joint likelihood ratio for the test points and set $V(x_{1:n_0},x_{n+1:n+k}) = \sum_{j=1}^k \log(\hat\mu(x_{n+j})/(1-\hat\mu(x_{n+j}))$. 
    
\end{itemize}

\subsection{Non-split version of \name}
\label{app:sec_no_split}

In the main text, we discuss \name~when the conformity score is based on a pre-trained model $\hat\mu\colon \cX\to \RR$, as standard in split conformal prediction~\citep{vovk2005algorithmic}. This makes our framework naturally compatible with any pre-trained prediction models in the literature (such as those in our experiments). However, practitioners may want to involve all the labeled data in both model training and calibration of p-values. In this part, we present an extension of \name~where the labeled inactive data are used in both training and calibration, yet still maintaining type-I error control. Notably, this feature is unique to our problem (since we condition on the labels on the null event). 

Formally, we let $\cA$ be any generic training algorithm that produces a prediction model $\cA(Z_1,\dots,Z_N)$ based on labeled input data $(Z_1,\dots,Z_N)$ with $Z_i=(X_i,Y_i)$. 
We assume that the training process of $\cA$ is permutation invariant to its input data, which is satisfied by many commonly used algorithms. 

We first consider the certification problem. 
Recall that the inactive labeled data is $\{(X_i,0)\}_{i=1}^{n_0}$, the active labeled data, without loss of generality, is $\{(X_i,1)\}_{i=n_0+1}^n$, and the test data is $\{X_{n+j}\}_{j=1}^k$. We consider the imputed data $\tilde{Z}_{n+j}=(X_{n+j},0)$ for $1\leq j\leq k$, and 
\$
\tilde\cD = (Z_1,\dots,Z_n, \tilde{Z}_{n+1},\dots,\tilde{Z}_{n+k}). 
\$
Then, we let $\hat\mu (\cdot) = \cA(\tilde{\cD})$, i.e., the trained model with the imputed data. 
Finally, we define the conformity scores $V$ as well as the p-value in the same way as in Section~\ref{subsec:certify}. 
These p-values for each $k\in \NN^+$ can then be combined in the same way as in Section~\ref{subsec:design} to address the design problem.

It is straightforward to see that our p-value remains valid conditional on $\max_{1\leq j\leq k}\{Y_{n+j}\} = 0$: Since the trained model $\hat\mu$ is invariant to the permutation of data in $\tilde\cD$, under any permutation over $\{1,\dots,n_0,n+1,\dots,n+k\}$, the trained model remains the same, and the weighted exchangeability between the conformity scores   follows exactly the same arguments. 


\section{Technical proof}

\subsection{Proof of Theorem~\ref{thm:nested_testing}}
\label{app:thm_nested_testing}

\begin{proof}[Proof of Theorem~\ref{thm:nested_testing}]
        We define $K_0 = \max\{k\leq N \colon \max_{1\leq j\leq k} Y_{n+j}=0\}$, i.e., the last instance where all earlier molecules are non-positive. 
We also let $K_0=0$ when $Z_{n+1}=1$ so that $K_0$ is well-defined. 
Note that $K_0$ is measurable with respect to $\{Y_{n+j}\}_{j\geq 1}$. 
Conditional on $\{Y_{n+j}\}_{j\geq 1}$, $K_0$ thus becomes deterministic, and by the definition of $\hat{N}$, we have 
\@\label{eq:pval_1}
 \PP\Big(\max_{1\leq j\leq \hat{N}} Y_{n+j}=0 \Biggiven   \{Y_{n+j}\}_{j\geq 1}\Big)  
&=\PP\Big(  \hat{N}>0, ~\max_{j\leq \hat{N}} Z_{n+j}  = 0 \Biggiven \{Y_{n+j}\}_{j\geq 1} \Big)\notag  \\
&= \PP\Big( 0<\hat{N}\leq K_0\Biggiven \{Z_{n+j}\}_{j\geq 1} \Big) \ind\{K_0>0\}  \tag{*}.
\@
Since the p-values are monotone, we know $\hat{N}\leq K_0$ is equivalent to $p_{K_0}\leq \alpha$, hence 
\$
\eqref{eq:pval_1} 
&= \PP\Big(p_{K_0}\leq \alpha\Biggiven \{Z_{n+j}\}_{j\geq 1}\Big)\ind\{K_0>0\} \\ 
&= \PP\Big(p_{K_0}\leq \alpha\Biggiven \{Z_{n+j}\}_{j\geq 1},~ \max_{j\leq K_0} Z_{n+j}=0\Big) \ind\{K_0>0\} \leq \alpha.
\$
Here the second line uses the fact that $\max_{j\leq K_0} Z_{n+j}$ is measurable with respect to  $\{Z_{n+j}\}_{j\geq 1}$, as well as  the validity condition (ii). 
Finally, by the tower property, we obtain the desired result. 
\end{proof}

\subsection{Proof of Lemma~\ref{lem:w0}}
\label{app:lem_w0}

\begin{proof}[Proof of Lemma~\ref{lem:w0}]
    Conditional on all calibration and test labels as well as $\max_{1\leq j\leq k}Y_{n+j}=0$, it is clear that the features in $\cI_0$ are i.i.d.~from $P_0:= P_{X\given Y=0}$, whereas the features $\{X_{n+j}\}_{j=1}^k$ are i.i.d.~from $Q_0:= Q_{X\given Y=0}$. By the Bayes' rule, the two distributions are related by 
\$
\frac{\ud Q_0}{\ud P_0}(x) = \frac{Q(Y=0\given X=x)}{P(Y=0\given X=x)}\frac{\ud Q_X}{\ud P_X}(x) = w(x),
\$
where $P_X$, $Q_X$ are the distribution of $X$ under $P$ and $Q$, respectively. Here, we used the fact that $Q(Y=0\given X=x)=P(Y=0\given X=x)$ due to the covariate shift assumption.  
\end{proof}

\subsection{Proof of Theorem~\ref{thm:pval_dtm}}
\label{app:thm_pval_dtm}

\begin{proof}[Proof of Theorem~\ref{thm:pval_dtm}]
    Throughout we condition on   $\{Z_i\}_{i\geq 1}$ and $\max_{j\leq k} Z_{n+j}=0$, and let $\PP^\cond$ denote the distribution conditional on these information. 
For any fixed vector $\bx\in \cX^{n_0+k}$, conditional on $ [\bX]=[\bx]$, we let $p_k(\pi;\bx)$ be the p-value obtained when the data is realized as $(X_1,\dots,X_{n_0},X_{n+1},\dots,X_{n+k}) = (x_{\pi(1)},\dots,x_{\pi(n_0)},x_{\pi(n+1)},\dots,x_{\pi(n+k)})$. We assume $V(\pi;\bX)$'s have no ties almost surely for simplicity, but all the arguments go through in general.

Under the conditions of Lemma~\ref{lem:w0}, Proposition~\ref{prop:weighed_exch} implies that 
\$
&\PP^{\cond}(p_k = p_k(\pi;\bx) \given [\bX]=[\bx]) \\ &= \PP^{\cond}(\bX = (x_{\pi(1)},\dots,x_{\pi(n+k)})\given [\bX]=[\bx])  = \frac{\bar{w}(\pi;\bx)}{\sum_{\pi'\in\Pi}\bar{w}(\pi';\bx)}.
\$
Therefore, noting that by definition 
\$
p_k(\pi;\bx) = \frac{ \sum_{\tilde\pi \in\Pi}  \bar{w}(\tilde\pi\circ \pi;\bx) \ind\{V(\tilde\pi\circ\pi;\bx) \geq V( \pi;\bx)\}}{\sum_{\tilde\pi\in\Pi} \bar{w}(\tilde\pi;\bx)} = \frac{ \sum_{\tilde\pi \in\Pi}  \bar{w}(\tilde\pi ;\bx) \ind\{V(\tilde\pi ;\bx) \geq V( \pi;\bx)\}}{\sum_{\tilde\pi\in\Pi} \bar{w}(\tilde\pi;\bx)}  ,
\$ 
and letting $\pi_t$ be the permutation with the largest $V(\pi_t;\bx)$ such that $p_k(\pi_t;\bx)\leq t$, 
we have  
\$
 \PP^\cond (p_k \leq t\given [\bX]=[\bx])  
&= \sum_{\pi\in\Pi} \ind\{p_k(\pi;\bx) \leq t\}\cdot \PP^{\cond}\big(p_k = p_k(\pi;\bx) \biggiven [\bX]=[\bx]\big) \\ 
&=\sum_{\pi\in\Pi} \ind\{p_k(\pi;\bx) \leq t\}\cdot   \frac{\bar{w}(\pi;\bx)}{\sum_{\pi'\in\Pi}\bar{w}(\pi';\bx)} \\  
&=\sum_{\pi\in\Pi} \ind\{V(\pi;\bx) \geq V(\pi_t;\bx)\}\cdot   \frac{\bar{w}(\pi;\bx)}{\sum_{\pi'\in\Pi}\bar{w}(\pi';\bx)} = p_k(\pi_t;\bx)   \leq t. 
\$
Here the first equality uses the conditional distribution described above, the third equality uses the monotonicity of $p_k(\pi;\bx)$ in $V(\pi;\bx)$ and the definition of $\pi_t$, and the last equality holds by definition.
Marginalizing over $[\bX]$ (but still conditional on $\{Z_i\}_{i=1}^{n+k}$ and $\max_{1\leq j\leq k}Y_{n+j}=0$) then leads to the desired result.
\end{proof}

\subsection{Proof of Theorem~\ref{thm:pval_rand}}
\label{app:thm_pval_rand}

\begin{proof}[Proof of Theorem~\ref{thm:pval_rand}]
    As usual, throughout, we condition on $\{Y_i\}_{i\geq 1}$ and $\max_{1\leq j\leq k}Y_{n+j}=0$, and use $\PP^{\cond}$ to denote the distribution given these information. 
    We condition on the event that the unordered set $[\bx]=[\bx]$ for any fixed vector $\bx = (x_1,\dots,x_{n_0},x_{n+1},\dots,x_{n+k})$. 
The labels $\{Z_{i}\}_{i=1}^{n+k}$ are also conditioned on as usual. The randomness then comes from (i) which values in $[\bx]$ correspond to each data point, and (ii) the randomly sampled permutations. 

Let $\hat\pi$ be the unique permutation of $\{1,\dots,n_0,n+1,\dots,n+k\}$ such that $\bX= (x_{\hat\pi(1)},\dots,x_{\hat\pi(n+k)})$. 
Then for any permutation $\pi\in\Pi$, we have 
\$
V(\pi;\bX) = V(\pi\circ\pi^{(0)};\bX) = V(\pi\circ\hat\pi;\bx),
\$
and similarly for $\bar{w}(\pi;\bX)=\bar{w}(\pi\circ \hat\pi;\bx)$. 
By Proposition~\ref{prop:weighed_exch}, we know that $\PP(\hat\pi=\pi\given [\bX]=[\bx]) \propto \bar{w}(\pi;\bx)$, where we recall the definition of $\bar{w}(\pi;\bx) = \prod_{j=1}^k w(x_{\pi(n+j)})$. 
In addition, we have 
\$
p_k^\rand = \frac{\bar{w}(\hat\pi ;\bx)+\sum_{b=1}^B \bar{w}(\hat\pi\circ \pi^{(b)};\bx) \ind\{V(\hat\pi ;\bx)\leq V(\hat\pi\circ \pi^{(b)};\bx)\}}{w(\hat\pi;\bx)+\sum_{b=1}^B w(\hat\pi\circ \pi^{(b)};\bx)},
\$
where $\bx$ is the fixed vector, and the randomness comes from $\hat\pi$ and $\{\pi^{(b)}\}_{b=1}^B$. 

For notational simplicity, we denote $\hat\pi^{(0)}=\hat\pi$, and $\hat\pi^{(b)} = \hat\pi\circ \pi^{(b)}$ for $b=1,\dots,B$. 
We now study the joint distribution of $(\hat\pi^{(0)},\hat\pi^{(1)},\dots,\hat\pi^{(B)})$. For any fixed permutations $\pi_0, \dots,\pi_B\in \Pi$, we note  
\$
& \PP^{\cond}\Big( \hat\pi^{(0)}=\pi_0, \hat\pi^{(1)}=\pi_1,\dots,\hat\pi^{(B)}=\pi_B \Biggiven [\bX]=[\bx]\Big) \\
& = \PP^{\cond}\Big( \hat\pi =\pi_0,  \pi^{(1)}=\pi_0^{-1}\circ \pi_1,\dots,\hat\pi^{(B)}=\pi_0^{-1}\circ \pi_B \Biggiven [\bX]=[\bx]\Big) \\ 
&= \frac{1}{|\Pi|^B} \cdot \PP^{\cond}(\hat\pi=\pi_0\given [\bX]=[\bx]) \propto \bar{w}(\pi_0;\bx),
\$
where $\pi_0^{-1}$ is the inverse permutation of $\pi_0$. Above, the last equality uses the fact that $\pi^{(1)},\dots,\pi^{(B)}$ are i.i.d.~uniformly sampled from $\Pi$ and independent of everything. 
Therefore, when we further conditional on the unordered set of realized values of the permutations $[\hat\pi^{(0)},\hat\pi^{(1)},\dots,\hat\pi^{(B)}] =[\pi_0,\dots,\pi_B]$ for any (fixed) value of $\pi_0,\dots,\pi_B$,  we know that 
\$
&\PP^{\cond}\Big(  (\hat\pi^{(0)},\hat\pi^{(1)},\dots,\hat\pi^{(B)}) = (\pi_{\sigma(0)},\dots,\pi_{\sigma(B)}) \Biggiven [\hat\pi^{(0)},\hat\pi^{(1)},\dots,\hat\pi^{(B)}] =[\pi_0,\dots,\pi_B],~[\bX]=[\bx] \Big) \\
&= \frac{\bar{w}(\pi_{\sigma(0)};\bx)}{\sum_{\sigma\in \tilde\Pi} \bar{w}(\pi_{\sigma(0)};\bx)},
\$
where $\tilde\Pi$ is the collection of all permutations of $\{0,1,\dots,B\}$. This also means that for every $b\in\{0,1,\dots,B\}$, 
\@\label{eq:pk_cond_prob}
\PP^\cond\Big(   \hat\pi^{(0)} = \pi_{b}  \Biggiven [\hat\pi^{(0)},\hat\pi^{(1)},\dots,\hat\pi^{(B)}] =[\pi_0,\dots,\pi_B] \Big) = \frac{\bar{w}(\pi_{b};\bx)}{\sum_{b'=0}^B \bar{w}(\pi_{b'};\bx)}.
\@
On the other hand, for any $\sigma\in \tilde\Pi$, when $(\hat\pi^{(0)},\hat\pi^{(1)},\dots,\hat\pi^{(B)}) = (\pi_{\sigma(0)},\dots,\pi_{\sigma(B)})$,  we have
\@\label{eq:pk_cond_equiv}
p_k^{\cond} &= \frac{\bar{w}(\pi_{\sigma(0)} ;\bx)+\sum_{b=1}^B \bar{w}(\pi_{\sigma(b)};\bx) \ind\{V(\pi_{\sigma(0)} ;\bx)\leq V(\pi_{\sigma(b)};\bx)\}}{\bar{w}( \pi_{\sigma(0)};\bx)+\sum_{b=1}^B \bar{w}(  \pi_{\sigma(b)};\bx)} \notag \\
&= \frac{ \sum_{b=0}^B \bar{w}(\pi_{b};\bx) \ind\{V(\pi_{\sigma(0)} ;\bx)\leq V(\pi_{b};\bx)\}}{  \sum_{b=0}^B \bar{w}(  \pi_{b};\bx)} .
\@
That is, the value of $p_k^{\cond}$ only depends on which value among $(\pi_0,\dots,\pi_B)$ the data permutation $\hat\pi^{(0)}$ takes on. Thus,
\$
&\PP^\cond\Big( p_k^\rand \leq t\Biggiven [\hat\pi^{(0)},\hat\pi^{(1)},\dots,\hat\pi^{(B)}] =[\pi_0,\dots,\pi_B] ,~[\bX]=[\bx]\Big) \\  
&= \sum_{\sigma\in \tilde\Pi} \PP\Big(  (\hat\pi^{(0)},\hat\pi^{(1)},\dots,\hat\pi^{(B)}) = (\pi_{\sigma(0)},\dots,\pi_{\sigma(B)}) \Biggiven [\hat\pi^{(0)},\hat\pi^{(1)},\dots,\hat\pi^{(B)}] =[\pi_0,\dots,\pi_B] \Big) \\ 
&\qquad \qquad \qquad \times \ind\Bigg\{ \frac{ \sum_{b=0}^B \bar{w}(\pi_{b};\bx) \ind\{V(\pi_{\sigma(0)} ;\bx)\leq V(\pi_{b};\bx)\}}{  \sum_{b=0}^B \bar{w}(  \pi_{b};\bx)} \leq t   \Bigg\} \\ 
&= \sum_{b^*=0}^B \PP\Big(  \hat\pi^{(0)} = \pi_{b^*}  \Biggiven [\hat\pi^{(0)},\hat\pi^{(1)},\dots,\hat\pi^{(B)}] =[\pi_0,\dots,\pi_B] \Big) \\ 
&\qquad \qquad \qquad \times \ind\Bigg\{ \frac{ \sum_{b=0}^B \bar{w}(\pi_{b};\bx) \ind\{V(\pi_{b^*} ;\bx)\leq V(\pi_{b};\bx)\}}{  \sum_{b=0}^B \bar{w}(  \pi_{b};\bx)} \leq t   \Bigg\} \\
&= \sum_{b^*=0}^B  \frac{\bar{w}(\pi_{b^*};\bx)}{\sum_{b'=0}^B \bar{w}(\pi_{b'};\bx)}  \ind\Bigg\{ \frac{ \sum_{b=0}^B \bar{w}(\pi_{b};\bx) \ind\{V(\pi_{b^*} ;\bx)\leq V(\pi_{b};\bx)\}}{  \sum_{b=0}^B \bar{w}(  \pi_{b};\bx)} \leq t   \Bigg\} \leq t.
\$
Here, the first equality follows from~\eqref{eq:pk_cond_equiv}, the third equality follows from~\eqref{eq:pk_cond_prob}, and the last inequality is by definition. Finally, marginalizing over $[\bX]$ and $[\hat\pi^{(0)},\dots,\hat\pi^{(B)}]$ yields the desired result. 
\end{proof}

\subsection{Proof of Theorem~\ref{thm:robust_dtm}}
\label{app:subsec_robust}

In this part, we prove the robustness of joint weighted conformal p-value with estimated density ratio. 
For generality, we use a general index $k$ instead of the total budget $N$, to indicate that $\{X_{n+j}\}_{j=1}^k$ can be a subset of the original samples (as in Section~\ref{subsec:design}). 
For conceptual simplicity, we present the detailed proof for deterministic p-values (Appendix~\ref{app:subsec_nonrandom_pvalue}), and the result for randomized p-values in the main text naturally follows with the same ideas. 

\begin{proof}[Proof of Theorem~\ref{thm:robust_dtm}]
Let $\PP^{\cond}(\cdot)$ denote the distribution conditional on all the labels $\{Y_i\}_{i\geq 1}$ and the event $\max_{1\leq j\leq k}Y_{n+j}=0$. 
    Following the arguments in the proof of Theorem~\ref{thm:pval_dtm}, for any fixed value $\bx$, we let $p_k(\pi;\bx)$ be the p-value with the correct weights $w(\cdot)$ obtained when the data is realized as $(X_1,\dots,X_{n_0},X_{n+1},\dots,X_{n+k}) = (x_{\pi(1)},\dots,x_{\pi(n_0)},x_{\pi(n+1)},\dots,x_{\pi(n+k)})$, and similarly we define $\hat{p}_k(\pi;\bx)$. Then we have $\hat{p}_k(V(\pi;\bx)) = \hat{p}_k(\pi;\bx)$ where $\hat{p}_k(v)$ is defined in Theorem~\ref{thm:robust_dtm}, and by definition, we have 
    \$
    \sum_{\pi\in\Pi} \ind\{\hat{p}_k(\pi;\bx) \leq \hat{t}\}\cdot   \frac{\hat{\bar{w}}(\pi;\bx)}{\sum_{\pi'\in\Pi}\hat{\bar{w}}(\pi';\bx)} = \hat{p}_k(\hat{v}) \leq \hat{t} \leq t. 
    \$
    In addition, by the definition of $\hat{t}$, we have 
    $\hat{p}_k(\pi;\bx)\leq \hat{t}$ if and only if $\hat{p}_k(\pi;\bx)\leq t$ for any $\pi\in\Pi$. As a result, we have 
    \$
    &\PP^{\cond} \big(\hat{p}_k \leq t\biggiven [\bX=\bx]\big) = \sum_{\pi\in\Pi} \ind\{\hat{p}_k(\pi;\bx) \leq t\}\cdot   \frac{\bar{w}(\pi;\bx)}{\sum_{\pi'\in\Pi}\bar{w}(\pi';\bx)} \\ 
    &= \sum_{\pi\in\Pi} \ind\{\hat{p}_k(\pi;\bx) \leq \hat{t}\}\cdot   \frac{\bar{w}(\pi;\bx)}{\sum_{\pi'\in\Pi}\bar{w}(\pi';\bx)} \\
    &\leq \hat{t} + \sum_{\pi\in\Pi} \ind\{\hat{p}_k(\pi;\bx) \leq \hat{t}\}  \bigg(  \frac{\bar{w}(\pi;\bx)}{\sum_{\pi'\in\Pi}\bar{w}(\pi';\bx)} - \frac{\hat{\bar{w}}(\pi;\bx)}{\sum_{\pi'\in\Pi}\hat{\bar{w}}(\pi';\bx)} \bigg) \\
    &\leq \hat{t} + \sum_{\pi\in\Pi} \ind\{V(\pi;\bx)\geq \hat{v}\}  \bigg(  \frac{\bar{w}(\pi;\bx)}{\sum_{\pi'\in\Pi}\bar{w}(\pi';\bx)} - \frac{\hat{\bar{w}}(\pi;\bx)}{\sum_{\pi'\in\Pi}\hat{\bar{w}}(\pi';\bx)} \bigg)\\
    &\leq t + \frac{\sum_{\pi\in\Pi} \bar{w}(\pi;\bx)\ind\{V(\pi;\bx)\geq \hat{v}\}}{\sum_{\pi'\in\Pi}\bar{w}(\pi';\bx)} - \frac{\sum_{\pi\in\Pi} \hat{\bar{w}}(\pi;\bx)\ind\{V(\pi;\bx)\geq \hat{v}\}}{\sum_{\pi'\in\Pi}\hat{\bar{w}}(\pi';\bx)} =: t+ \hat\Delta. 
    \$
Rearranging the terms, we have 
    \$
  \hat\Delta  &= \frac{\big(\sum_{\pi\in\Pi} \bar{w}(\pi;\bx)\ind\{V(\pi;\bx)\geq \hat{v}\}\big)\big(\sum_{\pi\in\Pi} \hat{\bar{w}}(\pi;\bx)\ind\{V(\pi;\bx)< \hat{v}\}\big)   }{\big(\sum_{\pi'\in\Pi}\bar{w}(\pi';\bx)\big) \big(\sum_{\pi'\in\Pi}\hat{\bar{w}}(\pi';\bx)\big)}  \\
    &\qquad - \frac{ \big(\sum_{\pi\in\Pi} \bar{w}(\pi;\bx)\ind\{V(\pi;\bx)< \hat{v}\}\big)\big(\sum_{\pi\in\Pi} \hat{\bar{w}}(\pi;\bx)\ind\{V(\pi;\bx)\geq \hat{v}\}\big) }{\big(\sum_{\pi'\in\Pi}\bar{w}(\pi';\bx)\big) \big(\sum_{\pi'\in\Pi}\hat{\bar{w}}(\pi';\bx)\big)}  \\
     &\leq \frac{\big(\sum_{\pi\in\Pi} [\hat{\bar{w}}(\pi;\bx) + \hat\delta^-(\pi;\bx) ]\ind\{V(\pi;\bx)\geq \hat{v}\}\big)\big(\sum_{\pi\in\Pi} \hat{\bar{w}}(\pi;\bx)\ind\{V(\pi;\bx)< \hat{v}\}\big)   }{\big(\sum_{\pi'\in\Pi}\bar{w}(\pi';\bx)\big) \big(\sum_{\pi'\in\Pi}\hat{\bar{w}}(\pi';\bx)\big)}  \\
    &\qquad - \frac{ \big(\sum_{\pi\in\Pi} [\hat{\bar{w}}(\pi;\bx) - \hat\delta^+(\pi;\bx) ]\ind\{V(\pi;\bx)< \hat{v}\}\big)\big(\sum_{\pi\in\Pi} \hat{\bar{w}}(\pi;\bx)\ind\{V(\pi;\bx)\geq \hat{v}\}\big) }{\big(\sum_{\pi'\in\Pi}\bar{w}(\pi';\bx)\big) \big(\sum_{\pi'\in\Pi}\hat{\bar{w}}(\pi';\bx)\big)} \\ 
    &\leq \frac{ \big(\sum_{\pi\in\Pi}\hat\delta^-(\pi;\bx) \ind\{V(\pi;\bx)\geq \hat{v}\}\big)\big(\sum_{\pi\in\Pi} \hat{\bar{w}}(\pi;\bx)\ind\{V(\pi;\bx)< \hat{v}\}\big)   }{\big(\sum_{\pi'\in\Pi}\bar{w}(\pi';\bx)\big) \big(\sum_{\pi'\in\Pi}\hat{\bar{w}}(\pi';\bx)\big)}  \\ 
    &\qquad + \frac{ \big(\sum_{\pi\in\Pi}  \hat\delta^+(\pi;\bx)  \ind\{V(\pi;\bx)< \hat{v}\}\big)\big(\sum_{\pi\in\Pi} \hat{\bar{w}}(\pi;\bx)\ind\{V(\pi;\bx)\geq \hat{v}\}\big) }{\big(\sum_{\pi'\in\Pi}\bar{w}(\pi';\bx)\big) \big(\sum_{\pi'\in\Pi}\hat{\bar{w}}(\pi';\bx)\big)} ,
    \$
    where we used the non-negativity of the weights and the definition of $\hat\delta^{\pm}(\pi;\bx)$. Further, noting that $\hat{t} = \frac{\sum_{\pi\in\Pi}\hat{\bar{w}}(\pi;\bX)\ind\{\hat{v}\leq V(\pi;\bX)\}}{\sum_{\pi\in \Pi} \hat{\bar{w}}(\pi;\bX)\}}$, we thus have 
    \$
    \hat\Delta \leq \frac{ (1-\hat{t})\cdot \big(\sum_{\pi\in\Pi}\hat\delta^-(\pi;\bx) \ind\{V(\pi;\bx)\geq \hat{v}\}\big) + \hat{t}\cdot \big(\sum_{\pi\in\Pi}  \hat\delta^+(\pi;\bx)  \ind\{V(\pi;\bx)< \hat{v}\}\big) }{ \sum_{\pi'\in\Pi}\bar{w}(\pi';\bx)   } ,
    \$
    which completes our proof for the results of $p_k$.
    
    Finally, all the arguments still go through (following some arguments in the proof of Theorem~\ref{thm:pval_rand}) for $\hat{p}_k$ after replacing $\Pi$ by $\{\pi^{(b)}\}_{b=0}^B$ and additionally conditioning on the unordered set of $\{\hat\pi,\pi^{(1)},\dots,\pi^{(B)}\}$, where $\hat\pi$ is the permutation such that $\bX=(x_{\pi(1)},\dots,x_{\pi(n+k)})$. We omit the details here for brevity. 
\end{proof}

\section{Experimental Setting Details}
\label{app:exp_set}

\subsection{Constrained Molecule Optimization}

For training the generator, we used the publicly available training code from the \textsc{HGraph2Graph} \citep{jin2020hierarchical} repository. We retained the default training/test splits and randomly held out 20\% of the training set for calibration. The test set contains 800 samples and is also sourced from the same repository.

For the \textsc{Self-Edit} \citep{selfedit2023} baseline, we used the same data splits and obtained the training code from its official repository. Both models were trained with their default configurations, without modification.

In all cases, we sampled up to 50 candidate molecules per input until a sufficient number of \textit{eligible} candidates were obtained. We defined \textit{ineligible} candidates as those that were invalid (e.g., chemically invalid), duplicates, or out-of-distribution (OOD) with respect to the training data. We retained only those test samples for which at least \(N \in \{5, 10\}\) valid candidates could be generated.

The property oracles were sourced from RDKit for QED and from the Therapeutics Data Commons library for DRD2.

\paragraph{Density Estimation and Scoring Function.}  
We trained a molecular property predictor using the Chemprop\citep{yang2019chemprop} library for both QED and DRD2 tasks. The model used a 3-layer Message passing Neural Network with a final hidden dimension of 4 for feature extraction and was trained for 5 epochs using the default Chemprop trainer settings. We used the \texttt{MulticomponentMessagePassing} module without any modification.

For density estimation, we followed the \textsc{CoDrug} approach by extracting penultimate layer features and fitting a Gaussian kernel density estimator (KDE) using the \texttt{scipy} library with a bandwidth of 1. To filter OOD samples, we computed the 95th percentile density on the calibration set and removed test-time generated molecules falling below this threshold. Additional samples were drawn until the target number \(N\) of valid candidates was met.

For sequential p-value computation, we used 2000 Monte Carlo samples per test instance.

\subsection{Hyperparameter Choices and Design Justifications}

\paragraph{Density ratio model and feature selection.}
For density estimation, we follow the \textsc{CoDrug} procedure, which has been shown to be robust for 
small-molecule density estimation under covariate shift. We use the penultimate-layer representation 
of the property predictor, which yields a 4-dimensional feature vector that is expressive yet low-dimensional 
enough to enable stable kernel density estimation. For bandwidth selection, we adopt the same strategy 
as \textsc{CoDrug}: we perform a 5-fold cross-validation over bandwidths 
$\{0.1, 1, 10\}$, compute the KDE likelihood on each held-out split, and choose the bandwidth with the 
highest average likelihood. This avoids post-hoc tuning and ensures that the KDE matches the calibration 
distribution as closely as possible.

\paragraph{Predictive model selection.}
Conformal validity does not depend on the complexity of the predictive model, but the 
power of the method improves when better predictors and more discriminative score functions are used. 
To avoid unnecessary tuning, we select models with strong reported performance and publicly available 
implementations. For the CMO tasks (QED/DRD2), we train a standard Chemprop MPNN. For the SBDD task, 
we use the default EGNN predictor from the \textsc{TargetDiff} implementation. These choices are simple, 
stable, and sufficient for producing informative conformity scores.

\paragraph{Choice of test statistic.}
Since the procedure operates on a \emph{set} of generated molecules, the choice of score statistic 
affects the power but not the validity. We use the \textbf{maximum} score across the set, which is a 
natural heuristic in this context: if at least one strong candidate is present, the max-score captures 
this evidence immediately, whereas statistics such as the mean or min dilute the signal with weaker 
candidates. Empirically, we found the max statistic to produce higher power and lower rejection rates 
than alternatives.

\paragraph{Number of permutations ($B$).}
We set $B = 2000$ Monte Carlo permutations, which we found sufficient for stable performance while 
maintaining modest computational cost. Running the full \name{} procedure with $B=2000$ takes only 
\textbf{4 minutes for 800 samples} (see Fig.~\ref{fig:permutatin}). We also evaluated $B \in \{500, 1000, 2000, 5000\}$ 
and observed negligible variance in empirical coverage across these values, confirming that the Monte 
Carlo approximation converges quickly. Runtime scales linearly with $B$, and $2000$ provides an effective 
balance between statistical stability and efficiency.

\begin{figure}[ht]
    \centering
    \includegraphics[width=0.7\linewidth]{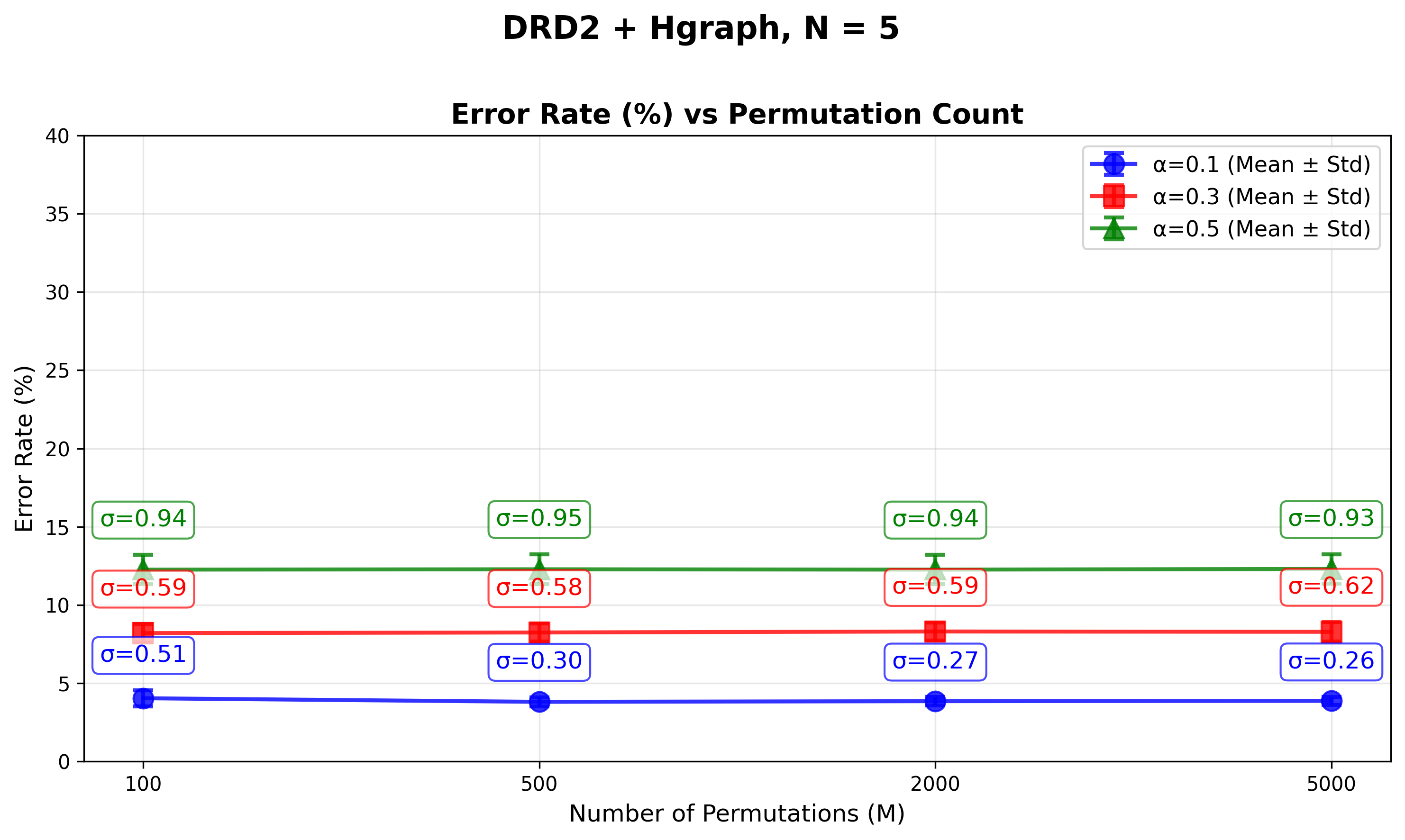}
    \caption{Error rate as a function of the number of Monte Carlo permutations B for DRD2 with HGraph2Graph (N = 5). The empirical error rate remains stable with minimal variance.}
    \label{fig:permutatin}
\end{figure}

\subsection{Structure-Based Drug Discovery}

For this task, we used a pre-trained generative model checkpoint obtained from the official \textsc{TargetDiff} \citep{targetdiff2023} repository, along with their corresponding test set.

To estimate scores and extract features, we trained an EGNN-based model using their provided codebase. We modified the final hidden layer to output an 8-dimensional embedding. For validation data, we used protein-ligand complexes from the CrossDocked2020 \citep{crossdock2020} dataset, excluding all training and test set entries to avoid data leakage.

Density estimation was performed as above, using Gaussian KDE with a bandwidth of 1.

\subsection{Compute Resources}
\label{subsec:compute}

All experiments were conducted on a single NVIDIA A100 GPU. The approximate runtime per component is as follows:

\begin{itemize}
    \item Property predictor training (QED/DRD2): 20 minutes
    \item \textsc{HGraph2Graph} generator training: 10 hours
    \item \textsc{Self-Edit} training: 48 hours
    \item Density estimator training for SBDD: 24 hours
    \item Inference using \textsc{TargetDiff}: 30 minutes
    
    \item {
    \name{} procedure per test batch: 30 minutes (over 800 samples; $\approx$2.25 seconds/sample)
    \begin{itemize}
        \item Candidate generation (CMO task): 14 minutes ($\approx$1.05 seconds/sample)
        \item Predictor feature extraction: 6 minutes ($\approx$0.45 seconds/sample)
        \item KDE computation: 3 minutes ($\approx$0.23 seconds/sample)
        \item Running the conformal procedure: 4 minutes ($\approx$0.30 seconds/sample)
    \end{itemize}
    }
\end{itemize}
\subsection{Code and Data Availability}
\label{sec:code}

We include with the code the following:
\begin{itemize}
    \item Model checkpoints.
    \item Extracted features and CrossDocked identifiers used for calibration in SBDD.
    \item Calibration and test splits.
    \item Configuration files for reproducing results across datasets and models.
    \item Installation instructions.
\end{itemize}

The code is publicly available at \url{https://github.com/siddharthal/CONFHIT-Conformal-Generative-Design-with-Oracle-Free-Guarantees} and is distributed under MIT license.

\section{Working under an overall budget}
\label{app:bud_over}

In Section~\ref{sec:results_budget}, we discussed the application of \name~in providing practically useful heuristics while constructing predicting sets when a fixed total budget $B$ is available. The detailed algorithm is provided in Algorithm~\ref{alg:budget}.

\begin{algorithm}
    \small 
    \caption{Working under an overall budget}
    \label{alg:budget}
\begin{algorithmic}[1]
    \REQUIRE{Calibration data $\{X_i\}_{i=1}^{n_0}$ with $Y_i=0$, test  batches $\{X^{(t)}\}_{t=1}^T$ of size $N$, conformity score $V$, weight function $\hat{w}$, grid of confidence levels $\cA \subset (0,1)$, total budget $B \in \NN^+$.}
    \STATE Initialize $\hat{P}_{\max} \gets 0$, $\cC^\star \gets \emptyset$, $\alpha^\star \gets 0$.
    \FOR{each $\alpha \in \cA$}
        \STATE For each test batch $t=1,\dots,T$, run Algorithm~\ref{alg} with level $\alpha$ to obtain prunded prediction sets $\cC^{(\alpha)}_t$.
        \STATE Compute total cost $C = \sum_{t=1}^T |\cC^{(\alpha)}_t|$.
        \STATE Record number of empty sets $E = |\{t : \cC^{(\alpha)}_t = \emptyset\}|$.
        \IF{$C > B$}
            \STATE Sort $\cC^{(\alpha)}_t$ in descending order of size.
            \STATE Iteratively delete sets until total cost $C \leq B$.
            \STATE Let $D$ be the number of deleted sets.
        \ELSE
            \STATE Set $D \gets 0$.
        \ENDIF
        \STATE Estimate positives as $\hat{P}(\alpha) = (1 - \alpha) - E - D$.
        \IF{$\hat{P}(\alpha) > \hat{P}_{\max}$}
            \STATE Update $\hat{P}_{\max} \gets \hat{P}(\alpha)$, $\cC^\star \gets \{\cC^{(\alpha)}_t\}$, $\alpha^\star \gets \alpha$.
        \ENDIF
    \ENDFOR
    \ENSURE{Best estimated positives $\hat{P}_{\max}$ with corresponding sets $\cC^\star$ at significance level $\alpha^\star$.}
\end{algorithmic}
\end{algorithm}
\section{Additional experiments}

\subsection{Detailed results}
\label{app:det_plo}

In this section, we provide error bars for the plots in Sections \ref{sec:results_certification} and \ref{sec:results_design}.  The tables include Error Rate, fraction certified, empty set fraction and mean set size depicted in  \ref{sec:results_certification} and \ref{sec:results_design} along with standard deviation computed over 5 random runs, across different methods. Further, we report metrics on the Constrained Molecule Optimization (CMO) task, using the Qualitative Estimate of Drug-Likeness (QED) property with success ($Y=1$) if \(\mathrm{QED}(x) > 0.9\), following \citet{jin2020hierarchical}.

\subsubsection{Certification Results}

\begin{table}[htbp]
\centering
\caption{DRD2\_HGRAPH Certification Results}
\begin{tabular}{c c c c}
\toprule
$N$ & $\alpha$ & Error Rate (\%) & Certified Fraction (\%) \\
\midrule
3 & 0.1 & $6.5 \pm 1.1$ & $22.6 \pm 2.4$ \\
3 & 0.2 & $9.6 \pm 1.4$ & $33.2 \pm 4.1$ \\
3 & 0.3 & $18.5 \pm 1.9$ & $39.8 \pm 4.7$ \\
3 & 0.4 & $26.8 \pm 2.1$ & $47.6 \pm 3.5$ \\
3 & 0.5 & $26.0 \pm 2.0$ & $57.5 \pm 6.8$ \\
\midrule
5 & 0.1 & $7.3 \pm 2.0$ & $43.0 \pm 4.3$ \\
5 & 0.2 & $9.3 \pm 1.2$ & $53.0 \pm 4.6$ \\
5 & 0.3 & $12.2 \pm 3.2$ & $68.8 \pm 6.2$ \\
5 & 0.4 & $21.8 \pm 2.6$ & $65.1 \pm 4.6$ \\
5 & 0.5 & $21.6 \pm 2.2$ & $70.2 \pm 6.1$ \\
\midrule
7 & 0.1 & $5.5 \pm 0.6$ & $51.5 \pm 5.1$ \\
7 & 0.2 & $8.1 \pm 1.2$ & $60.6 \pm 2.5$ \\
7 & 0.3 & $11.8 \pm 3.1$ & $66.2 \pm 5.2$ \\
7 & 0.4 & $19.7 \pm 3.2$ & $70.4 \pm 3.6$ \\
7 & 0.5 & $20.6 \pm 2.2$ & $74.5 \pm 6.2$ \\
\bottomrule
\end{tabular}
\end{table}

\begin{table}[htbp]
\centering
\caption{DRD2\_SELFEDIT Certification Results}
\begin{tabular}{c c c c}
\toprule
$N$ & $\alpha$ & Error Rate (\%) & Certified Fraction (\%) \\
\midrule
3 & 0.1 & $7.1 \pm 0.4$ & $19.1 \pm 2.1$ \\
3 & 0.2 & $8.7 \pm 1.7$ & $21.5 \pm 3.1$ \\
3 & 0.3 & $19.8 \pm 1.2$ & $28.2 \pm 3.2$ \\
3 & 0.4 & $19.8 \pm 1.6$ & $39.3 \pm 4.1$ \\
3 & 0.5 & $28.3 \pm 3.3$ & $42.5 \pm 5.2$ \\
\midrule
5 & 0.1 & $7.0 \pm 1.0$ & $22.8 \pm 3.1$ \\
5 & 0.2 & $7.4 \pm 1.3$ & $25.9 \pm 2.5$ \\
5 & 0.3 & $16.4 \pm 2.2$ & $30.2 \pm 4.1$ \\
5 & 0.4 & $17.4 \pm 2.6$ & $41.7 \pm 4.6$ \\
5 & 0.5 & $26.9 \pm 3.2$ & $45.7 \pm 5.1$ \\
\midrule
7 & 0.1 & $6.3 \pm 0.9$ & $29.0 \pm 2.2$ \\
7 & 0.2 & $9.8 \pm 1.6$ & $38.3 \pm 3.1$ \\
7 & 0.3 & $16.8 \pm 2.1$ & $42.3 \pm 4.7$ \\
7 & 0.4 & $17.1 \pm 2.2$ & $47.6 \pm 4.9$ \\
7 & 0.5 & $26.4 \pm 4.2$ & $51.6 \pm 5.6$ \\
\bottomrule
\end{tabular}
\end{table}

\begin{table}[htbp]
\centering
\caption{QED\_HGRAPH Certification Results}
\begin{tabular}{c c c c}
\toprule
$N$ & $\alpha$ & Error Rate (\%) & Certified Fraction (\%) \\
\midrule
3 & 0.1 & $7.8 \pm 1.2$ & $49.3 \pm 2.0$ \\
3 & 0.2 & $13.4 \pm 1.6$ & $47.8 \pm 1.8$ \\
3 & 0.3 & $16.3 \pm 2.1$ & $54.0 \pm 1.0$ \\
3 & 0.4 & $19.4 \pm 2.7$ & $56.8 \pm 1.0$ \\
3 & 0.5 & $21.8 \pm 3.2$ & $60.8 \pm 3.4$ \\
\midrule
7 & 0.1 & $7.9 \pm 0.5$ & $52.2 \pm 3.2$ \\
7 & 0.2 & $14.6 \pm 1.6$ & $56.5 \pm 2.0$ \\
7 & 0.3 & $17.9 \pm 2.1$ & $56.7 \pm 2.2$ \\
7 & 0.4 & $20.3 \pm 1.2$ & $65.5 \pm 3.1$ \\
7 & 0.5 & $22.5 \pm 3.4$ & $72.0 \pm 2.7$ \\
\bottomrule
\end{tabular}
\end{table}

\begin{table}[htbp]
\centering
\caption{QED\_SELFEDIT Certification Results}
\begin{tabular}{c c c c}
\toprule
$N$ & $\alpha$ & Error Rate (\%) & Certified Fraction (\%) \\
\midrule
3 & 0.1 & $7.6 \pm 1.2$ & $42.6 \pm 6.1$ \\
3 & 0.2 & $10.4 \pm 0.6$ & $54.6 \pm 3.9$ \\
3 & 0.3 & $14.2 \pm 1.3$ & $52.2 \pm 2.7$ \\
3 & 0.4 & $15.7 \pm 2.1$ & $57.0 \pm 3.6$ \\
3 & 0.5 & $17.2 \pm 3.7$ & $62.7 \pm 4.1$ \\
\midrule
7 & 0.1 & $11.9 \pm 1.4$ & $52.2 \pm 3.5$ \\
7 & 0.2 & $21.6 \pm 2.1$ & $62.3 \pm 4.0$ \\
7 & 0.3 & $29.7 \pm 2.5$ & $61.2 \pm 2.9$ \\
7 & 0.4 & $30.8 \pm 1.2$ & $77.9 \pm 4.3$ \\
7 & 0.5 & $32.9 \pm 3.2$ & $78.4 \pm 1.6$ \\
\bottomrule
\end{tabular}
\end{table}

\begin{table}[htbp]
\centering
\caption{TARGETDIFF Certification Results}
\begin{tabular}{c c c c}
\toprule
$N$ & $\alpha$ & Error Rate (\%) & Certified Fraction (\%) \\
\midrule
5 & 0.1 & $8.4 \pm 2.6$ & $42.9 \pm 2.5$ \\
5 & 0.2 & $10.8 \pm 3.0$ & $52.1 \pm 2.9$ \\
5 & 0.3 & $11.4 \pm 2.3$ & $57.7 \pm 3.8$ \\
5 & 0.4 & $14.9 \pm 4.4$ & $64.3 \pm 3.6$ \\
5 & 0.5 & $17.1 \pm 3.0$ & $68.5 \pm 3.4$ \\
\midrule
10 & 0.1 & $5.3 \pm 2.5$ & $57.8 \pm 4.3$ \\
10 & 0.2 & $8.2 \pm 3.3$ & $63.9 \pm 3.6$ \\
10 & 0.3 & $9.3 \pm 2.0$ & $68.8 \pm 2.6$ \\
10 & 0.4 & $11.5 \pm 2.7$ & $72.4 \pm 4.0$ \\
10 & 0.5 & $12.8 \pm 3.9$ & $76.6 \pm 3.7$ \\
\midrule
15 & 0.1 & $10.1 \pm 1.1$ & $69.5 \pm 3.3$ \\
15 & 0.2 & $7.6 \pm 1.2$ & $70.0 \pm 4.6$ \\
15 & 0.3 & $8.7 \pm 1.5$ & $73.1 \pm 5.1$ \\
15 & 0.4 & $9.8 \pm 2.0$ & $74.5 \pm 2.5$ \\
15 & 0.5 & $12.0 \pm 1.8$ & $79.7 \pm 2.3$ \\
\bottomrule
\end{tabular}
\end{table}

\begin{table}[htbp]
\centering
\caption{MOLCRAFT Certification Results}
\begin{tabular}{c c c c}
\toprule
$N$ & $\alpha$ & Error Rate (\%) & Certified Fraction (\%) \\
\midrule
5 & 0.1 & $7.0 \pm 1.2$ & $40.1 \pm 2.7$ \\
5 & 0.2 & $11.0 \pm 1.8$ & $47.9 \pm 3.5$ \\
5 & 0.3 & $12.7 \pm 1.0$ & $55.5 \pm 4.9$ \\
5 & 0.4 & $15.1 \pm 1.2$ & $63.1 \pm 4.9$ \\
5 & 0.5 & $16.6 \pm 1.2$ & $68.1 \pm 3.8$ \\
\midrule
10 & 0.1 & $5.2 \pm 1.6$ & $58.3 \pm 5.8$ \\
10 & 0.2 & $6.7 \pm 1.1$ & $64.2 \pm 5.8$ \\
10 & 0.3 & $7.5 \pm 1.4$ & $67.6 \pm 4.9$ \\
10 & 0.4 & $9.0 \pm 1.6$ & $71.1 \pm 7.1$ \\
10 & 0.5 & $9.0 \pm 2.9$ & $73.5 \pm 7.5$ \\
\midrule
15 & 0.1 & $7.3 \pm 1.0$ & $55.4 \pm 4.1$ \\
15 & 0.2 & $9.1 \pm 2.0$ & $59.8 \pm 5.8$ \\
15 & 0.3 & $9.1 \pm 3.6$ & $68.7 \pm 5.9$ \\
15 & 0.4 & $9.5 \pm 3.0$ & $69.1 \pm 7.2$ \\
15 & 0.5 & $11.5 \pm 4.9$ & $76.2 \pm 7.8$ \\
\bottomrule
\end{tabular}
\end{table}

\begin{table}[htbp]
\centering
\caption{DECOMPDIFF Certification Results}
\begin{tabular}{c c c c}
\toprule
$N$ & $\alpha$ & Error Rate (\%) & Certified Fraction (\%) \\
\midrule
5 & 0.1 & $3.2 \pm 2.2$ & $56.6 \pm 1.4$ \\
5 & 0.2 & $3.1 \pm 1.9$ & $63.5 \pm 1.2$ \\
5 & 0.3 & $4.7 \pm 2.0$ & $69.0 \pm 2.4$ \\
5 & 0.4 & $5.2 \pm 2.4$ & $73.4 \pm 0.8$ \\
5 & 0.5 & $5.4 \pm 2.6$ & $77.0 \pm 1.2$ \\
\midrule
10 & 0.1 & $2.1 \pm 1.6$ & $74.4 \pm 4.0$ \\
10 & 0.2 & $3.1 \pm 2.5$ & $76.4 \pm 3.5$ \\
10 & 0.3 & $5.2 \pm 3.1$ & $79.6 \pm 3.4$ \\
10 & 0.4 & $6.2 \pm 2.8$ & $82.7 \pm 3.6$ \\
10 & 0.5 & $5.9 \pm 3.0$ & $82.7 \pm 2.1$ \\
\midrule
15 & 0.1 & $2.6 \pm 3.2$ & $83.2 \pm 5.7$ \\
15 & 0.2 & $2.5 \pm 3.2$ & $83.9 \pm 5.0$ \\
15 & 0.3 & $1.6 \pm 2.7$ & $85.8 \pm 5.3$ \\
15 & 0.4 & $1.6 \pm 2.8$ & $85.1 \pm 6.2$ \\
15 & 0.5 & $2.5 \pm 3.1$ & $87.1 \pm 5.2$ \\
\bottomrule
\end{tabular}
\end{table}
\subsubsection{Design Results}

\par{DRD2\_HGRAPH}
\begin{table}[H]
\centering
\caption{DRD2\_HGRAPH Design Results}
\begin{tabular}{c c c c c}
\toprule
$N$ & $\alpha$ & Error Rate (\%) & Mean Set Size & Empty Set (\%) \\
\midrule
3 & 0.1 & $4.2 \pm 0.3$ & $1.9 \pm 0.0$ & $68.8 \pm 1.3$ \\
3 & 0.2 & $6.6 \pm 0.5$ & $1.8 \pm 0.0$ & $46.1 \pm 1.3$ \\
3 & 0.3 & $9.1 \pm 0.6$ & $1.8 \pm 0.0$ & $25.3 \pm 1.8$ \\
3 & 0.4 & $11.0 \pm 0.8$ & $1.6 \pm 0.0$ & $15.0 \pm 1.3$ \\
3 & 0.5 & $13.0 \pm 0.8$ & $1.5 \pm 0.0$ & $9.9 \pm 0.8$ \\
\midrule
5 & 0.1 & $5.4 \pm 1.6$ & $2.4 \pm 0.7$ & $47.9 \pm 2.6$ \\
5 & 0.2 & $7.9 \pm 2.5$ & $2.1 \pm 0.7$ & $28.7 \pm 2.8$ \\
5 & 0.3 & $10.4 \pm 3.3$ & $1.8 \pm 0.6$ & $15.8 \pm 2.1$ \\
5 & 0.4 & $12.4 \pm 4.0$ & $1.6 \pm 0.5$ & $11.3 \pm 1.9$ \\
5 & 0.5 & $13.9 \pm 4.4$ & $1.4 \pm 0.5$ & $8.8 \pm 1.5$ \\
\midrule
7 & 0.1 & $3.6 \pm 0.8$ & $3.1 \pm 0.1$ & $53.0 \pm 1.2$ \\
7 & 0.2 & $6.3 \pm 0.7$ & $2.6 \pm 0.1$ & $30.0 \pm 1.4$ \\
7 & 0.3 & $9.0 \pm 1.0$ & $2.1 \pm 0.0$ & $15.8 \pm 0.7$ \\
7 & 0.4 & $11.2 \pm 1.1$ & $1.8 \pm 0.0$ & $9.2 \pm 0.5$ \\
7 & 0.5 & $12.9 \pm 1.2$ & $1.7 \pm 0.0$ & $5.9 \pm 0.6$ \\
\bottomrule
\end{tabular}
\end{table}

\par{DRD2\_SELFEDIT}
\begin{table}[H]
\centering
\caption{DRD2\_SELFEDIT Design Results}
\begin{tabular}{c c c c c}
\toprule
$N$ & $\alpha$ & Error Rate (\%) & Mean Set Size & Empty Set (\%) \\
\midrule
3 & 0.1 & $9.8 \pm 1.3$ & $2.0 \pm 0.0$ & $64.5 \pm 0.5$ \\
3 & 0.2 & $14.6 \pm 2.2$ & $1.9 \pm 0.1$ & $42.3 \pm 0.2$ \\
3 & 0.3 & $19.6 \pm 3.2$ & $1.7 \pm 0.0$ & $24.0 \pm 0.9$ \\
3 & 0.4 & $22.5 \pm 3.2$ & $1.6 \pm 0.0$ & $12.3 \pm 0.1$ \\
3 & 0.5 & $24.5 \pm 4.0$ & $1.5 \pm 0.0$ & $7.3 \pm 0.6$ \\
\midrule
5 & 0.1 & $9.5 \pm 1.2$ & $2.6 \pm 0.1$ & $53.0 \pm 1.5$ \\
5 & 0.2 & $14.0 \pm 2.4$ & $2.3 \pm 0.1$ & $33.3 \pm 2.6$ \\
5 & 0.3 & $20.7 \pm 2.1$ & $2.0 \pm 0.0$ & $14.9 \pm 0.6$ \\
5 & 0.4 & $25.8 \pm 2.6$ & $1.7 \pm 0.1$ & $8.1 \pm 1.1$ \\
5 & 0.5 & $28.8 \pm 3.1$ & $1.6 \pm 0.0$ & $4.2 \pm 0.4$ \\
\midrule
7 & 0.1 & $11.8 \pm 1.3$ & $3.1 \pm 0.1$ & $46.6 \pm 0.8$ \\
7 & 0.2 & $19.5 \pm 3.3$ & $2.6 \pm 0.0$ & $22.9 \pm 3.5$ \\
7 & 0.3 & $24.8 \pm 6.3$ & $2.1 \pm 0.1$ & $12.2 \pm 1.9$ \\
7 & 0.4 & $28.6 \pm 7.1$ & $1.8 \pm 0.1$ & $7.4 \pm 1.7$ \\
7 & 0.5 & $31.6 \pm 6.3$ & $1.7 \pm 0.0$ & $4.0 \pm 0.6$ \\
\bottomrule
\end{tabular}
\end{table}

\par{QED\_HGRAPH}
\begin{table}[H]
\centering
\caption{QED\_HGRAPH Design Results}
\begin{tabular}{c c c c c}
\toprule
$N$ & $\alpha$ & Error Rate (\%) & Mean Set Size & Empty Set (\%) \\
\midrule
3 & 0.1 & $10.1 \pm 0.7$ & $1.2 \pm 0.0$ & $70.7 \pm 1.9$ \\
3 & 0.2 & $17.0 \pm 0.6$ & $1.3 \pm 0.0$ & $45.4 \pm 1.6$ \\
3 & 0.3 & $21.2 \pm 1.4$ & $1.3 \pm 0.1$ & $26.8 \pm 0.9$ \\
3 & 0.4 & $25.2 \pm 1.5$ & $1.3 \pm 0.0$ & $13.2 \pm 1.2$ \\
3 & 0.5 & $28.2 \pm 1.7$ & $1.2 \pm 0.0$ & $6.9 \pm 0.6$ \\
\midrule
5 & 0.1 & $12.2 \pm 0.7$ & $2.6 \pm 0.0$ & $27.2 \pm 0.2$ \\
5 & 0.2 & $18.8 \pm 1.1$ & $2.3 \pm 0.1$ & $7.1 \pm 0.2$ \\
5 & 0.3 & $24.4 \pm 1.2$ & $1.8 \pm 0.0$ & $0.0 \pm 0.0$ \\
5 & 0.4 & $27.7 \pm 1.3$ & $1.6 \pm 0.1$ & $0.0 \pm 0.0$ \\
5 & 0.5 & $29.9 \pm 1.0$ & $1.5 \pm 0.1$ & $0.0 \pm 0.0$ \\
\midrule
7 & 0.1 & $10.4 \pm 1.2$ & $2.5 \pm 0.1$ & $56.6 \pm 0.8$ \\
7 & 0.2 & $17.5 \pm 1.2$ & $2.0 \pm 0.0$ & $31.7 \pm 1.8$ \\
7 & 0.3 & $22.0 \pm 1.4$ & $1.7 \pm 0.1$ & $18.9 \pm 1.6$ \\
7 & 0.4 & $26.1 \pm 1.5$ & $1.4 \pm 0.0$ & $9.4 \pm 1.5$ \\
7 & 0.5 & $29.1 \pm 1.6$ & $1.3 \pm 0.0$ & $4.9 \pm 1.1$ \\
\bottomrule
\end{tabular}
\end{table}

\par{QED\_SELFEDIT}
\begin{table}[H]
\centering
\caption{QED\_SELFEDIT Design Results}
\begin{tabular}{c c c c c}
\toprule
$N$ & $\alpha$ & Error Rate (\%) & Mean Set Size & Empty Set (\%) \\
\midrule
3 & 0.1 & $10.1 \pm 0.4$ & $1.2 \pm 0.0$ & $64.8 \pm 1.8$ \\
3 & 0.2 & $14.4 \pm 0.9$ & $1.3 \pm 0.1$ & $45.2 \pm 1.5$ \\
3 & 0.3 & $18.8 \pm 1.4$ & $1.3 \pm 0.1$ & $27.1 \pm 1.8$ \\
3 & 0.4 & $21.5 \pm 1.9$ & $1.2 \pm 0.1$ & $13.3 \pm 1.0$ \\
3 & 0.5 & $23.4 \pm 1.9$ & $1.2 \pm 0.0$ & $7.0 \pm 1.0$ \\
\midrule
5 & 0.1 & $11.9 \pm 0.8$ & $2.7 \pm 0.1$ & $26.0 \pm 2.4$ \\
5 & 0.2 & $20.1 \pm 1.2$ & $2.3 \pm 0.1$ & $7.0 \pm 0.7$ \\
5 & 0.3 & $25.8 \pm 1.4$ & $1.9 \pm 0.1$ & $0.0 \pm 0.0$ \\
5 & 0.4 & $28.4 \pm 1.5$ & $1.6 \pm 0.1$ & $0.0 \pm 0.0$ \\
5 & 0.5 & $29.9 \pm 2.1$ & $1.5 \pm 0.2$ & $0.0 \pm 0.0$ \\
\midrule
7 & 0.1 & $18.2 \pm 2.7$ & $2.1 \pm 0.2$ & $56.7 \pm 1.5$ \\
7 & 0.2 & $26.9 \pm 3.8$ & $1.9 \pm 0.2$ & $34.7 \pm 2.9$ \\
7 & 0.3 & $34.5 \pm 3.0$ & $1.6 \pm 0.1$ & $18.3 \pm 2.2$ \\
7 & 0.4 & $39.2 \pm 3.0$ & $1.4 \pm 0.1$ & $10.2 \pm 2.5$ \\
7 & 0.5 & $42.1 \pm 2.5$ & $1.3 \pm 0.0$ & $5.7 \pm 1.6$ \\
\bottomrule
\end{tabular}
\end{table}

\par{TARGETDIFF}
\begin{table}[H]
\centering
\caption{TARGETDIFF Design Results}
\begin{tabular}{c c c c c}
\toprule
$N$ & $\alpha$ & Error Rate (\%) & Mean Set Size & Empty Set (\%) \\
\midrule
5 & 0.1 & $10.0 \pm 1.5$ & $2.5 \pm 0.1$ & $37.9 \pm 2.6$ \\
5 & 0.2 & $17.4 \pm 1.5$ & $2.2 \pm 0.1$ & $21.4 \pm 3.1$ \\
5 & 0.3 & $27.2 \pm 3.5$ & $2.0 \pm 0.1$ & $8.6 \pm 0.0$ \\
5 & 0.4 & $34.2 \pm 2.9$ & $1.7 \pm 0.1$ & $2.1 \pm 1.4$ \\
5 & 0.5 & $37.0 \pm 1.2$ & $1.6 \pm 0.0$ & $0.4 \pm 0.7$ \\
\midrule
10 & 0.1 & $9.4 \pm 2.7$ & $3.7 \pm 0.1$ & $21.9 \pm 8.0$ \\
10 & 0.2 & $22.5 \pm 3.5$ & $2.9 \pm 0.0$ & $6.2 \pm 3.5$ \\
10 & 0.3 & $29.4 \pm 0.9$ & $2.3 \pm 0.0$ & $1.2 \pm 0.0$ \\
10 & 0.4 & $35.6 \pm 0.9$ & $1.8 \pm 0.0$ & $0.0 \pm 0.0$ \\
10 & 0.5 & $37.5 \pm 3.5$ & $1.6 \pm 0.1$ & $0.0 \pm 0.0$ \\
\midrule
15 & 0.1 & $11.2 \pm 1.8$ & $4.7 \pm 0.1$ & $15.0 \pm 5.3$ \\
15 & 0.2 & $21.9 \pm 2.7$ & $3.5 \pm 0.0$ & $3.8 \pm 1.8$ \\
15 & 0.3 & $31.9 \pm 0.9$ & $2.4 \pm 0.0$ & $1.2 \pm 1.8$ \\
15 & 0.4 & $36.9 \pm 2.7$ & $1.8 \pm 0.0$ & $0.6 \pm 0.9$ \\
15 & 0.5 & $38.1 \pm 2.7$ & $1.6 \pm 0.1$ & $0.0 \pm 0.0$ \\
\bottomrule
\end{tabular}
\end{table}

\par{MOLCRAFT}
\begin{table}[H]
\centering
\caption{MOLCRAFT Design Results}
\begin{tabular}{c c c c c}
\toprule
$N$ & $\alpha$ & Error Rate (\%) & Mean Set Size & Empty Set (\%) \\
\midrule
5 & 0.1 & $9.8 \pm 1.2$ & $2.6 \pm 0.1$ & $40.7 \pm 1.9$ \\
5 & 0.2 & $18.7 \pm 1.4$ & $2.3 \pm 0.2$ & $20.7 \pm 5.6$ \\
5 & 0.3 & $26.0 \pm 3.1$ & $2.0 \pm 0.1$ & $7.7 \pm 0.7$ \\
5 & 0.4 & $29.3 \pm 5.3$ & $1.7 \pm 0.0$ & $2.4 \pm 0.0$ \\
5 & 0.5 & $32.9 \pm 4.2$ & $1.6 \pm 0.0$ & $1.2 \pm 1.2$ \\
\midrule
10 & 0.1 & $9.9 \pm 1.3$ & $3.6 \pm 0.3$ & $22.2 \pm 3.3$ \\
10 & 0.2 & $18.2 \pm 2.0$ & $3.0 \pm 0.2$ & $9.9 \pm 2.1$ \\
10 & 0.3 & $25.1 \pm 0.7$ & $2.2 \pm 0.1$ & $2.9 \pm 1.4$ \\
10 & 0.4 & $28.0 \pm 1.4$ & $1.8 \pm 0.1$ & $2.1 \pm 0.7$ \\
10 & 0.5 & $29.6 \pm 2.5$ & $1.7 \pm 0.1$ & $0.4 \pm 0.7$ \\
\midrule
15 & 0.1 & $11.1 \pm 0.0$ & $4.3 \pm 0.0$ & $16.0 \pm 0.0$ \\
15 & 0.2 & $19.8 \pm 0.0$ & $3.2 \pm 0.0$ & $1.2 \pm 0.0$ \\
15 & 0.3 & $23.5 \pm 0.0$ & $2.2 \pm 0.0$ & $1.2 \pm 0.0$ \\
15 & 0.4 & $27.2 \pm 0.0$ & $1.8 \pm 0.0$ & $1.2 \pm 0.0$ \\
15 & 0.5 & $28.4 \pm 0.0$ & $1.6 \pm 0.0$ & $1.2 \pm 0.0$ \\
\bottomrule
\end{tabular}
\end{table}

\par{DECOMPDIFF}
\begin{table}[H]
\centering
\caption{DECOMPDIFF Design Results}
\begin{tabular}{c c c c c}
\toprule
$N$ & $\alpha$ & Error Rate (\%) & Mean Set Size & Empty Set (\%) \\
\midrule
5 & 0.1 & $9.6 \pm 2.6$ & $2.6 \pm 0.3$ & $25.0 \pm 2.2$ \\
5 & 0.2 & $16.7 \pm 4.7$ & $2.3 \pm 0.2$ & $11.7 \pm 0.7$ \\
5 & 0.3 & $24.2 \pm 0.7$ & $2.0 \pm 0.0$ & $2.5 \pm 2.2$ \\
5 & 0.4 & $26.7 \pm 0.7$ & $1.7 \pm 0.1$ & $1.2 \pm 0.0$ \\
5 & 0.5 & $28.8 \pm 1.2$ & $1.6 \pm 0.1$ & $0.8 \pm 0.7$ \\
\midrule
10 & 0.1 & $11.2 \pm 0.5$ & $3.4 \pm 0.1$ & $13.5 \pm 0.9$ \\
10 & 0.2 & $19.2 \pm 0.1$ & $2.7 \pm 0.0$ & $2.6 \pm 0.0$ \\
10 & 0.3 & $24.4 \pm 3.6$ & $2.1 \pm 0.1$ & $0.0 \pm 0.0$ \\
10 & 0.4 & $26.3 \pm 0.9$ & $1.7 \pm 0.1$ & $0.0 \pm 0.0$ \\
10 & 0.5 & $27.6 \pm 0.9$ & $1.6 \pm 0.1$ & $0.0 \pm 0.0$ \\
\midrule
15 & 0.1 & $9.6 \pm 0.9$ & $4.3 \pm 0.4$ & $8.3 \pm 2.7$ \\
15 & 0.2 & $19.2 \pm 3.6$ & $2.9 \pm 0.1$ & $1.3 \pm 1.8$ \\
15 & 0.3 & $23.7 \pm 4.5$ & $2.3 \pm 0.1$ & $0.0 \pm 0.0$ \\
15 & 0.4 & $26.9 \pm 3.6$ & $1.8 \pm 0.0$ & $0.0 \pm 0.0$ \\
15 & 0.5 & $27.6 \pm 2.7$ & $1.7 \pm 0.1$ & $0.0 \pm 0.0$ \\
\bottomrule
\end{tabular}
\end{table}

\subsection{Comparing different test statistics}
\label{app:test_stat}

In this section, we include additional results for Section \ref{subsec:test_statistic} comparing different test statistics. While the power varies based on the choice of the test statistic, \name~ attains valid coverage across statistics.

\begin{figure}[ht]
    \centering
    \includegraphics[width=0.95\linewidth]{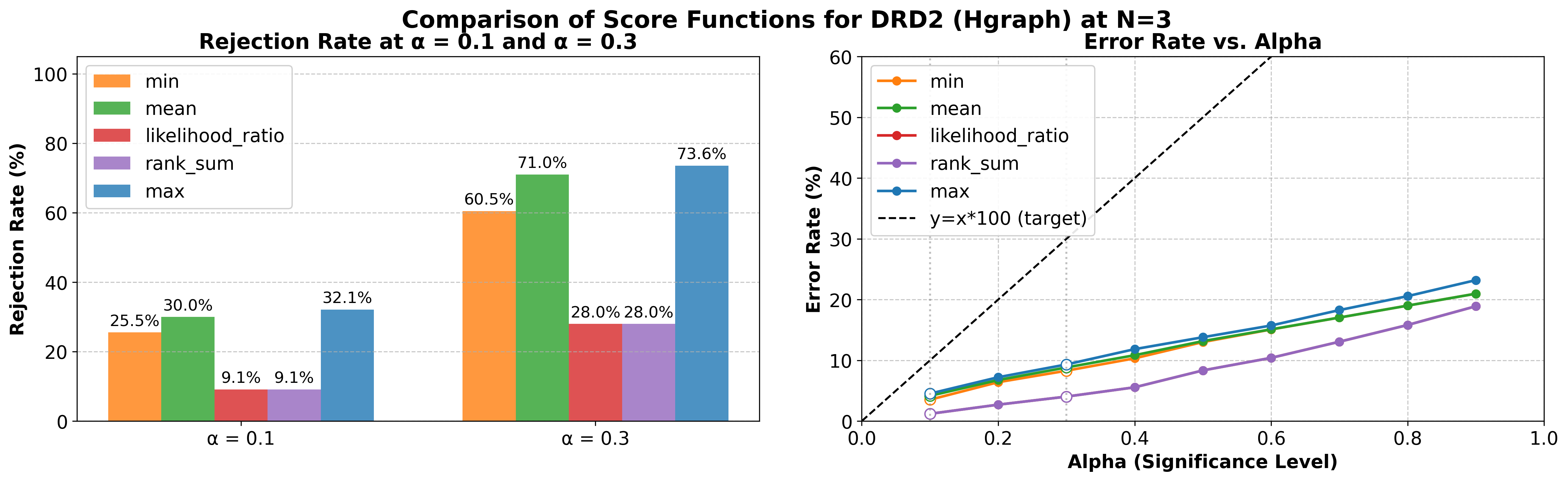}\\[6pt]
    \caption{Comparison of score statistics on DRD2 (\textsc{HGraph2Graph}, N=3). Left: rejection (power) at $\alpha=0.1,0.3$. Right: error vs.\ $\alpha$}
\end{figure}

\begin{figure}[ht]
    \centering
    \includegraphics[width=0.95\linewidth]{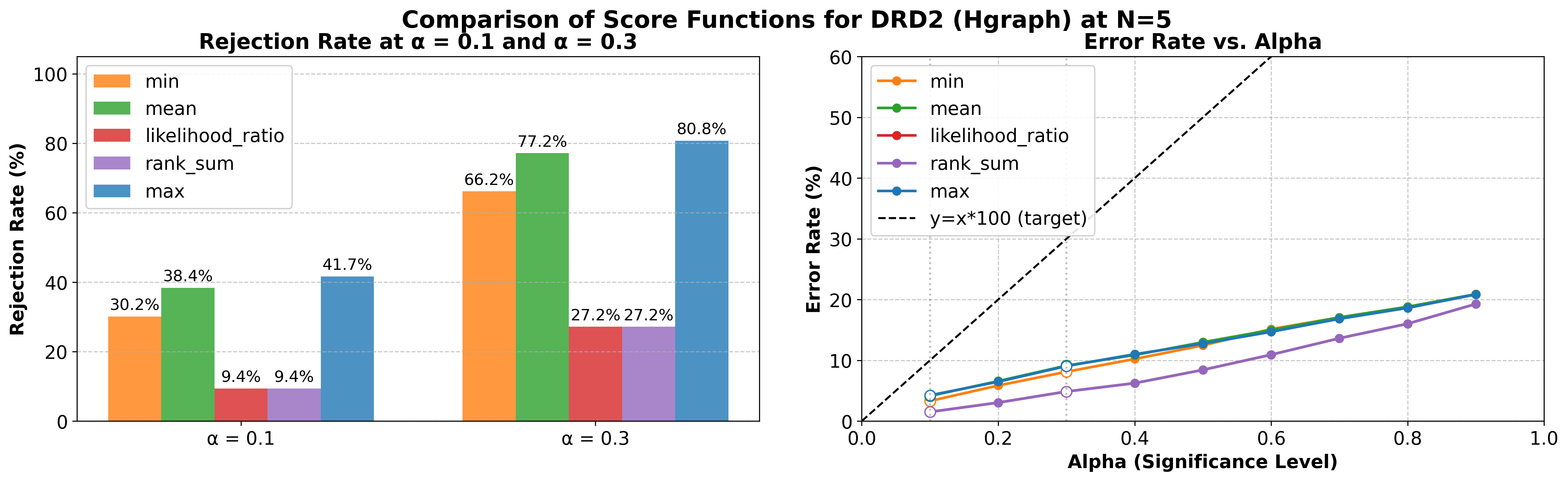}\\[6pt]
    \caption{Comparison of score statistics on DRD2 (\textsc{HGraph2Graph}, N=5). Left: rejection (power) at $\alpha=0.1,0.3$. Right: error vs.\ $\alpha$}
\end{figure}

\begin{figure}[ht]
    \centering
    \includegraphics[width=0.95\linewidth]{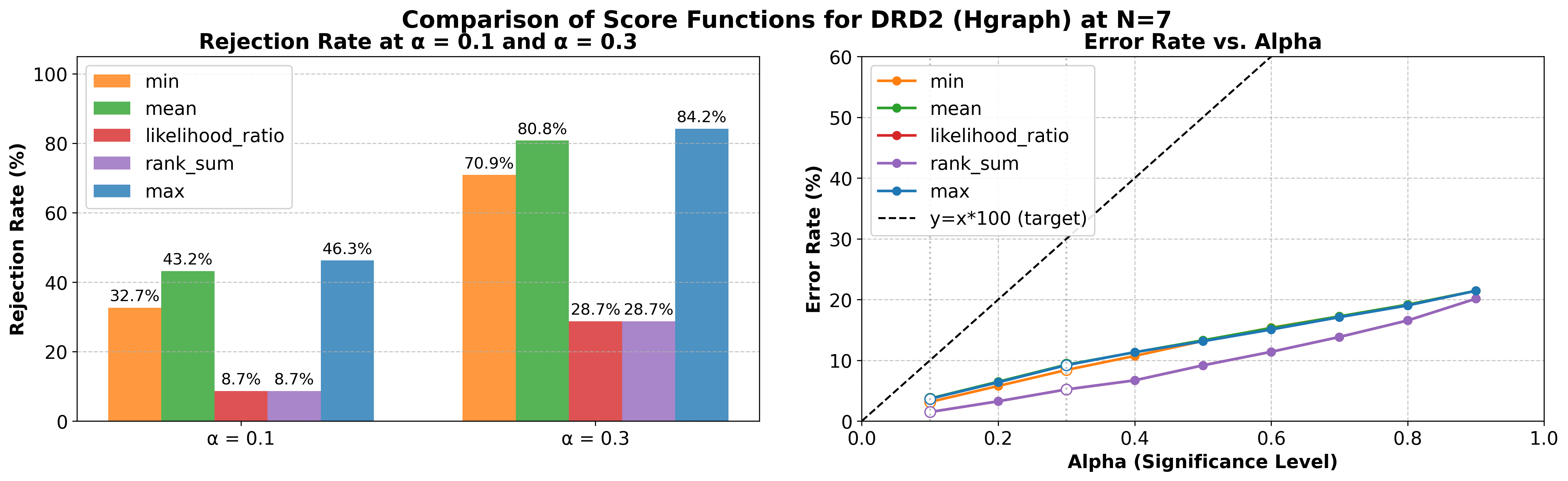}\\[6pt]
    \caption{Comparison of score statistics on DRD2 (\textsc{HGraph2Graph}, N=7). Left: rejection (power) at $\alpha=0.1,0.3$. Right: error vs.\ $\alpha$}
\end{figure}



\begin{figure}[ht]
    \centering
    \includegraphics[width=0.95\linewidth]{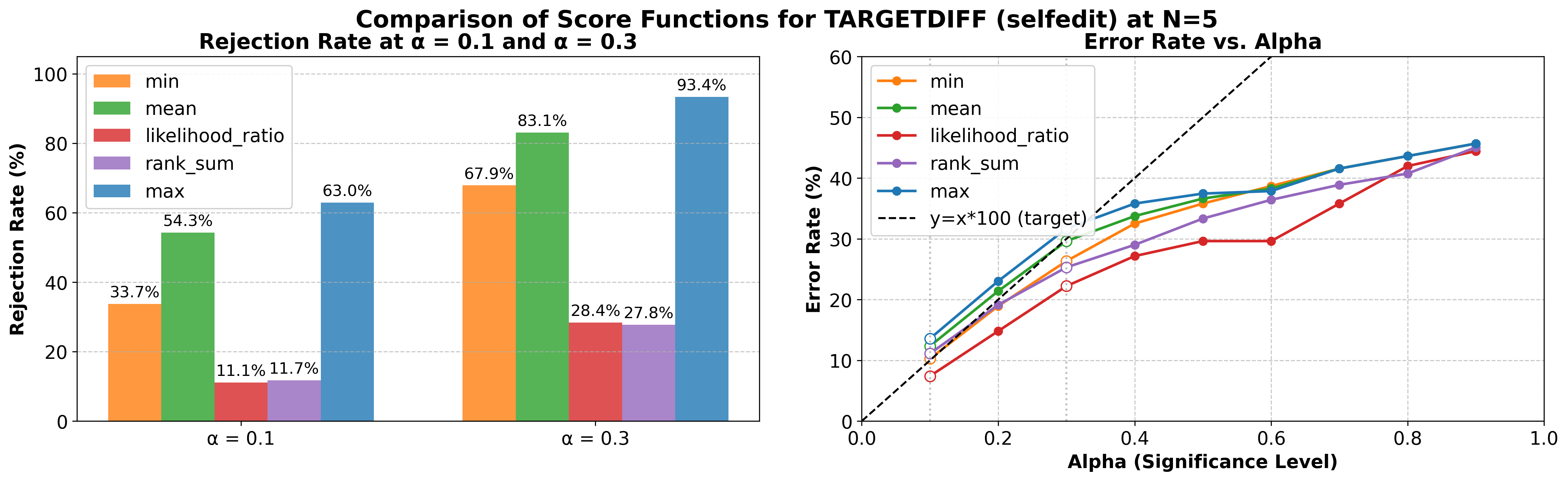}\\[6pt]
    \caption{Comparison of score statistics on SBDD (\textsc{Targetdiff}, N=5). Left: rejection (power) at $\alpha=0.1,0.3$. Right: error vs.\ $\alpha$}
\end{figure}

\begin{figure}[ht]
    \centering
    \includegraphics[width=0.95\linewidth]{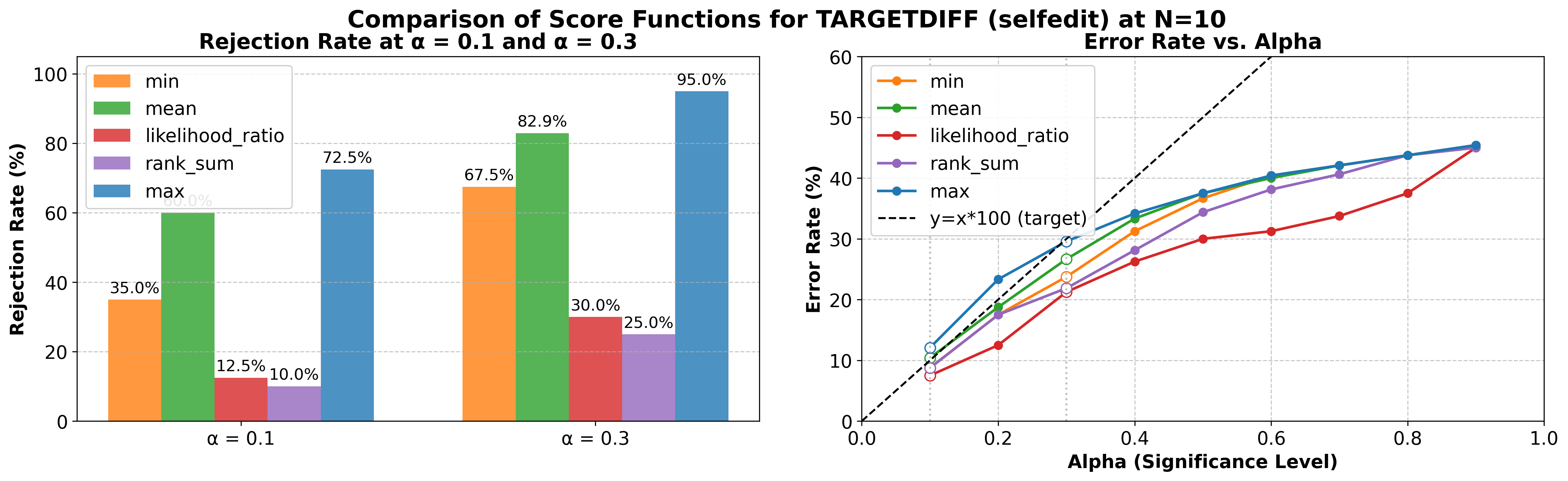}\\[6pt]
    \caption{Comparison of score statistics on SBDD (\textsc{Targetdiff}, N=10). Left: rejection (power) at $\alpha=0.1,0.3$. Right: error vs.\ $\alpha$}
\end{figure}

\clearpage

\subsection{P-values}
\label{app:p_val}

In this section, we show the validity of conformal p-value plots  described in section \ref{subsec:pval}, across tasks and datasets. In each panel, the distribution remains close to the uniform (dashed line; see also the reported KL divergence from uniformity), confirming approximate validity of our p-values (albeit slightly conservative).

\begin{figure}[ht]
    \centering
    \includegraphics[width=0.9\linewidth]{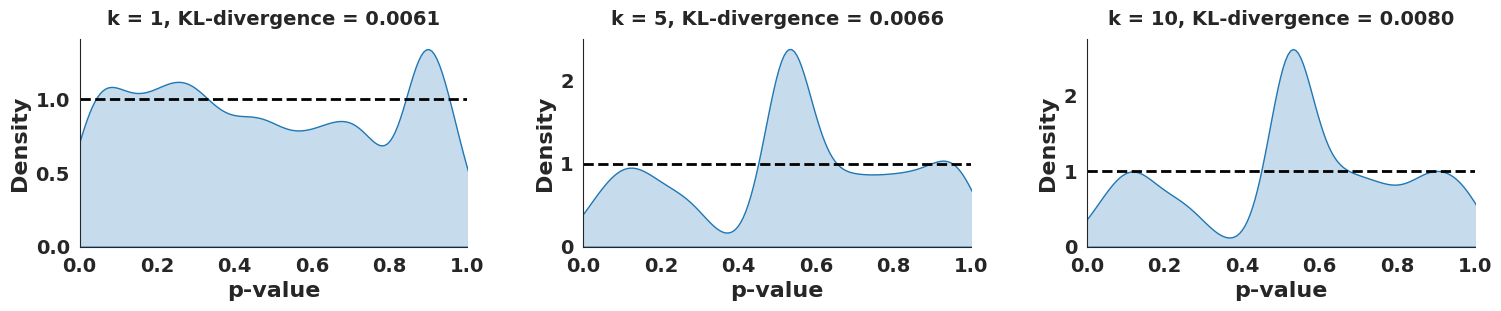}\\[6pt]
    \caption{Density of conformal nested p-values for QED using \textsc{SelfEdit} at set sizes \(k=3,5,7\).}
\end{figure}

\begin{figure}[ht]
    \centering
    \includegraphics[width=0.9\linewidth]{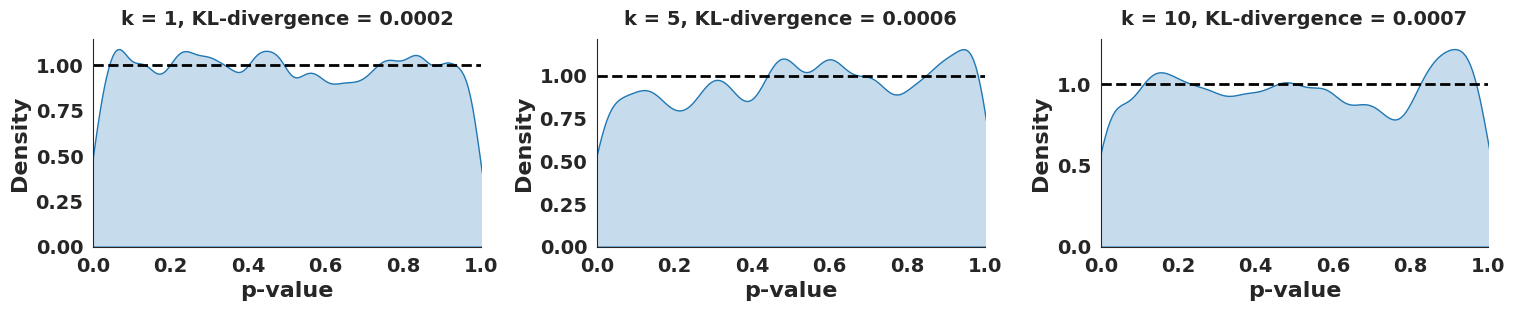}\\[6pt]
    \caption{Density of conformal nested p-values for DRD2 using \textsc{Hgraph} at set sizes \(k=3,5,7\).}
\end{figure}

\begin{figure}[ht]
    \centering
    \includegraphics[width=0.9\linewidth]{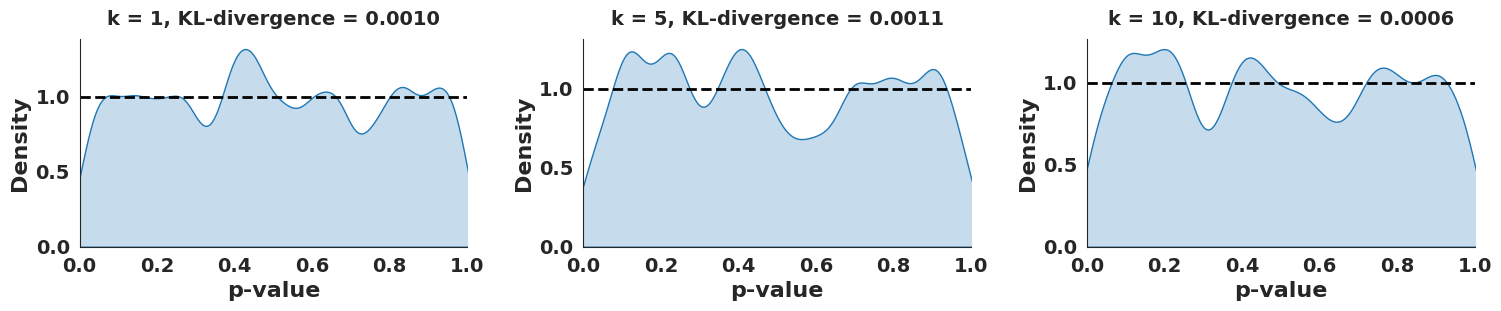}\\[6pt]
    \caption{Density of conformal nested p-values for SBDD using \textsc{TargetDiff} at set sizes \(k=1,5,10\).}
\end{figure}

\section{Effect of Predictor Quality}

\label{app:sec_predictor_effect}
In this section, we investigate how the accuracy of the predictor affects the performance of \textsc{ConfHit}.  
We consider three scenarios:  
\begin{enumerate}
    \item Normal predictor where the calibrated classifier outputs \(p = \Pr(\text{active} \mid x)\).
    \item Noisy predictor obtained by adding uniform Gaussian noise, giving  
    \[
        \tilde{p} = \frac{p + \mathcal{N}(0,1)}{2}.
    \]
    \item Inverse predictor obtained by replacing \(p\) with \(1 - p\).
\end{enumerate}

Figure \ref{fig:baseline_error_drd2_qed}  illustrates how \textsc{ConfHit} behaves under these perturbation settings.  
Even when the predictor is corrupted or inverted, the method maintains valid  error control, with empirical error rates remaining below the target line \(y = 100\alpha\).  
This demonstrates that the statistical validity of \textsc{ConfHit} does not rely on predictor accuracy.

Predictor quality affects only the power.  
As the predictor becomes less informative, the conformal sets become less efficient, which leads to a higher frequency of empty sets or larger sets depending on the task.  
Despite this expected degradation, the method consistently preserves its coverage guarantees.  

\begin{figure}[t]
    \centering
    \includegraphics[width=0.7\linewidth]{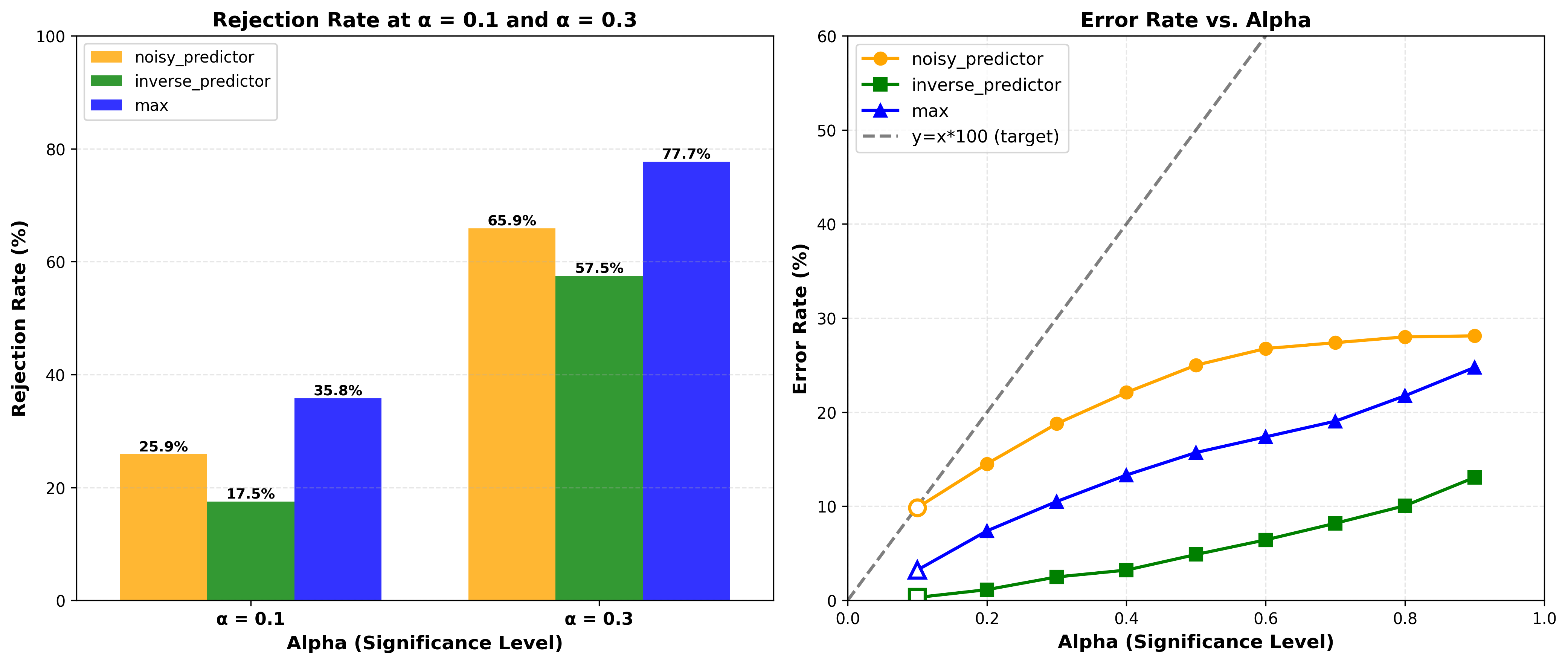}

    \includegraphics[width=0.7\linewidth]{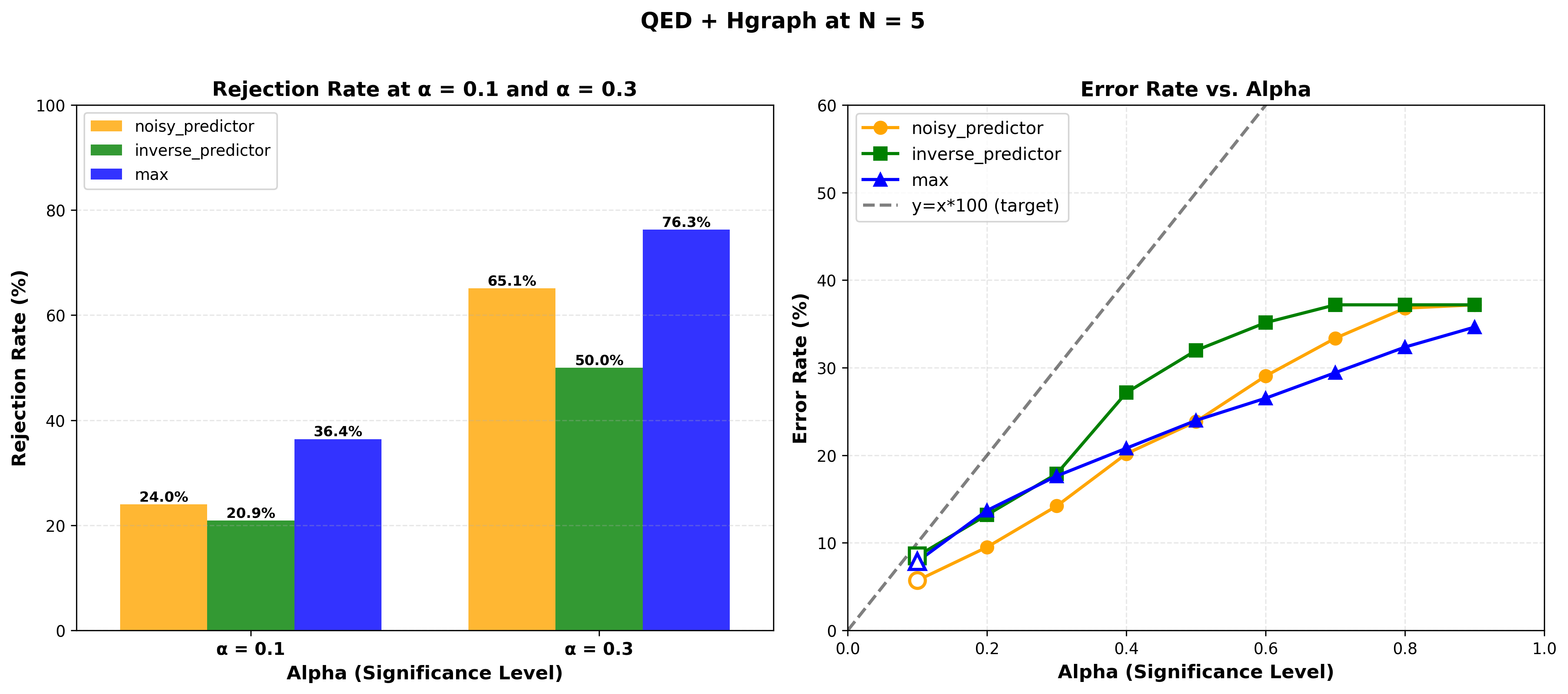}
    \caption{Effect of predictor quality on \name's error rate  on the CMO task.  
    In both DRD2 (top) and QED (bottom) using the Hgraph model, noisy and inverse predictors exhibit reduced power but maintain valid error control across all values of \(\alpha\).}
    \label{fig:baseline_error_drd2_qed}
\end{figure}

\begin{figure}[t]
    \centering
    \includegraphics[width=0.7\linewidth]{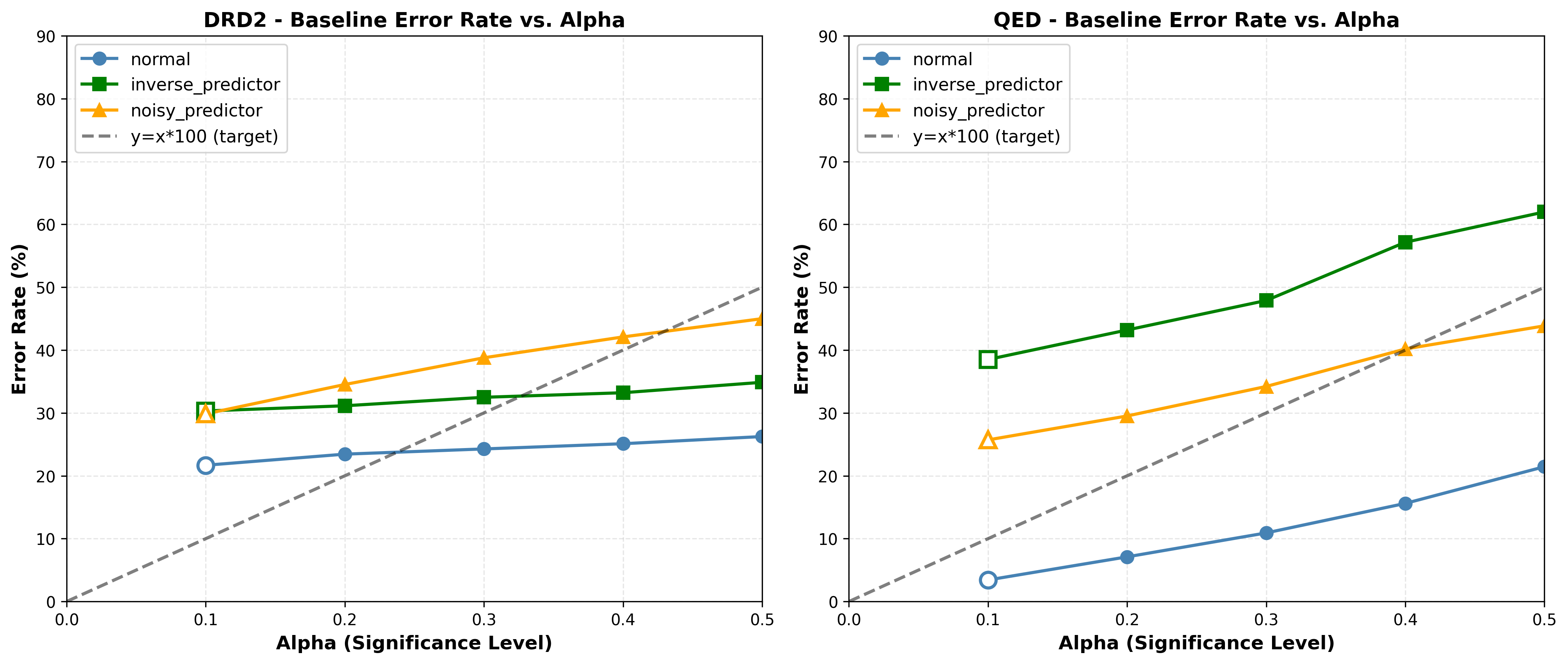}
    \caption{Results for heuristic baseline with predicted probabilities (while varying predictor quality), showing rejection rates and error rates for QED with \(N = 5\) using the Hgraph model.  
    As predictor quality deteriorates, the performance quickly drops with severe violation of the error control.}
    \label{fig:heuristic_error_drd2_qed}
\end{figure}
 
\section{Additional baselines} 
\subsection{Heuristic Baseline}
\label{app:subsec_heuristic_baseline}

We evaluate a simple confidence based heuristic.  
Given a predicted probability \(\hat{p}\) of success, the heuristic selects the smallest \(n\) such that \((1 - \hat{p})^n \le \alpha\).  
This provides a natural baseline that depends only on classifier scores without any uncertainty calibration. 

The results in Figures \ref{fig:heuristic_error_drd2_qed} and show that this heuristic does not reliably control the error rate.  
When the predictor is accurate, the error rates sometimes follow the target line, but under noisy or inverted predictors the error increases sharply and often surpasses the desired level.  
The degradation is visible across both DRD2 and QED: the error curves drift upward as the predictor worsens, and the rejection rates fluctuate significantly, indicating unstable behavior.

In contrast, ConfHit remains below the target line in all cases.  
The predictor quality affects only the power, whereas the heuristic baseline loses error control as soon as the predictor becomes unreliable.  
The empirical curves highlight that the heuristic transitions from reasonable to poor performance, while ConfHit maintains stable guarantees across all predictor settings.

\subsection{Comparison to Oracle based methods}
\label{app:subsec_oracle_baseline}

We adapt an oracle-based CLM \citep{quach2023conformal} baseline as depicted in 
Algorithm~\ref{alg:conf_cmo} to use the \emph{sum} of prediction scores as the 
total confidence of a candidate set, without performing any density correction or filtering  of low–quality samples. As shown in Figure~\ref{fig:conformal_lm_baseline}, this baseline  fails to maintain valid guarantees: the empirical error rate systematically exceeds the  target level, and the empty–set percentages deviate sharply from expected behavior across  all $\alpha$. Despite relying on an idealized oracle, this approach also exhibits \emph{lower 
power} than \name{}, whereas \name{} attains both higher detection power \emph{and} valid 
coverage due to its calibrated scoring rule and density-aware correction. Finally, we emphasize that this 
baseline is not directly comparable to \name{}, since it assumes access to a perfect oracle 
predictor—an unrealistic requirement in practical molecular design workflows.

\begin{algorithm}[H]
\caption{Conformal Sampling for Constrained Molecular Optimization}
\label{alg:conf_cmo}
\begin{algorithmic}[1]

\REQUIRE Input molecule $x$; generator $p_\theta(\cdot \mid x)$; 
         scoring function $\mathcal{F}$; calibrated threshold $\lambda$;
         sampling budget $k_{\max}$

\ENSURE Candidate set $C_\lambda$ certified to contain at least one improved molecule

\STATE $C_\lambda \gets \varnothing$  \COMMENT{Initialize output set}

\FOR{$k = 1$ \TO $k_{\max}$}

    \STATE Sample $y_k \sim p_\theta(\cdot \mid x)$  
    \COMMENT{Generate candidate}

    \STATE $C_\lambda \gets C_\lambda \cup \{y_k\}$  
    \COMMENT{Add to set}

    \IF{$\mathcal{F}(x, C_\lambda) \ge \lambda$}
        \STATE \textbf{break}  \COMMENT{Set certified}
    \ENDIF

\ENDFOR

\RETURN $C_\lambda$

\end{algorithmic}
\end{algorithm}

 \begin{figure}[t]
    \centering
    \includegraphics[width=0.8\linewidth]{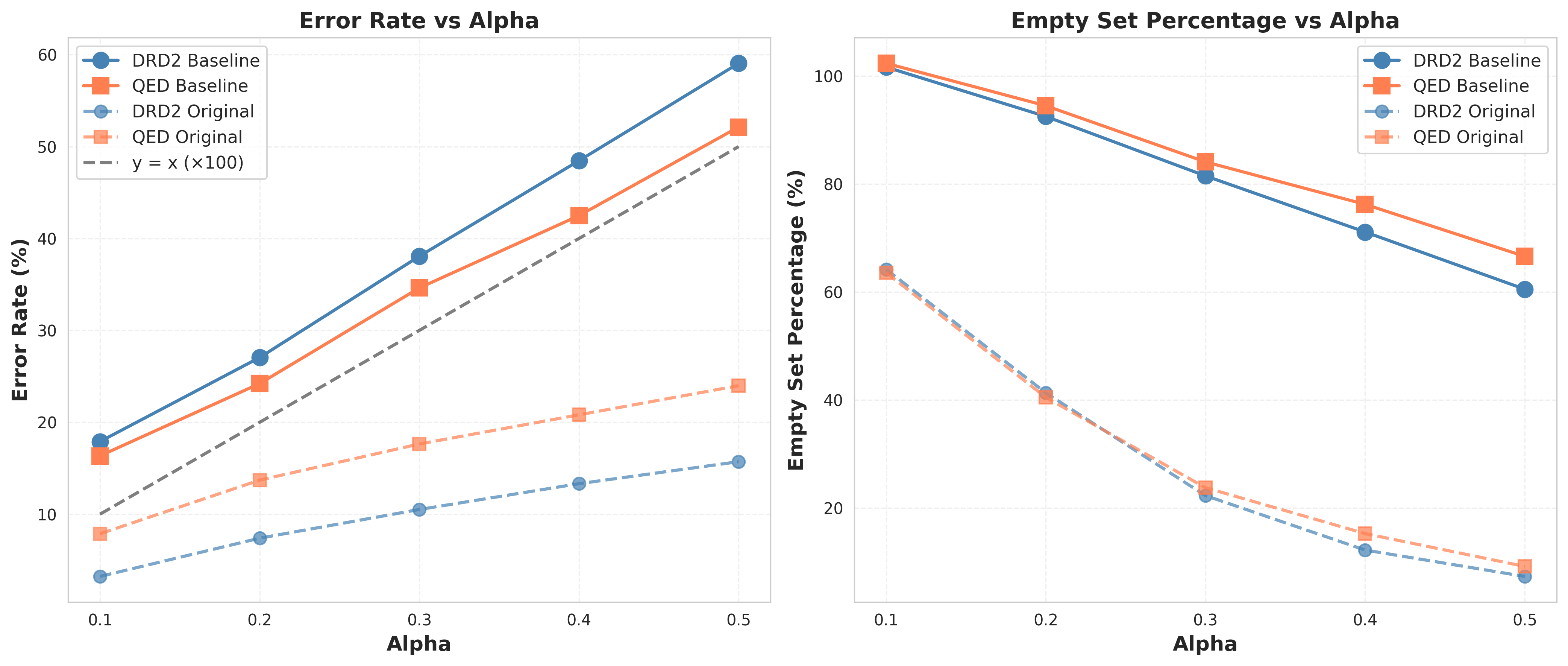}
    \caption{
        Conformal LM baseline for molecule optimization. 
        Error rate (left) and empty–set percentage (right) versus $\alpha$ for QED and DRD2  at N = 5 using the Hgraph model. 
        The method breaks coverage and underperforms \name{} despite relying on an oracle 
        that is unavailable in real-world settings.
    }
    \label{fig:conformal_lm_baseline}
\end{figure}

{
\section{Robustness and Diagnostics for density ratio estimation}
}

\subsection{Sensitivity analysis}
\label{app:sec_sensitivity}
We study the effect of misspecifying the likelihood ratios by applying power transformations
\(w^{\gamma}\) across a range of exponents \(\gamma\).  
Figure~\ref{fig:weight_combined} shows how such transformations affect the KL divergence of the p-values of the negative samples, together with the resulting error rate
and rejection (empty set) rate. In practice, such results can be used to judge whether the analysis is robust enough. Here, we additionally evaluate the error control performance under such perturbation to also show the robust performance of \name. 

The KL divergence here measures the deviation of the empirical p-value distribution from the
Uniform(0,1) distribution.  
Smaller values indicate that the transformed likelihood ratios still produce p-values close to
uniform, while larger values reflect increasing distortion away from the ideal uniform shape.
In both the datasets, we see that the KL divergence is only moderately affected under moderate misspecification. Only on amplifying the weights (\(\gamma > 3\)) causes the KL divergence to rise sharply in the QED case we notice 
substantial deviation from uniformity. 

The error rate consistently stays near the target level across all transformations for $\alpha = 0.1$ and below the target level for $\alpha = 0.3$.
Only the rejection rate is affected, with lower rejection occurring when the KL divergence is small
and higher rejection when the p-values deviate more strongly from uniform.  
This shows that mild misspecification primarily adjusts the conservativeness of the method without
affecting its validity.

A similar pattern appears for DRD2 .  
Flattened weights reduce the KL divergence, while large values of \(\gamma\) inflate it.
Error control remains stable throughout, whereas the rejection rate increases in the high-KL regime
corresponding to more severe distortions of the p-value distribution. Overall, these results demonstrate that ConfHit is robust to moderate incorrect specification of the likelihood ratios.

\begin{figure}[t]
    \centering
    \begin{subfigure}[t]{0.75\linewidth}
        \centering
        \includegraphics[width=\linewidth]{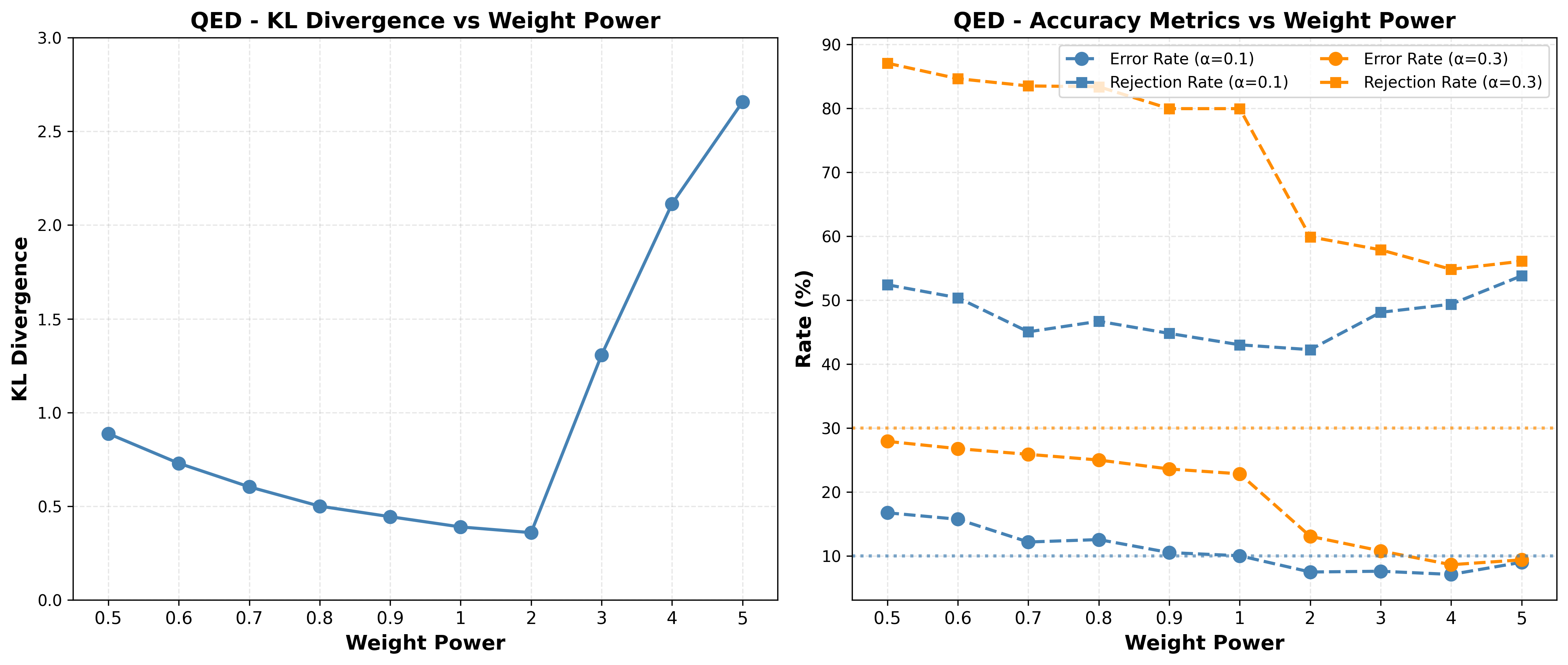}
        \label{fig:weight_qed_sub}
    \end{subfigure}
    \hfill
    \begin{subfigure}[t]{0.75\linewidth}
        \centering
        \includegraphics[width=\linewidth]{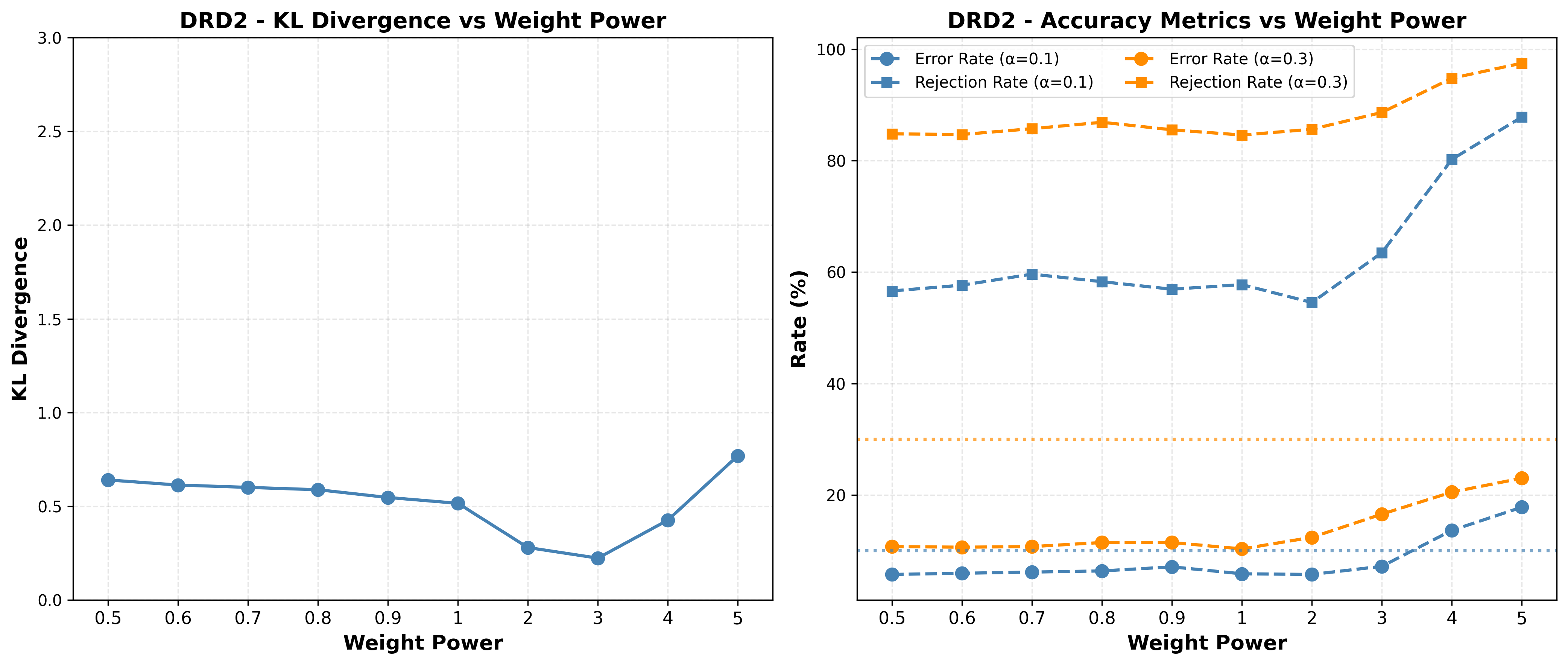}
        \label{fig:weight_drd2_sub}
    \end{subfigure}
    \caption{
        Effect of power transformations \(w^\gamma\) of likelihood-ratio weights on QED and DRD2 using the HGraph2Graph model at \(N = 5\) at different $\alpha$.
        Across all transformations, \name{} maintains a graceful degradation of error control with major changes in terms of the rejection rate.
    }
    \label{fig:weight_combined}
\end{figure}

\subsection{Scaffold splitting on Calibration Set}
\label{app:subsubsec_validation_shift}
To assess the robustness of the calibration procedure, we perform a scaffold split on the calibration negatives.  
We select the top 30 scaffolds that also appear in the test set, producing a nontrivial distribution shift that reflects realistic drug discovery scenarios where scaffold diversity plays an important role.  
This setting provides a diagnostic test: if the calibration procedure is effective, the resulting
p-values on this shifted split should remain close to uniform.

Figures~\ref{fig:pvalue_qed_scaffold} and~\ref{fig:pvalue_drd2_scaffold} show the p-value
distributions before and after applying the density ratio correction.  
In both QED and DRD2, the uncalibrated p-values deviate substantially from the ideal uniform
distribution, as indicated by large KL divergences.  
After calibration, the p-values become significantly closer to uniform, with much lower KL
divergences and a visibly more homogeneous density across the interval.

The close match between the calibrated p-values and the uniform reference suggests that the
calibration procedure successfully corrects for distributional differences tied to scaffold
variation.  
This indicates that the method is likely to remain well calibrated on the test set, even when the
chemical space differs from the training distribution.  
In practical applications, having partial knowledge of the test set's chemical space enables a
simple sanity check: if the p-values on the corresponding scaffolds appear close to uniform, one
can be confident that the method will behave reliably on the actual test compounds.

\begin{figure}[t]
    \centering
    \begin{subfigure}[t]{0.47\linewidth}
        \centering
        \includegraphics[width=\linewidth]{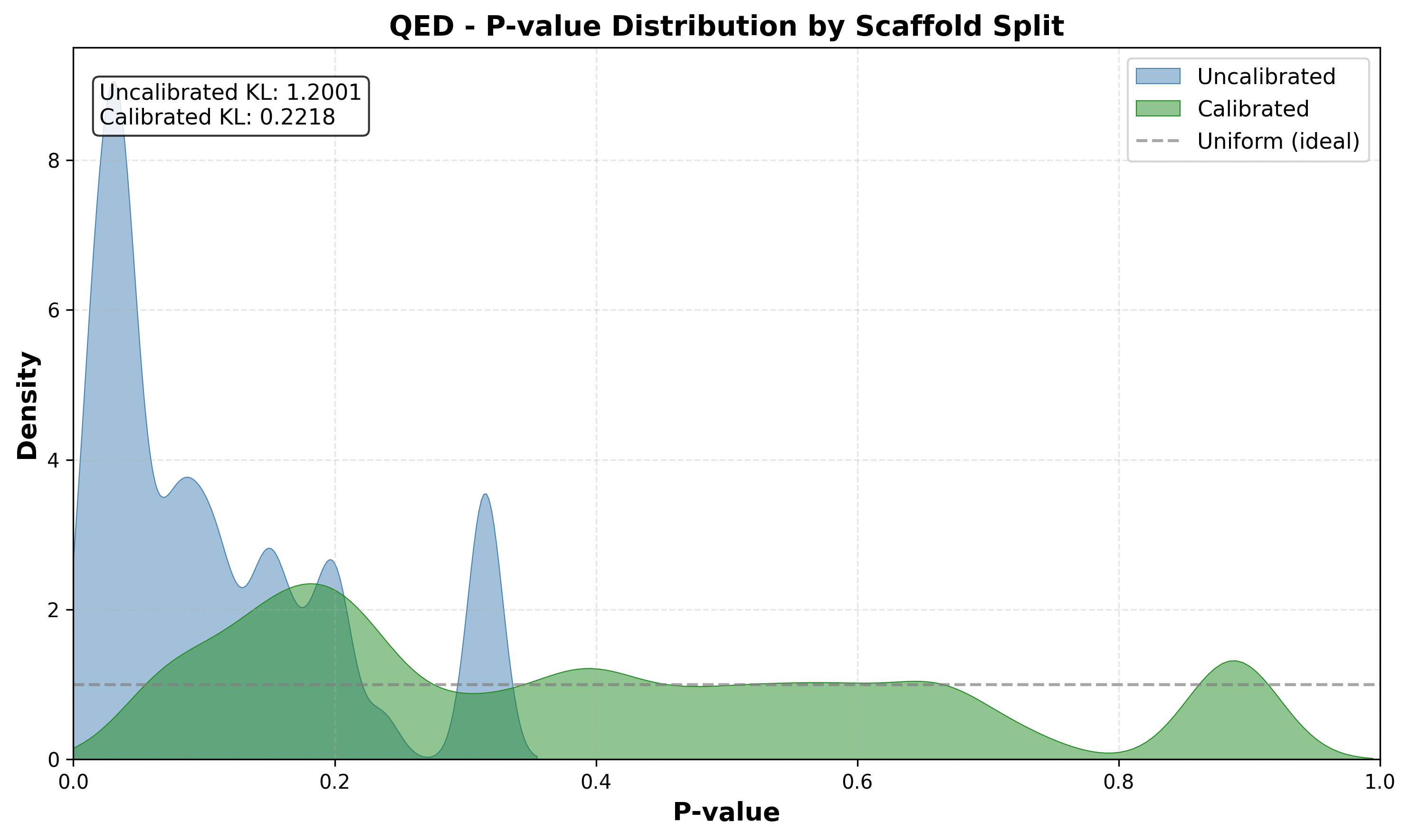}
        \caption{QED + Hgraph: Calibration significantly reduces deviation of p-value distribution from uniformity.}
        \label{fig:pvalue_qed_scaffold}
    \end{subfigure}
    \hfill
    \begin{subfigure}[t]{0.47\linewidth}
        \centering
        \includegraphics[width=\linewidth]{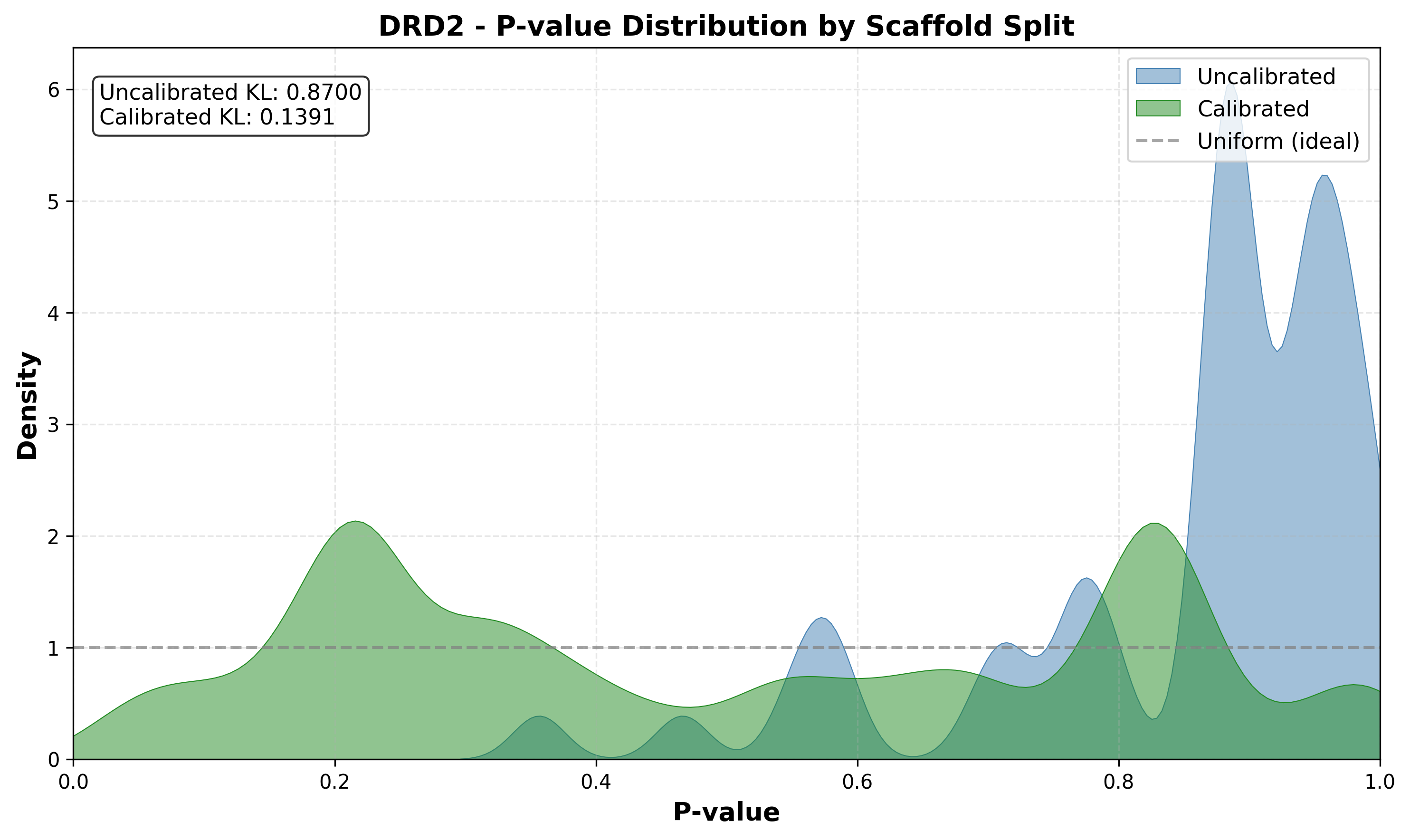}
        \caption{DRD2 + Hgraph: Calibrated p-values closely match the uniform distribution.}
        \label{fig:pvalue_drd2_scaffold}
    \end{subfigure}
    \caption{P-value distributions on scaffold-split calibration negatives for QED (left) and DRD2 (right). In both cases, calibration produces p-values that closely follow the uniform distribution, indicating that the density correction remains reliable under scaffold-based distribution shift.}
    \label{fig:pvalue_scaffold_combined}
\end{figure}

\subsection{Balance checks}
\label{app:subsubsec_balance}
Finally, To diagnose the effect of density correction, we compare the feature means of the calibration set
with the test set before and after reweighting as discussed in Section~\ref{subsec:robustness}.  
Figures~\ref{fig:feature_means_qed} and~\ref{fig:feature_means_drd2} show the mean and standard
deviation of each feature dimension, where post-calibration test features are weighted by the
likelihood ratio \(p_{\text{cal}}(x)/p_{\text{test}}(x)\).

For QED (Figure~\ref{fig:feature_means_qed}), the pre-calibration test features differ
substantially from the calibration distribution, especially in the first dimension.  
After applying density correction, the weighted test features align much more closely with the
calibration means across all dimensions.  
The cosine distance between the mean feature vectors decreases from \(0.0623\) to \(0.0105\),
indicating a strong improvement in distributional matching.

A similar trend appears for DRD2 (Figure~\ref{fig:feature_means_drd2}), where large discrepancies
in several dimensions are substantially reduced after reweighting.  
The cosine distance drops from \(0.0491\) to \(0.0075\), confirming that the corrected test
distribution becomes much closer to the calibration distribution.

These balance checks provide a practical diagnostic: given an unlabeled test set, one can evaluate
whether density correction brings its feature distribution closer to the calibration set,
indicating that the calibration procedure is likely to remain valid.

\begin{figure}[t]
    \centering
    \includegraphics[width=0.8\linewidth]{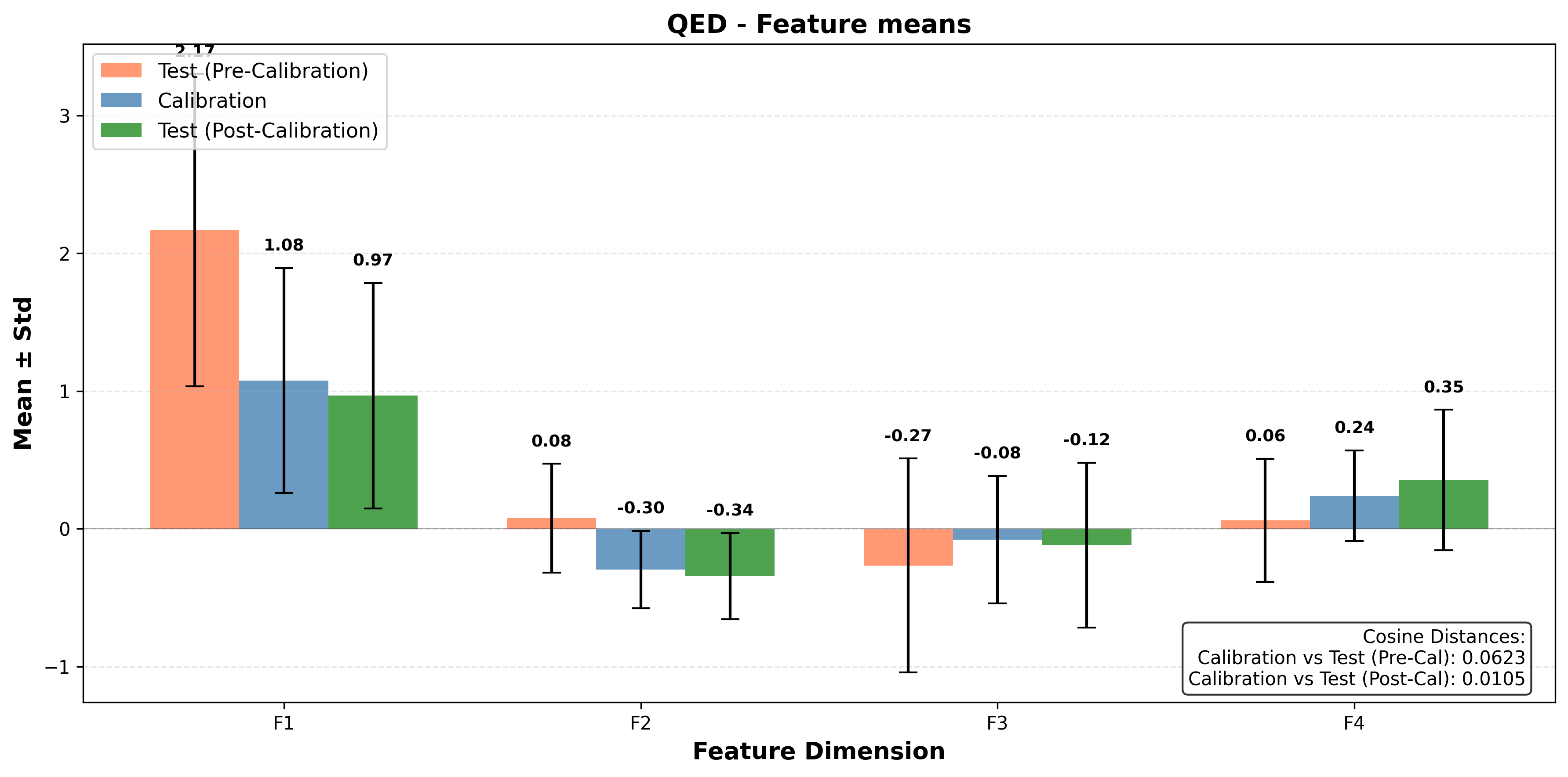}
    \caption{Feature means for QED using the Hgraph model at $N = 5$.  
    Density correction aligns the test features more closely with the calibration distribution.}
    \label{fig:feature_means_qed}
\end{figure}

\begin{figure}[t]
    \centering
    \includegraphics[width=0.8\linewidth]{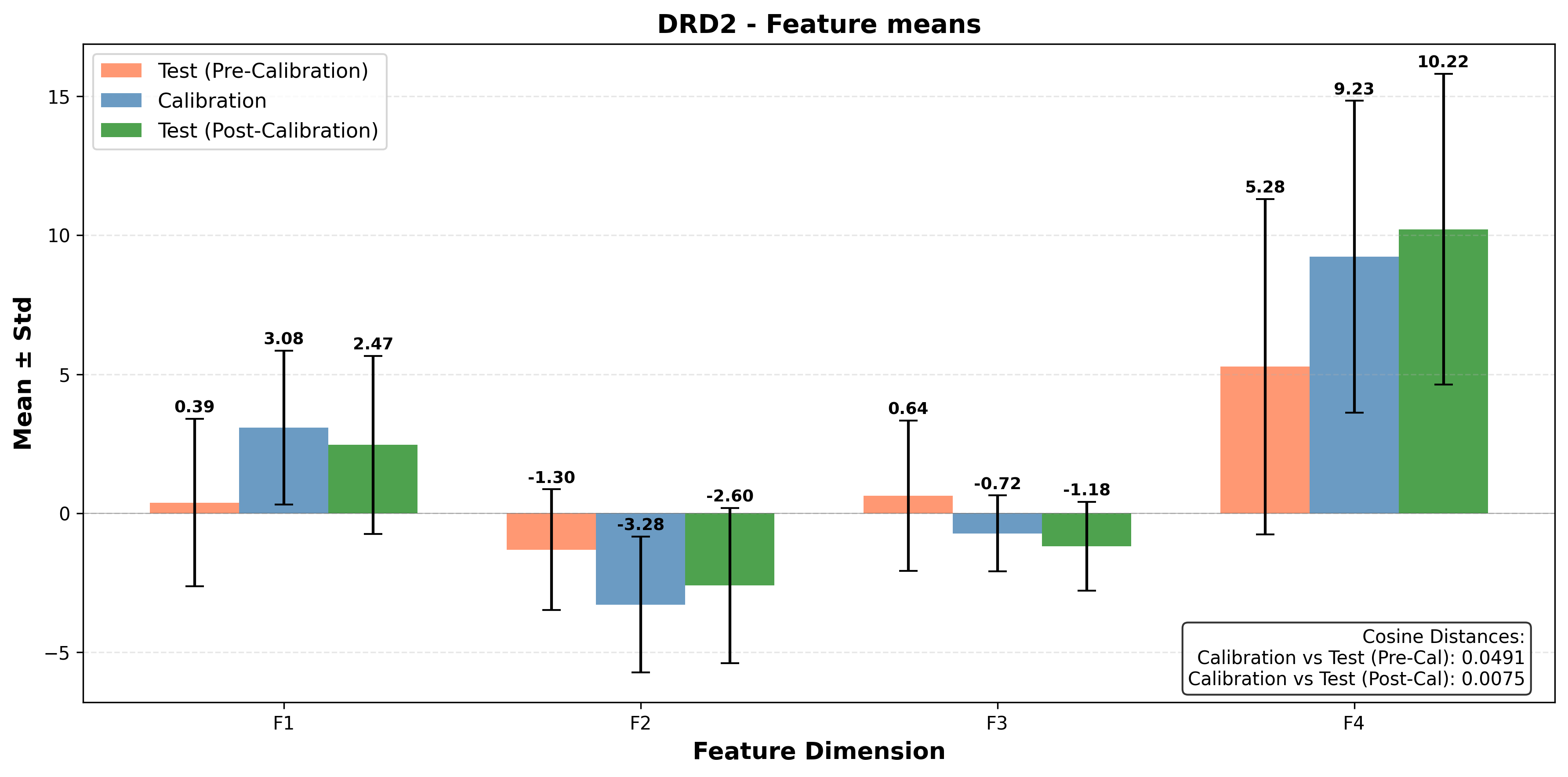}
    \caption{Feature means for DRD2 using the Hgraph model at $N = 5$.  
    Density correction reduces the discrepancy between calibration and test distributions.}
    \label{fig:feature_means_drd2}
\end{figure}

\end{document}